\documentclass[twoside]{article}

\usepackage[accepted]{aistats2026}
\usepackage[utf8]{inputenc} 
\usepackage[T1]{fontenc}    
\usepackage{hyperref}       
\usepackage{url}            
\usepackage{booktabs}       
\usepackage{amsfonts}       
\usepackage{nicefrac}       
\usepackage{microtype}      
\usepackage{xcolor}         
\usepackage[round]{natbib}
\usepackage{wrapfig}
\usepackage{subcaption}

\usepackage{amsfonts,amsmath,amssymb}
\usepackage{xcolor}
\usepackage{hyperref}
\usepackage{srcltx}
\usepackage{etoolbox}
\usepackage{nameref}
\usepackage{comment}
\usepackage{mathrsfs}
\usepackage{enumerate}
\usepackage[shortlabels]{enumitem}
\usepackage{amsfonts} 
\usepackage{nicefrac} 
\usepackage{mathtools}
\usepackage{dsfont,mathrsfs}
\usepackage{cleveref}
\usepackage{tikz-cd} 
\usepackage{amsthm}

\usepackage{multirow}
\usepackage{colortbl}
\definecolor{bgcolor}{rgb}{0.8,1,1}
\definecolor{bgcolor2}{rgb}{0.8,1,0.8}
\definecolor{niceblue}{rgb}{0.0,0.19,0.56}
\usepackage{threeparttable}

\usepackage{tcolorbox}
\usepackage{pifont}
\definecolor{mydarkgreen}{RGB}{39,130,67}
\definecolor{mydarkred}{RGB}{192,47,25}

\usepackage{xcolor}
\definecolor{dmorange500}{HTML}{FF5F19}
\definecolor{dmblue300}{HTML}{2267EB}
\definecolor{dmred300}{HTML}{FF617B}
\hypersetup{
	colorlinks,
	citecolor=dmblue300
}

\usepackage{xargs}
\usepackage{multirow}

\usepackage{aliascnt}
\usepackage{autonum}
\usepackage{upgreek}

\newtheorem{assum}{A\hspace{-2pt}}

\newtheorem{theorem}{Theorem}
\crefname{theorem}{theorem}{Theorems}
\Crefname{theorem}{Theorem}{Theorems}

\newcounter{lemma}
\renewcommand{\thelemma}{\arabic{lemma}}

\newenvironment{lemma}[1][]{
    \refstepcounter{lemma}
    \noindent\textbf{Lemma~\thelemma. #1}\itshape
}{\par}

\crefname{lemma}{lemma}{lemmas}
\Crefname{lemma}{Lemma}{Lemmas}

\newcounter{definition}
\renewcommand{\thedefinition}{\arabic{definition}}

\newenvironment{definition}[1][]{
    \refstepcounter{definition}
    \noindent\textbf{Definition~\thedefinition. #1}\itshape
}{\par}

\crefname{definition}{definition}{definitions}
\Crefname{definition}{Definition}{Definitions}

\newcounter{remark}
\renewcommand{\theremark}{\arabic{remark}}

\newenvironment{remark}[1][]{
    \refstepcounter{remark}
    \noindent\textbf{Remark~\theremark. #1}\itshape
}{\par}

\crefname{remark}{remark}{remarks}
\Crefname{remark}{Remark}{Remarks}

\newcounter{corollary}
\renewcommand{\thecorollary}{\arabic{corollary}}

\newenvironment{corollary}[1][]{
    \refstepcounter{corollary}
    \noindent\textbf{Corollary~\thecorollary. #1}\itshape
}{\par}

\crefname{corollary}{corollary}{corollaries}
\Crefname{corollary}{Corollary}{Corollaries}

\newcounter{proposition}
\renewcommand{\theproposition}{\arabic{proposition}}

\newenvironment{proposition}[1][]{
    \refstepcounter{proposition}
    \newline\textbf{Proposition~\theproposition. #1}\itshape
}{\par}

\crefname{proposition}{proposition}{propositions}
\Crefname{proposition}{Proposition}{Propositions}

\crefname{definition}{definition}{definitions}
\Crefname{Definition}{Definition}{Definitions}

\crefname{example}{example}{examples}
\Crefname{Example}{Example}{Examples}

\crefname{figure}{figure}{figures}
\Crefname{Figure}{Figure}{Figures}

\crefname{table}{table}{tables}
\Crefname{Table}{Table}{Tables}

\crefname{algorithm}{algorithm}{algorithms}
\Crefname{Algorithm}{Algorithm}{Algorithms}

\crefname{assum}{A\hspace{-3pt}}{A\hspace{-3pt}}
\crefname{assumb}{B\hspace{-2pt}}{B\hspace{-2pt}}
\crefname{assumUGE}{UGE\hspace{-1pt}}{UGE\hspace{-1pt}}
\crefname{assumID}{IND\hspace{-1pt}}{IND\hspace{-1pt}}
\crefname{assumUE}{UE\hspace{-1pt}}{UE\hspace{-1pt}}
\crefname{assumM}{M\hspace{-1pt}}{M\hspace{-1pt}}

\newlist{renumerate}{enumerate}{3}
\setlist[renumerate]{wide, labelwidth=!, labelindent=0pt,label=(\roman*)}

\newlist{aenumerate}{enumerate}{3}
\setlist[aenumerate]{wide, labelwidth=!, labelindent=0pt,label=(\arabic*)}

\newlist{aaenumerate}{enumerate}{3}
\setlist[aaenumerate]{wide, labelwidth=!, labelindent=0pt,label=(\alph*)}

\newlist{aenumerateSpace}{enumerate}{3}
\setlist[aenumerateSpace]{wide, labelwidth=!,label=(\arabic*)}

\newlist{benumerate}{enumerate}{3}
\setlist[benumerate]{wide, labelwidth=!, labelindent=0pt,label=$\bullet$}

\newtheorem{assumprime}{\textbf{A'}\hspace{-1pt}}
\Crefname{assumprime}{\textbf{A'}\hspace{-1pt}}{\textbf{A'}\hspace{-1pt}}
\crefname{assumprime}{\textbf{A'}}{\textbf{A'}}
\newcommand{\dto}{\overset{d}{\to}}

\newcommand{\PE}{\mathbb{E}}
\newcommand{\var}{\operatorname{Var}}
\newcommand{\PP}{\mathbb{P}}
\newcommandx{\genericb}[1][1=]{b_{#1}}
\newcommandx{\Constros}[1][1=]{\operatorname{C}_{\operatorname{Ros},#1}}
\newcommandx{\Constburk}[1][1=]{\operatorname{C}_{\operatorname{Burk}}}
\newcommandx{\driftW}[1][1=]{W_{#1}}

\newcommandx{\metricd}[1][1=]{\mathsf{d}_{#1}}

\newcommandx\invmeasure[1][1=]{\Pi_{#1}}
\newcommandx{\PPjoint}[1][1=]{\PP^{\MKjoint[#1]}}
\newcommandx{\PEjoint}[1][1=]{\PE^{\MKjoint[#1]}}
\newcommandx{\PEMID}[1][1=\alpha]{\PE^{\MK[#1]}}
\newcommandx{\PPMID}[1][1=\alpha]{\PP^{\MK[#1]}}

\newcommandx{\MKjoint}[1][1=]{\bar{\operatorname{P}}_{#1}}
\newcommandx\costw[1][1=]{\mathsf{c}_{#1}}

\newcommandx\Intergrdist[1][1=]{\mathbb{M}_{1}(#1)}

\newcommandx{\mmarkov}[1][1=0]{m^{(\Markov)}_{#1}}

\def\Conv{\mathsf{C}}

\def\F{\mathcal{F}}

\def\Xset{\mathsf{X}}
\def\Xsigma{\mathcal{X}}

\def\Zset{\mathsf{Z}}
\def\Zsigma{\mathcal{Z}}

\def\rset{\mathbb{R}}

\def\nset{\ensuremath{\mathbb{N}}}
\def\nsets{\ensuremath{\mathbb{N}^*}}

\newcommand{\SG}{\operatorname{SG}}


\newcommandx\sequence[4][2=,3=,4=]
{\ifthenelse{\equal{#3}{}}{\ensuremath{\{ #1_{#2 #4}\}}}{\ensuremath{\{ #1_{#2 #4} \}_{#2 \in #3}}}}

\newcommandx\sequenceD[2][2=]
{\ifthenelse{\equal{#2}{}}{\ensuremath{\{ #1\}}}{\ensuremath{\{ #1\!~:\!~#2  \}}}}
\newcommandx\sequenceDouble[4][3=,4=]
{\ifthenelse{\equal{#3}{}}{\ensuremath{\{ (#1_{#3},#2_{#3})\}}}{\ensuremath{\{ (#1_{#3},#2_{#3})\}_{#3 \in #4}}}}

\newcommandx{\sequencen}[2][2=n\in\nset]{\ensuremath{\{ #1, \eqsp #2 \}}}
\newcommandx\sequencens[2][2=n]
{\ensuremath{\{ #1_{#2} \!~:\!~#2\in\nsets\}}}
\newcommandx\sequencet[4]
{\ensuremath{\{ #1{#2_{#3}} \, : \, \eqsp #3 \in #4 \}}}
\def\PE{\mathbb{E}}
\def\P{\mathbb{P}}

\newcommandx{\PVar}[1][1=]{\ensuremath{\operatorname{Var}_{#1}}}
\newcommandx\conststab[1][1=p]{\varkappa_{#1}}

\def\noisecov{\Sigma_\xi}

\newcommandx{\MK}[1][1=\alpha]{\mathrm{P}_{#1}}
\newcommandx\MKK[1][1=\alpha]{\mathrm{K}_{#1}}

\newcommandx{\PEtilde}[1][1=]{\PE^{\mathrm{K}_{#1}}}
\newcommandx{\PPtilde}[1][1=]{\PP^{\mathrm{K}_{#1}}}


\newcommandx{\norm}[2][2=]{\Vert#1 \Vert_{{#2}}}
\newcommandx{\normLigne}[2][2=]{\Vert#1 \Vert_{{#2}}}
\newcommandx{\normLine}[2][2=]{\Vert#1 \Vert_{{#2}}}
\newcommandx{\normop}[2][2=]{\Vert{#1}\Vert_{{#2}}}
\newcommandx{\normopLigne}[2][2=]{\Vert{#1}\Vert_{{#2}}}
\newcommandx{\normopLine}[2][2=]{\Vert{#1}\Vert_{{#2}}}
\newcommandx{\osc}[2][1=]{\mathrm{osc}_{#1}(#2)}
\newcommandx{\normopadapt}[2][2=]{\left\Vert{#1}\right\Vert_{{#2}}}

\newcommandx{\CPP}[3][1=]
{\ifthenelse{\equal{#1}{}}{{\mathbb P}\left(\left. #2 \, \right| #3 \right)}{{\mathbb P}_{#1}\left(\left. #2 \, \right | #3 \right)}}
\newcommandx{\CPPtilde}[3][1=]
{\ifthenelse{\equal{#1}{}}{{\tilde{\mathbb P}}\left(\left. #2 \, \right| #3 \right)}{{\tilde{\mathbb P}}_{#1}\left(\left. #2 \, \right | #3 \right)}}

\def\iid{i.i.d.}

\newcommandx{\as}[1][1=\PP]{\ensuremath{#1\, -\mathrm{a.s.}}}

\newcommand{\eqsp}{\;}

\newcommand{\Id}{\mathrm{I}}
\def\prtheta{\bar{\theta}}

\newcommand{\ConstC}{\operatorname{C}}


\newcommandx{\boundmetric}[1][1=]{\kappa_{\MKK[#1]}}

\newcommandx{\Nnorm}[2][1=V]{[ #2]_{#1}}
\newcommandx{\lipnorm}[2][1=g]{[ #1]_{#2}}

\newcommandx{\CPE}[3][1=]{{\mathbb E}^{#3}_{#1}\left[#2\right]}
\newcommandx{\CPEext}[3][1=]{\tilde{\mathbb E}^{#3}_{#1}\left[#2\right]}
\newcommandx{\CPEtilde}[3][1=]{{\tilde{\mathbb E}}^{#3}_{#1}\left[#2\right]}
\newcommandx{\CPEs}[3][1=]{{\mathbb E}^{#3}_{#1}[#2]}

\def\thetalim{\theta^\star}

\def\trace{\operatorname{Tr}}

\newcommand{\rme}{\mathrm{e}}
\newcommand{\rmd}{\mathrm{d}}
\def\funcAw{\mathbf{A}}

\def\funcbw{\mathbf{b}}

\newcommandx{\zmfuncA}[2][1=]{\tilde{\funcAw}^{#1}(#2)}
\newcommandx{\zmfuncAw}[1][1=]{\tilde{\funcAw}_{#1}}
\newcommandx{\zmfuncb}[2][1=]{\tilde{\funcbw}^{#1}(#2)}

\newcommandx{\funcct}[2][1=]{\funcctilde^{#1}(#2)}

\newcommand{\frobnorm}[1]{\left\Vert #1 \right\Vert_{\mathrm{F}}}

\newcommand{\pscal}[2]{\langle#1\,,\,#2\rangle}

\newcommandx{\CovC}[1][1=u]{\operatorname{C}_{#1}}

\usepackage[textwidth=2.25cm, textsize=footnotesize]{todonotes}

\newcommand{\som}[1]{\todo[color=green!20]{\textbf{SS:} #1}}

\DeclareMathAlphabet{\mathpzc}{OT1}{pzc}{m}{it}

\def\lyapW{\mathpzc{W}}

\newcommandx{\bias}[1][1=\alpha]{\operatorname{B}_{#1}}

\newcommandx\probaMarkovTilde[2][2=]
{\ifthenelse{\equal{#2}{}}{{\widetilde{\mathbb{P}}_{#1}}}{\widetilde{\mathbb{P}}_{#1}\left[ #2\right]}}

\def\thetas{\thetalim}

\def\funcctilde{\tilde{c}_u}

\newcommandx{\driftb}[1][1=p]{\bar{b}_{#1}}

\newcommandx{\boldb}[1][1={q}]{\mathsf{b}_{#1}}

\newcommandx{\ConstGW}[1][1={n,\lyapW}]{\operatorname{G}_{#1}}

\newcommandx{\ConstMW}[1][1={n,\lyapW}]{\operatorname{M}_{#1}}

\Crefname{assumTD}{\textbf{TD}\hspace{-1pt}}{\textbf{TD}\hspace{-1pt}}
\crefname{assumTD}{\textbf{TD}}{\textbf{TD}}

\Crefname{assumptionC}{\textbf{C}\hspace{-1pt}}{\textbf{C}\hspace{-1pt}}
\crefname{assumptionC}{\textbf{C}}{\textbf{C}}

\Crefname{assumptionM}{\textbf{UGE}\hspace{-1pt}}{\textbf{UGE}\hspace{-1pt}}
\crefname{assumptionM}{\textbf{UGE}}{\textbf{UGE}}

\def\distance{\mathsf{d}}

\newcommandx{\vartconstwas}[1][1=V]{c_{#1}}

\newcommandx{\deltawas}[1][1=*]{\delta_{#1}}

\newcommandx{\wasser}[4][1=\distance,4=]{\mathbf{W}_{#1}^{#4}\left(#2,#3\right)}
\newcommandx{\covcoeff}[2]{\rho_{#1}^{(#2)}}

\newcommand{\dobrush}{\mathsf{\Delta}}
\newcommandx{\dobru}[3][1=,3=]{\dobrush_{#1}^{#3}( #2)}  

\def\Markov{\mathrm{M}}

\def\gauss{\mathcal{N}}

\newcommandx{\dlim}[1]{\ensuremath{\stackrel{#1}{\Longrightarrow}}}

\def\boot{\mathsf{b}}
\newcommand{\PPb}{\mathbb{P}^\boot}
\newcommand{\PEb}{\mathbb{E}^\boot}

\def\kolmogorov{\mathsf{d}_{\Conv}}

%
%

%
%




\begin{document}

%

%
\runningauthor{ M. Sheshukova,  S. Samsonov,  D. Belomestny, E. Moulines, Q. Shao, Z. Zhang, A. Naumov}

\twocolumn[

\aistatstitle{Gaussian Approximation and Multiplier Bootstrap for Stochastic Gradient Descent}

\aistatsauthor{Marina Sheshukova \textsuperscript{1} \And  Sergey Samsonov \textsuperscript{1} \And Denis Belomestny \textsuperscript{2,1} \And Éric Moulines \textsuperscript{3,4}}
\aistatsauthor{Qi-Man Shao\textsuperscript{5} \And Zhuo-Song Zhang  \textsuperscript{5} \And Alexey Naumov \textsuperscript{1,6}}

\aistatsaddress{
\textsuperscript{1}HSE University, 
\textsuperscript{2}Duisburg-Essen University
\textsuperscript{3}LRE EPITA
\textsuperscript{4}MBZUAI CMS Division\\
\textsuperscript{5}Southern University of Science and Technology
\textsuperscript{6}Steklov Mathematical Institute of Russian Academy of Sciences
} 

]

\begin{abstract}
In this paper, we establish the non-asymptotic validity of the multiplier bootstrap procedure for constructing the confidence sets using the Stochastic Gradient Descent (SGD) algorithm. Under appropriate regularity conditions, our approach avoids the need to approximate the limiting covariance of Polyak-Ruppert SGD iterates, which allows us to derive approximation rates in convex distance of order up to $1/\sqrt{n}$. Notably, this rate can be faster than the one that can be proven in the Polyak-Juditsky central limit theorem. To our knowledge, this provides the first fully non-asymptotic bound on the accuracy of bootstrap approximations in SGD algorithms. Our analysis builds on the Gaussian approximation results for nonlinear statistics of independent random variables.
\end{abstract}

\section{INTRODUCTION}
\label{sec:intro}
Stochastic Gradient Descent (SGD) is a widely used first-order optimization method well suited for large datasets and online learning. The algorithm has attracted significant attention; see, e.g. \citep{polyak1992acceleration,nemirovski2009robust,moulines2011non}. SGD aims to solve the optimization problem:
\begin{equation}
\label{eq:stoch_minimization}
f(\theta) \to \min_{\theta \in \rset^{d}}\eqsp, \qquad \nabla f(\theta) = \PE_{\xi \sim \PP_{\xi}}[F(\theta,\xi)] \eqsp, 
\end{equation}
where $\xi$ is a random variable defined on a measurable space $(\Zset,\Zsigma)$. Instead of the exact gradient $\nabla f(\theta)$, the algorithm accesses only unbiased stochastic estimates $F(\theta,\xi)$. 
\par 
Throughout this work, we focus on strongly convex objective functions and denote by $\thetas$ the unique minimizer of \eqref{eq:stoch_minimization}. The iterates $\theta_k$, $k \in \nset$, generated by SGD follow the recursive update:
\begin{equation}
\label{eq:sgd_recursion_main}
\theta_{k+1} = \theta_{k} - \alpha_{k+1} F(\theta_k,\xi_{k+1})\eqsp, \quad \theta_0 \in \rset^{d}\eqsp, 
\end{equation}
where $\{\alpha_k\}_{k \in \nset}$ is a sequence of step sizes (or learning rates), which may be either diminishing or constant, and $\{\xi_k\}_{k \in \nset}$ is an \iid\, sequence sampled from $\PP_{\xi}$. Theoretical properties of SGD, particularly in the convex and strongly convex settings, have been extensively studied; see, e.g., \citep{nesterov:2004, moulines2011non, bubeck2015convex, lan_2020}. Many optimization algorithms build on the recurrence \eqref{eq:sgd_recursion_main} to accelerate the convergence of the sequence $\theta_k$ to $\thetas$. Notable examples include momentum acceleration \citep{qian1999momentum}, variance reduction techniques \citep{defazio2014saga, schmidt2017minimizing}, and averaging methods. In this work, we focus on Polyak-Ruppert averaging, originally proposed in \citep{ruppert1988efficient} and \citep{polyak1992acceleration}, which improves convergence by averaging the SGD iterates in \eqref{eq:sgd_recursion_main}. Specifically, the estimator is defined as
\begin{equation}
\label{eq:PR_estimate}
\textstyle 
\bar \theta_n = \frac{1}{n}\sum_{i=0}^{n-1}\theta_i\eqsp, \quad n \in \nset\eqsp.
\end{equation}
It has been established (see \citep[Theorem 3]{polyak1992acceleration}) that, under appropriate conditions on the objective function $f$, the noisy gradient estimates $F$, and the step sizes $\alpha_k$, the sequence of averaged iterates $\{\bar{\theta}_{n}\}_{n \in \nset}$ is asymptotically normal:
\begin{equation}
\label{eq:CLT_fort}
\textstyle 
\sqrt{n}(\bar{\theta}_{n} - \thetas) \dto \gauss(0,\Sigma_\infty)\eqsp,
\end{equation}
where $\dto$ denotes convergence in distribution, and $\gauss(0,\Sigma_\infty)$ is a zero-mean Gaussian distribution with covariance matrix $\Sigma_\infty$, defined later in \Cref{sec:gaus_approximation}.  This result raises two key questions:
\begin{enumerate}[(i),noitemsep,nolistsep]
\item What is the rate of convergence in \eqref{eq:CLT_fort}?
\item How can \eqref{eq:CLT_fort} be leveraged to construct confidence sets for $\thetas$, given that $\Sigma_\infty$ is unknown in practice?
\end{enumerate}
In our paper, we aim to answer both questions. To quantify convergence rates in \eqref{eq:CLT_fort}, we use convex distance, defined for random vectors $X, Y \in \rset^d$ as
\begin{equation}
\label{eq:convex-distance-definiton}
\kolmogorov(X, Y) = \sup_{B \in \Conv(\rset^{d})}\left|\P\bigl(X \in B\bigr) - \P(Y \in B)\right|\eqsp,
\end{equation}
where $\Conv(\rset^{d})$ denotes the collection of convex subsets of $\rset^{d}$. 
The supremum in \eqref{eq:convex-distance-definiton} can be taken over different classes of sets, leading to various probability metrics. In particular, more restrictive classes such as rectangles give rise to metrics with logarithmic dependence on the dimension $d$ (see, e.g., \citep{kojevnikov2022berry, Chernozhukov2013}). The choice of the class of sets is often driven by the needs of a particular application and may affect the dependence of the resulting bounds on the dimension $d$. In particular, even for normal approximation of linear statistics, the dimensional dependence may vary depending on the chosen class. Results for convex distance can be found in \citep{bentkus2004}, while results for rectangles are available in \citep{kojevnikov2022berry, Chernozhukov2013}.

 \citet{shao2022berry} derive Berry-Esseen-type bounds for $\kolmogorov(\sqrt{n}(\bar{\theta}_{n} - \thetas), \gauss(0,\Sigma_{n}))$, where
$\Sigma_n$ is the covariance matrix of the linearized counterpart of \eqref{eq:sgd_recursion_main}; see the precise definition below in \eqref{eq:sigma_n_def}. We complement this result with the rates of convergence in \eqref{eq:CLT_fort}. We also establish a lower bound on the convex distance 
$\kolmogorov(\sqrt{n}(\bar{\theta}_{n} - \thetas), \gauss(0,\Sigma_{\infty}))$. 
This result shows that, for certain step-size sequences $\alpha_k$ in 
\eqref{eq:sgd_recursion_main}, the Gaussian approximation with covariance 
$\Sigma_{\infty}$ is less accurate than an approximation based on another 
covariance, in particular $\Sigma_n$. A similar phenomenon has been reported in 
the bootstrap literature for \iid\ data outside the context of gradient methods; 
see \citep[Theorem~3.11]{shao1995jackknife}.

One popular approach for solving (ii) is based on the \emph{plug-in} methods \citep{chen2020aos, chen2021statistical}, which aim to construct an estimator $\hat{\Sigma}_n$ of $\Sigma_{\infty}$ directly. Theoretical guarantees for these methods typically focus on non-asymptotic bounds for how close $\hat{\Sigma}_n$ is to $\Sigma_{\infty}$, often in terms of $\PE[\norm{\hat{\Sigma}_n - \Sigma_{\infty}}]$. At the same time, the analysis of these methods bypasses item (i) and the issues related to the rate of convergence in \eqref{eq:CLT_fort}. In our paper, we present, to our knowledge, the first fully non-asymptotic analysis of a procedure for constructing confidence intervals based on the bootstrap approach \citep{efron1992bootstrap, JMLR:v19:17-370}, which avoids direct approximation of $\Sigma_{\infty}$. Moreover, theoretical analysis of the underlying procedure, together with results on normal approximation with $\mathcal{N}(0,\Sigma_{\infty})$ from (i), shows that the same approximation rate cannot be achieved by plug-in methods, at least for a certain range of step sizes $\alpha_k$ in \eqref{eq:sgd_recursion_main}. Our key contributions are as follows:
\begin{itemize}[noitemsep, nolistsep]
    \item We establish the non-asymptotic validity of the multiplier bootstrap procedure introduced in \citep{JMLR:v19:17-370}. Under suitable regularity conditions, our bounds show that the distribution of $\sqrt{n}(\bar{\theta}_{n} - \thetas)$ can be approximated, up to logarithmic factors, at rate $1/n^{\gamma-1/2}$ for step sizes of the form $\alpha_k = c_0/(k+k_0)^{\gamma}$ with $\gamma \in (1/2,1)$. To our knowledge, this is the first bound on the accuracy of bootstrap approximations in SGD. Notably, this rate can be faster than the one obtained in \eqref{eq:CLT_fort}. Our results improve upon recent works \citep{samsonov2024gaussian, wu2024statistical}, which addressed the convergence rate in similar procedures for the LSA algorithm and TD learning, respectively.
    
    \item  Our analysis of the multiplier bootstrap procedure reveals an important distinction: unlike plug-in estimators, the validity of the bootstrap method does not depend on approximating $\sqrt{n}(\bar{\theta}_{n} - \thetas)$ by $\gauss(0,\Sigma_\infty)$. Instead, it requires approximation by $\gauss(0, \Sigma_n)$ with the matrix $\Sigma_n$ being the covariance matrix of the linearized counterpart of \eqref{eq:sgd_recursion_main}. The structure of $\Sigma_n$ is central to our analysis, both for the rate in \eqref{eq:CLT_fort} and for the non-asymptotic bootstrap validity. Precise definitions are provided in \Cref{sec:gaus_approximation}.
    
    \item We analyze the Polyak–Ruppert averaged SGD iterates \eqref{eq:PR_estimate}  for strongly convex minimization problems and establish Gaussian approximation rates in \eqref{eq:CLT_fort} with respect to the convex distance. Specifically, we show that $\kolmogorov(\sqrt{n}(\bar{\theta}_{n} - \thetas), \gauss(0,\Sigma_\infty))$ is of order $n^{-1/4}$ for step sizes $\alpha_k = c_0/(k+k_0)^{3/4}$ with appropriately chosen $c_0$ and $k_0$. Our proof relies on the techniques of \citep{shao2022berry} and \citep{wu2024statistical}. We further provide a lower bound showing that this rate of normal approximation with $\mathcal{N}(0,\Sigma_{\infty})$ is tight in the regime $\alpha_k = c_0/(k+k_0)^{\gamma}$ with $\gamma \geq 3/4$.
\end{itemize}

\vspace{-3mm}
\paragraph{Notations.} Throughout this paper, we use the following notations. For a matrix $A \in \rset^{d \times d}$ and a vector $x \in \rset^{d}$, we denote by $\norm{A}$ and $\norm{x}$ their spectral norm and Euclidean norm, respectively. We also write $\frobnorm{A}$ for the Frobenius norm of matrix $A$. Given a function $f: \rset^{d} \to \rset$, we write $\nabla f(\theta)$ and $\nabla^2 f(\theta)$ for its gradient and Hessian at a point $\theta$. We use the standard abbreviations "i.i.d." for "independent and identically distributed" and "w.r.t." for "with respect to".

\paragraph{Literature review.} 
The asymptotic behavior of SGD, including the asymptotic normality of the estimator $\bar{\theta}_n$ and its almost sure convergence, has been studied for smooth and strongly convex minimization problems \citep{polyak1992acceleration, kushner2003stochastic, benveniste2012adaptive}. Optimal mean-squared error (MSE) bounds for $\theta_n - \thetas$ and $\bar{\theta}_n - \thetas$ were first derived in \citep{nemirovski2009robust} for smooth and strongly convex objectives, and later refined in \citep{moulines2011non}. The constant step size regime for strongly convex problems has been analyzed in \citep{durmus2020biassgd, li2025statistical}. High-probability bounds for SGD iterates were established in \citep{rakhlin2012making} and later extended in \citep{harvey2019tight}, both addressing non-smooth and strongly convex minimization. 
\par 
It is important to note that the results discussed above do not directly yield convergence rates for $\sqrt{n}(\bar{\theta}_{n} - \thetas)$ to $\gauss(0,\Sigma_\infty)$ in terms of convex or Wasserstein distance. Among the relevant contributions in this direction, we highlight recent works \citep{srikant2024rates, samsonov2024gaussian, wu2024statistical}, which provide quantitative bounds on the convergence rate in \eqref{eq:CLT_fort} for iterates of temporal difference (TD) learning and general linear stochastic approximation (LSA) schemes. These algorithms, however, do not necessarily reduce to SGD with a quadratic objective $f$, since the system matrix in LSA is not required to be symmetric. Convergence rates of order up to $1/\sqrt{n}$ were established in \citep{pmlr-v99-anastasiou19a} for a class of smooth test functions. The recent work \citep{agrawalla2023high} establishes Berry–Esseen bounds of order up to $n^{-1/4}$ for the last iterate of SGD in the linear regression setting.
\par 
Bootstrap methods for i.i.d.\ observations were first introduced in \citep{efron1992bootstrap}. In the context of SGD, \citet{JMLR:v19:17-370} proposed the multiplier bootstrap procedure for constructing confidence intervals for $\thetas$ and established its asymptotic validity. The same method was later analyzed in \citep{samsonov2024gaussian} for the LSA algorithm, where a rate of $n^{-1/4}$ was obtained for the accuracy of approximating the distribution of $\sqrt{n}(\bar{\theta}_n - \thetas)$ by the
distribution of its bootstrap-world counterpart in terms of convex distance.

\par 
A naive approach for constructing confidence intervals is to perform multiple independent runs of the algorithm to empirically acquire distributional information and then construct a confidence interval, as discussed in \citep{zhu2024high}.
A popular class of methods for constructing confidence sets for $\thetas$ relies on estimating the asymptotic covariance matrix $\Sigma_{\infty}$. Plug-in and batch-mean estimators of $\Sigma_{\infty}$ have attracted a lot of attention \citep{chen2020aos,chen2021statistical,chen2022online}, especially in the setting when the stochastic estimates of Hessian are available. Estimators of $\Sigma_{\infty}$ based on the batch-mean method and its online variant were studied in \citep{chen2020aos} and \citep{zhu2023online_cov_matr}. \citet{pmlr-v178-li22b} studied the asymptotic validity of the plug-in estimator for $\Sigma_{\infty}$ in the local SGD framework. The analysis in \citep{zhong2023online} refined guarantees for both the multiplier bootstrap and batch-mean estimators of $\Sigma_{\infty}$ in nonconvex problems. However, these contributions typically establish recovery rates for $\Sigma_{\infty}$ but only prove the asymptotic validity of the resulting confidence intervals. A notable exception is the recent work \citep{wu2024statistical}, which studied the temporal-difference (TD) learning algorithm. There, the authors provided a fully non-asymptotic analysis, obtaining the approximation rate $n^{-1/3}$ for the distribution of $\sqrt{n}(\bar{\theta}_n - \thetas)$ in terms of convex distance.

\section{MAIN RESULTS}
\label{sec:bootstrap}
This section establishes the non-asymptotic validity of the multiplier bootstrap method proposed in \citep{JMLR:v19:17-370}. We restrict attention to smooth and strongly convex minimization problems, following the frameworks of \citep{moulines2011non}, \citep{pmlr-v99-anastasiou19a}, and \citep{shao2022berry}. The procedure is based on perturbing the trajectory \eqref{eq:sgd_recursion_main}. Let 
\(\mathcal{W}^{n-1} = \{w_{\ell}\}_{1 \leq \ell \leq n-1}\)
be i.i.d.\ random variables with distribution $\PP_w$, each satisfying \(\PE[w_1] = 1\) and \(\var[w_1] = 1\). We assume that \(\mathcal{W}^{n-1}\) is independent of
\(\Xi^{n-1} = \{\xi_{\ell}\}_{1 \leq \ell \leq n-1}\).
We then use \(\mathcal{W}^{n-1}\) to construct randomly perturbed trajectories of the SGD dynamics \eqref{eq:sgd_recursion_main}:
\begin{equation}
\label{eq:sgd_bootstrap}
\begin{split}
\theta_{k}^{\boot} 
&= \theta_{k-1}^{\boot} - \alpha_{k} w_k F(\theta_{k-1}^\boot,\xi_{k}) \eqsp, 
\quad \theta_{0}^{\boot} = \theta_{0} \in \rset^{d} \eqsp, \\
\prtheta_{n}^{\boot} 
&=  n^{-1} \sum\nolimits_{k=0}^{n-1} \theta_k^{\boot} \eqsp, 
\quad n \geq 1 \eqsp.
\end{split}
\end{equation}
When generating different weights $w_{k}$, we obtain samples from the conditional distribution of $\bar{\theta}_{n}^\boot$ given the data $\Xi^{n-1}$. We further denote
\begin{align}
\label{def: PPb}
\PPb = \PP(\cdot \mid \Xi^{n-1}) \eqsp, 
\quad \PEb = \PE(\cdot \mid \Xi^{n-1}) \eqsp. 
\end{align}
The core principle behind the bootstrap procedure \eqref{eq:sgd_bootstrap} is that the "bootstrap world" probabilities $\PPb\big(\sqrt{n} (\bar{\theta}_{n}^\boot - \bar{\theta}_n) \in B\big)$ are close to $\PP\big(\sqrt{n} (\bar{\theta}_n - \thetas) \in B\big)$ for $B \in \Conv(\rset^{d})$. Formally, we say that the procedure \eqref{eq:sgd_bootstrap} is asymptotically valid if 
\begin{equation}
\sup_{B \in \Conv(\rset^{d})} \Big| \PPb\big(\sqrt{n} (\bar{\theta}_{n}^\boot - \bar{\theta}_n) \in B \big) - \PP\big(\sqrt{n} (\bar{\theta}_n - \thetas) \in B\big) \Big|
\end{equation}
converges to $0$ in $\PP$-probability as $n \to \infty$. This result was obtained in \citep{JMLR:v19:17-370} under assumptions close to the original paper \citep{polyak1992acceleration}.
While an analytical expression for \(\PPb(\sqrt{n} (\bar{\theta}_{n}^\boot - \bar{\theta}_n) \in B)\) is unavailable, it can be approximated via Monte Carlo simulations by generating \(M\) perturbed trajectories according to \eqref{eq:sgd_bootstrap}. Standard arguments (see, e.g., \citep[Section~5.1]{shao2003mathematical}) suggest that the accuracy of this Monte Carlo approximation scales as \(\mathcal{O}(M^{-1/2})\) when generating $M$ parallel perturbed trajectories in \eqref{eq:sgd_bootstrap}.
\par 
\paragraph{Assumptions.} We impose the following regularity conditions on the objective function $f$:  
\begin{assum}
\label{ass:L-smooth}
The function $f$ is twice continuously differentiable and $L_{1}$-smooth on $\rset^{d}$; that is, there exists a constant $L_{1} > 0$ such that for any $\theta,\theta' \in \rset^{d}$,
\begin{equation}
\label{eq:L-smooth}
\norm{\nabla f(\theta) - \nabla f(\theta')} \leq L_{1} \norm{\theta - \theta'} \eqsp.
\end{equation}
Moreover, $f$ is assumed to be $\mu$-strongly convex on $\rset^{d}$; that is, there exists a constant $\mu > 0$ such that for any $\theta,\theta' \in \rset^{d}$,
\begin{equation}
\label{eq:strong_convex}
(\mu/2)\norm{\theta-\theta'}^2 \leq f(\theta) - f(\theta') - \langle \nabla f(\theta'), \theta - \theta' \rangle \eqsp.
\end{equation}
\end{assum}
\Cref{ass:L-smooth} implies the following bound on the Hessian of $f$:
\[
\textstyle 
\mu \Id_d \preceq \nabla^2 f(\theta) \preceq L_1 \Id_d,
\]
for all $\theta \in \rset^{d}$. We next state the assumptions on $F(\theta,\xi)$. Specifically, we write
\begin{equation}
\label{eq: gradF decompositon}
\textstyle 
F(\theta_{k-1}, \xi_k) = \nabla f(\theta_{k-1}) + \zeta_k \eqsp, \quad 
\zeta_k := \zeta(\theta_{k-1}, \xi_k) \eqsp,
\end{equation}
so that $\{\zeta_k\}_{k \in \nset}$ is a sequence of $d$-dimensional random vectors whose distribution may depend on $\theta_{k-1}$. The SGD recursion \eqref{eq:sgd_recursion_main} then takes the form  
\begin{equation}
\label{eq:sgd_recursion_main_new}
\textstyle
\theta_{k} = \theta_{k-1} - \alpha_{k}\big(\nabla f(\theta_{k-1}) + \zeta_k\big) \eqsp, 
\quad \theta_0 \in \rset^{d} \eqsp.
\end{equation}  
We impose the following assumption on the noise sequence $\zeta_k$:
\begin{assum}
\label{ass:bound_noise}
For each $k \geq 1$, $\zeta_k$ admits the decomposition $\zeta_k = \eta(\xi_k) + g(\theta_{k-1}, \xi_k)$, where
\begin{enumerate}[(i),noitemsep,nolistsep]
    \item $\{\xi_k\}_{k=1}^{n-1}$ is a sequence of i.i.d.\ random variables on $(\Zset,\Zsigma)$ with distribution $\PP_{\xi}$. The function $\eta: \Zset \to \rset^d$ satisfies $\PE[\eta(\xi_1)]=0$ and $\PE[\eta(\xi_1)\eta(\xi_1)^\top]=\Sigma_{\xi}$, with $\lambda_{\min}(\Sigma_{\xi}) > 0$.
    
    \item The function $g: \rset^d \times \Zset \to \rset^d$ satisfies $\PE[g(\theta,\xi_1)]=0$ for all $\theta \in \rset^{d}$. Moreover, there exists $L_2 > 0$ such that for any $\theta,\theta' \in \rset^{d}$ and  $2 \leq p\leq \log n$,
    \begin{equation}
    \label{eq: g_bound_norm}
    \textstyle 
    \PE^{1/p}[\norm{g(\theta,\xi)-g(\theta',\xi)}^p]\leq L_2 \norm{\theta - \theta'}\eqsp,
    \end{equation}
    and $g(\thetas,z)=0$ for all $z \in \Zset$.
    \item There exists $C_{\xi}>0$ such that for any $\theta \in \rset^{d}$, the random vector $g(\theta,\xi)+\eta(\xi)$ is sub-Gaussian with variance proxy $C_{\xi}^2$; that is, for any $v \in \rset^{d}$,
    \[ 
    \PE\!\left[\exp\{\langle g(\theta,\xi)+\eta(\xi), v \rangle\}\right] 
    \leq \exp\{\norm{v}^2 C_{\xi}^2/2\} \eqsp. 
    \]
\end{enumerate}
\end{assum}
\textbf{Discussion.} As an example of a sequence $\zeta_k$ satisfying conditions (i) and (ii) in \Cref{ass:bound_noise}, consider the case where the stochastic estimates $F(\theta,\xi)$ satisfy:
\begin{enumerate}[noitemsep,nolistsep]
    \item $\PE[F(\theta,\xi)] = \nabla f(\theta)$ for all $\theta \in \rset^{d}$;
    \item $\PE^{1/p}[\norm{F(\theta,\xi) - F(\theta',\xi)}^p] \leq L \norm{\theta - \theta'}$.
\end{enumerate}
In this case, (i) and (ii) in \Cref{ass:bound_noise} hold with $\eta(\xi) = F(\thetas,\xi)$ and $g(\theta,\xi) = F(\theta,\xi) - F(\thetas,\xi) - \nabla f(\theta)$. Condition (ii) in \Cref{ass:bound_noise} is often imposed when studying averaged iterates; see \citep[Assumption~H2]{moulines2011non} and \citep{durmus2020biassgd,SheshukovaICLR}. It is also possible to adapt our arguments to the weaker assumption
    \begin{equation}
    \textstyle 
    \PE^{1/p}[\norm{g(\theta,\xi)-g(\theta',\xi)}^p]\leq L_2 \norm{\theta - \theta'}^\beta\eqsp,
    \end{equation}
for some $1/2<\beta < 1$, at the cost of a slower approximation rate. The boundary case $\beta=1/2$, which arises for example in quantile regression, is not covered by our analysis and requires different techniques. We refer to \citep{chen2023recursive, chen2025smoothed, caitime} for non-asymptotic results in this setting. Furthermore, our proof strategy remains applicable if only a fixed number of moments $p \ge 4$ (rather than $p = \log n$) is available, at the cost of a slower rate with respect to $n$ in the final estimates. Similarly, one can allow $L_2$ in \eqref{eq: g_bound_norm} to depend polynomially on $p$. In this case, our proof approach remains valid, but yields additional polynomial dependence on $\log n$ in the final rate of \Cref{th:bootstrap_validity}. 

The assumption \Cref{ass:bound_noise}-(iii) is crucial to establish high-order moment bounds:
\begin{equation}
\label{eq:hpd_bound_preliminary}
\textstyle 
\PE^{1/p}\!\left[\norm{\theta_k - \thetas}^{p}\right] 
\quad \text{and} \quad 
\PE^{1/p}\!\left[\norm{\theta_k^{\boot} - \thetas}^{p}\right] \eqsp,
\end{equation}
see \Cref{lem: high_prob_last_iter} in the Appendix. Our proof generalizes the argument of \citep[Theorem~4.1]{harvey2019tight}, which requires the noise variables $\zeta_k$ in \eqref{eq:sgd_recursion_main_new} to be almost surely bounded. This argument was later generalized in \citep{madden2024high} to the setting where $\zeta_k$ is conditionally sub-Gaussian given $\F_{k-1} = \sigma(\theta_i, i \leq k-1)$, with variance proxy uniformly (in $\theta$) bounded by a constant factor. This is exactly the setting considered in \Cref{ass:bound_noise}-(iii). Such an assumption is widely used in the literature; see \citep{nemirovski2009robust,hazan2014beyond} and the remarks in \citep{harvey2019tight}. We believe that this assumption can be generalized for sub-Weibull noise setting (see \citep[Definition~5]{madden2024high}), at the expense of additional technicalities (see \citep[Theorem~9]{madden2024high}). Some of the authors considering bounds of type \eqref{eq:hpd_bound_preliminary}, such as \citet{rakhlin2012making}, imposed the stronger assumption that $\sup_{\theta \in \rset^{d}}\norm{F(\theta,\xi)}$ is almost surely bounded. A less restrictive approach might be to consider updates involving gradient clipping; see, e.g., \citep{sadiev2023high}, or a version of SGD algorithm with projections as in \citep{rakhlin2012making}. However, both approaches change the limiting covariance matrix $\Sigma_\infty$ and the leading term in Gaussian approximation for $\sqrt{n} (\bar{\theta}_{n}^\boot - \bar{\theta}_n)$. Both approaches would introduce an additional level of technical difficulties, so we leave a detailed study of these schemes for the future work. 

We further impose the following condition on the Hessian matrix $\nabla^2 f(\theta)$ at $\thetas$:
\begin{assum}
\label{ass:hessian_Lipschitz_ball}
There exist constants $L_3,\beta > 0$ such that for all $\theta$ with $\norm{\theta - \thetas} \leq \beta$, we have
\begin{equation}
\textstyle 
\norm{\nabla^2 f(\theta) - \nabla^2 f(\thetas)} \leq L_3 \norm{\theta - \thetas} \eqsp.
\end{equation}
\end{assum}
\Cref{ass:hessian_Lipschitz_ball} ensures that the Hessian of $f$ is Lipschitz continuous in a neighborhood of $\thetas$. 
Similar assumptions have been considered in \citep{shao2022berry,pmlr-v99-anastasiou19a}, as well as in other works on first-order optimization methods; see, e.g., \citep{li2022root}. Several studies on the non-asymptotic analysis of SGD impose stronger smoothness conditions, such as bounded derivatives of $f$ up to order $4$, as in \citep{durmus2020biassgd}. We also impose the following assumption on the bootstrap weights $w_i$ used in the algorithm:
\begin{assum}
\label{ass:bound_bootstap_weights}
There exist constants $0 < W_{\min} < W_{\max} < +\infty$ such that $W_{\min} \leq w_1 \leq W_{\max}$ a.s. 
\end{assum}

The original work \citep{JMLR:v19:17-370} also required positive bootstrap weights $w_i$. We impose boundedness of $w_i$ in order to derive the high-probability bound in \Cref{lem: high_prob_last_iter}. An explicit example of a distribution satisfying \Cref{ass:bound_bootstap_weights} and the constraints $\PE[w_1]=1$ and $\PVar[][w_1]=1$ is given in \Cref{sec:example_distribution}. Finally, we impose the following condition on the step sizes $\alpha_k$ and sample size $n$:
\begin{assum}
\label{ass:step_size_new_boot}
Let $\alpha_k = c_0 (k_0 + k)^{-\gamma}$, where $\gamma \in (1/2,1)$ and $c_0,k_0$ satisfy $2 c_0 W_{\max}^2 L_1^2 \leq 1$, 
\[
k_0 \geq \max\!\left\{ \left(\tfrac{2\gamma}{\mu c_0 W_{\min}}\right)^{\!1/(1-\gamma)}, \; \left(\tfrac{1}{\mu W_{\min}}\right)^{\!1/\gamma} \right\} \eqsp.
\] 
\end{assum}
\begin{assum}
\label{ass:n_lower_bound}
The number of observations $n$ is assumed to be sufficiently large. Precise expressions for $n$ are provided in \Cref{sec: bootstrap validity section} (see \Cref{ass:n_lower_bound_full}).
\end{assum}

The particular bound on $k_0$ in \Cref{ass:step_size_new_boot} arises from the high-order moment bounds (see \Cref{lem: high_prob_last_iter} in the Appendix). We further discuss the lower bound on the number of observations imposed in \Cref{ass:n_lower_bound} in the proof of \Cref{th:bootstrap_validity}.

\paragraph{Discussion of assumptions}
 Most of the theoretical assumptions used in our analysis (namely, \Cref{ass:L-smooth}, \Cref{ass:bound_noise}(i,ii), \Cref{ass:hessian_Lipschitz_ball}) are standard in the stochastic approximation literature and are of the same nature as the conditions appearing in the classical Polyak--Juditsky CLT \citep{polyak1992acceleration}. Similar assumptions are also routinely used in several recent works on non-asymptotic analysis of stochastic approximation \citep{li2022root, shao2022berry, SheshukovaICLR}. The additional assumptions \Cref{ass:bound_noise}(iii), \Cref{ass:bound_bootstap_weights}, and \Cref{ass:step_size_new_boot} are required to obtain high-probability guarantees for the last SGD iterate, while \Cref{ass:n_lower_bound} ensures that $\lambda_{\min}(\Sigma_n^{\boot})$ is separated from zero with high probability.
\subsection{Non-asymptotic multiplier bootstrap validity}

\begin{theorem}
\label{th:bootstrap_validity}
Assume \Cref{ass:L-smooth}--\Cref{ass:n_lower_bound}. Then, with $\PP$-probability at least $1 - 2/n$, we have
\begin{multline}
\label{eq:gaussian_approximation_bootstrap}
\sup_{B \in \Conv(\rset^{d})} 
\Big| \PPb\big(\sqrt{n}(\bar{\theta}_{n}^\boot - \bar{\theta}_n) \in B\big) 
- \PP\big(\sqrt{n}(\bar{\theta}_n - \thetas) \in B\big) \Big|  \\
\leq \frac{\ConstC_1 \sqrt{\log n}}{n^{1/2}} 
+ \frac{\ConstC_2 \log n}{n^{\gamma - 1/2}} 
+ \frac{\ConstC_3 (\log n)^{3/2}}{n^{\gamma/2}} \eqsp,
\end{multline}
where $\ConstC_1,\ConstC_2,\ConstC_3$ are given in \Cref{app:proof_th_main_1}, equation~\eqref{eq: bootsrap constants def}.
\end{theorem}

\begin{figure*}[ht!]
  \centering
  \begin{tikzcd}[column sep=7.2em, row sep=3.0em]
  {\textbf{Real world: }\textstyle \sqrt{n}\,(\bar{\theta}_n-\thetas)}
    \arrow[r, leftrightarrow,
      "{\shortstack{Gaussian approximation\\ \scriptsize Th.~\ref{th:bound_kolmogorov_dist_pr_sigma_n}}}"]
  & {\mathcal N(0,\Sigma)}
    \arrow[d, leftrightarrow, "{\shortstack{Gaussian comparison\\ \scriptsize \citet{BarUly86},\\ \scriptsize \citet{Devroye2018}}}"] \\
  {\textbf{Bootstrap world: }\textstyle \sqrt{n}\,(\bar{\theta}_n^\boot-\bar{\theta}_n)}
    \arrow[r, leftrightarrow,
      "{\shortstack{Gaussian approximation\\ \scriptsize Th.~\ref{GAR bootstrap_main}}}"]
  & {\mathcal N(0,\Sigma^\boot)}
  \end{tikzcd}
  \caption{Gaussian approximations in the real and bootstrap worlds, and their comparison.}
  \label{fig:gaussian_diagram}
\end{figure*}

\begin{remark}
The constants $\ConstC_1,\ConstC_2,\ConstC_3$ in \Cref{th:bootstrap_validity} depend on the problem dimension $d$, as well as on $\norm{\theta_0 - \thetas}$. To make the dependence on $d$ explicit, we assume the natural scaling 
$\norm{\theta_0 - \thetas} \lesssim \sqrt{d}$. These dimension dependence aligns with the one assumed in \citep{shao2022berry}. We also assume that $C_{\xi}$ from \Cref{ass:bound_noise}-(iii) is dimension-free. Then \Cref{th:bound_kolmogorov_dist_pr_sigma_n} yields
\begin{multline}
 \sup_{B \in \Conv(\rset^{d})} 
 \Big| \PPb\big(\sqrt{n}(\bar{\theta}_{n}^\boot - \bar{\theta}_n) \in B\big) 
 - \PP\big(\sqrt{n}(\bar{\theta}_n - \thetas) \in B\big) \Big| \\
 \lesssim \frac{(d^2 + d^{3/2}\log(2d)) \sqrt{\log n}}{n^{1/2}}
 + \frac{d^{3/2} \log n}{n^{\gamma - 1/2}} \\
 + \frac{d^{3/2} (\log n)^{3/2}}{n^{\gamma/2}} \eqsp.
\end{multline}
Here, the notation $\lesssim$ indicates that the inequality holds up to constants independent of $n$ and $d$.
\end{remark}

\begin{remark}
\label{rem:step_size_remark}
The result of \Cref{th:bootstrap_validity} can also be established for the step size $\alpha_k = c_0/(k+k_0)$. The required Gaussian approximation with covariance matrix $\Sigma_n$ was proved in \citep{shao2022berry}. The only differences compared to \Cref{th:bootstrap_validity} are the additional $\log n$ factors in the bound and modified conditions on $c_0$ and $k_0$ in \Cref{ass:step_size_new_boot}.
\end{remark}

\begin{remark}
\label{rem:experiments}
In \Cref{sec:experiments} we provide numerical results on logistic and linear regressions, illustrating the coverage probabilities achieved by the multiplier bootstrap approach and the batch-mean approach \citep{roy2023online}.
\end{remark}

\begin{proof}[Proof sketch of Theorem \ref{th:bootstrap_validity}] The proof of non-asymptotic bootstrap validity is based on the Gaussian approximation performed both in the "real" world and bootstrap world together with an appropriate Gaussian comparison inequality:
Here, $\Sigma$ and $\Sigma^\boot$ are covariance matrices to be specified later. To understand the origin of the Gaussian approximation, we linearize the statistics $\sqrt{n}(\bar{\theta}_{n} - \thetas)$ and $\sqrt{n}(\bar{\theta}_{n}^\boot - \bar{\theta}_n)$. We detail the derivation for $\sqrt{n}(\bar{\theta}_{n} - \thetas)$, while analogous arguments for $\sqrt{n}(\bar{\theta}_{n}^\boot - \bar{\theta}_n)$ are provided in \Cref{GAR boot world sec}.  

Let $G = \nabla^2 f(\thetas)$. We expand $\sqrt{n}(\bar{\theta}_{n} - \thetas)$ into a weighted sum of independent random vectors plus lower-order terms. By the Newton–Leibniz formula,  
\begin{align}
\label{eq:H_theta_def}
\textstyle
\nabla f(\theta) = G(\theta - \thetas) + H(\theta),
\end{align}
where 
\[
H(\theta) = \int_{0}^1 \!\big(\nabla^2 f(\thetas + t(\theta - \thetas)) - G\big)(\theta - \thetas) \,\rmd t \eqsp.
\]
Note that $H(\theta)$ is of order $\norm{\theta - \thetas}^2$ (see \Cref{lem:H_theta_bound}). The recursion for the SGD error \eqref{eq:sgd_recursion_main_new} writes as  
\begin{multline}
\label{eq: sgd_reccurence_with_remainder_main}
\textstyle
\theta_k - \thetas 
= (\Id_d - \alpha_k G)(\theta_{k-1} - \thetas) 
\\ - \alpha_k\big(\eta(\xi_k) + g(\theta_{k-1},\xi_k) + H(\theta_{k-1})\big) \eqsp.
\end{multline}

For $i \in \{0,\ldots,n-1\}$, define
\begin{equation}
\label{eq:Q_i_def}
\textstyle
Q_i = \alpha_i \sum_{j=i}^{n-1} \prod_{k=i+1}^{j} (\Id_d - \alpha_k G) \eqsp,
\end{equation}
with the convention that empty products equal $\Id_d$. Averaging \eqref{eq: sgd_reccurence_with_remainder_main} and rearranging yields  $\sqrt{n}(\bar{\theta}_n - \thetas) 
= W + D $ with
\begin{equation}
\label{eq:linear and nonlinear terms}
\textstyle
W = -\frac{1}{\sqrt{n}} \sum_{i=1}^{n-1} Q_i \eta(\xi_i), \, 
D = \sqrt{n}(\bar{\theta}_n - \thetas) - W \eqsp.
\end{equation}
\som{Fix textsyle}

Here, $W$ is a weighted sum of i.i.d.\ mean-zero random vectors with covariance  
\begin{equation}
\label{eq:sigma_n_def} 
\textstyle
\Sigma_n = n^{-1} \sum_{k=1}^{n-1} Q_k \noisecov Q_k^\top \eqsp,
\end{equation}
and $D$ is the remainder term, defined in \Cref{sec:Gar_real_world_proof}, equation~\eqref{eq:linear and nonlinear terms appendix}. Moreover, in \Cref{sec: proof of difference between cov} we show that $Q_i$ can be approximated by $G^{-\top}$ and $\Sigma_n$ approximates  
\begin{equation}
\label{eq:Sigma_infty_def}
\textstyle
\Sigma_\infty = G^{-1} \Sigma_{\xi} G^{-\top} \eqsp.
\end{equation}

We expect $D$ not to affect the asymptotic distribution of the linear statistic $W$, which should be Gaussian by the central limit theorem. A key issue is the choice of the approximating Gaussian distribution $\mathcal{N}(0,\Sigma)$, with $\Sigma = \Sigma_n$ or $\Sigma_\infty$, and its bootstrap analogue $\Sigma^\boot$. This choice does not alter the bootstrap recursion \eqref{eq:sgd_bootstrap}, but only influences the rates of bootstrap approximation.  

\citet{JMLR:v19:17-370} selected $\mathcal{N}(0,\Sigma_\infty)$ for their asymptotic analysis, and a similar approach was used in \citep[Theorem~3]{samsonov2024gaussian} for the LSA setting. However, as shown in \Cref{cor:berry-esseen}, this choice implies that the rate of normal approximation in bootstrap world is not faster than $n^{-1/4}$. In contrast, \Cref{th:bound_kolmogorov_dist_pr_sigma_n} and \Cref{GAR bootstrap_main} demonstrate that rates up to $n^{-1/2}$ are attainable by taking $\Sigma = \Sigma_n$ in diagram~\ref{fig:gaussian_diagram} and using the corresponding bootstrap analogue. To conclude the proof, we apply the Gaussian comparison inequality; see \Cref{app:proof_th_main_1}. A full proof of \Cref{th:bootstrap_validity} is given in \Cref{sec: bootstrap validity section}.
\end{proof}
\paragraph{Discussion.} 
In \citep{samsonov2024gaussian}, a counterpart of \Cref{th:bootstrap_validity} was established for the LSA algorithm with an approximation rate of order $n^{-1/4}$, up to logarithmic factors. This rate is suboptimal, since the authors employed $\mathcal{N}(0,\Sigma_{\infty})$ as the Gaussian approximation when proving bootstrap validity. A more recent work \citep{wu2024statistical} improved the rate to $n^{-1/3}$ for the temporal-difference (TD) learning procedure with linear function approximation. Their approach relies on a direct estimate of $\Sigma_{\infty}$ and yields a rate of order $n^{-1/3}$ when approximating the distribution of $\sqrt{n}(\bar{\theta}_n - \thetas)$ by $\mathcal{N}(0,\hat{\Sigma}_n)$ with a suitably constructed estimator $\hat{\Sigma}_n$; see \citep[Theorems~3.4 and~3.5]{wu2024statistical}. \citet{chen2020aos} proposed a plug-in estimator $\hat{\Sigma}_n$ of $\Sigma_{\infty}$ and showed that
\[
\textstyle 
\PE\!\bigl[\norm{\hat{\Sigma}_n - \Sigma_{\infty}}\bigr] \lesssim C n^{-\gamma/2}, 
\quad \gamma \in (1/2,1),
\]
under weaker assumptions than those adopted in the present section. However, this bound is insufficient to establish an analogue of the Gaussian comparison result \Cref{lem:bound_kolmogorov_dist_sigma_n_sigma_infty} for $\mathcal{N}(0,\hat{\Sigma}_n)$ and $\mathcal{N}(0,\Sigma_{\infty})$ on a set of large $\PP$-probability. Achieving such a result would require high-probability bounds on $\norm{\hat{\Sigma}_n - \Sigma_{\infty}}$, which are more delicate and may necessitate additional assumptions beyond those in \citep{chen2020aos}.

Furthermore, controlling the error between the distribution of $\sqrt{n}(\bar{\theta}_n - \thetas)$ and $\mathcal{N}(0,\hat{\Sigma}_n)$ requires a Berry–Esseen type bound on the distance between $\sqrt{n}(\bar{\theta}_n - \thetas)$ and $\mathcal{N}(0,\Sigma_\infty)$. As shown in \Cref{cor:berry-esseen}, the approximation rate in this problem deteriorates as $\gamma \to 1$, creating an additional trade-off in the analysis of plug-in procedures based on estimating $\Sigma_\infty$. This phenomenon highlights the fundamental difference between the multiplier bootstrap approach and the plug-in method of \citep{chen2020aos}.

\subsection{Gaussian approximation in the real world}
\label{sec:gaus_approximation}
In this section, we relax the assumptions \Cref{ass:bound_noise} and \Cref{ass:step_size_new_boot}. We introduce a family of assumptions, denoted \Cref{ass:noise_decomposition}($p$) with $p \geq 2$, on the noise sequence $\zeta_k$, and \Cref{ass:step_size} on the step sizes $\alpha_k$:
\begin{assum}[$p$]
\label{ass:noise_decomposition}
Conditions (i) and (ii) in \Cref{ass:bound_noise} hold. In addition, there exists $\sigma_p > 0$ such that $\PE^{1/p}\!\left[\|\eta(\xi_1)\|^p\right] \leq \sigma_p$.
\end{assum}

\begin{assum}
\label{ass:step_size}
$\alpha_k = c_0 / (k_0 + k)^{\gamma}$, where $\gamma \in (1/2,1)$, $k_0 \geq 1$, and $c_0$ satisfies $2c_0 L_1 \leq 1$.
\end{assum}

Clearly, \Cref{ass:step_size_new_boot} implies \Cref{ass:step_size}, and \Cref{ass:bound_noise} implies \Cref{ass:noise_decomposition}($p$) for all $p \geq 2$ with $\sigma_p \lesssim C_{\xi}(\sqrt{d} + \sqrt{p})$. We also note that, for the results in this section, it suffices to assume that \Cref{ass:bound_noise}(ii) holds only for any
$p\leq 4$.

The main result of this section establishes a Gaussian approximation for $\sqrt{n}(\bar{\theta}_n - \thetas)$ with $\mathcal{N}(0,\Sigma_n)$. This refines the bounds of \citep[Theorem~3.4]{shao2022berry} and serves as a key step toward analyzing normal approximation with $\mathcal{N}(0,\Sigma_{\infty})$ in \Cref{sec:shao_self-normalized}.

\begin{theorem}    
\label{th:bound_kolmogorov_dist_pr_sigma_n}
Assume \Cref{ass:L-smooth}, \Cref{ass:hessian_Lipschitz_ball}, \Cref{ass:noise_decomposition}($4$), and \Cref{ass:step_size}. Then, for $Y \sim \mathcal{N}(0,\Id_d)$,
\begin{equation}
\label{eq:kolm-dist}
\kolmogorov\!\left(\sqrt{n}\,\Sigma_{n}^{-1/2}(\bar{\theta}_n - \thetas),\, Y\right) 
\leq \frac{\ConstC_4}{\sqrt{n}} 
+ \frac{\ConstC_5}{n^{\gamma - 1/2}} 
+ \frac{\ConstC_6}{n^{\gamma/2}} \eqsp,
\end{equation}
where $\ConstC_4,\ConstC_5,\ConstC_6$ are defined in \Cref{sec:Gar_real_world_proof}, equation~\eqref{eq:def_Const_M_3_i}. Moreover, since $\Sigma_n$ is non-degenerate and the image of a convex set under a non-degenerate linear transformation is convex,
\begin{multline}
\kolmogorov\!\left(\sqrt{n}\,\Sigma_n^{-1/2}(\bar{\theta}_n - \thetas),\, Y\right) 
\\= \kolmogorov\!\left(\sqrt{n}(\bar{\theta}_n - \thetas),\, \Sigma_n^{1/2} Y\right).
\end{multline}
\end{theorem}
\begin{remark}
The constants $\ConstC_4,\ConstC_5,\ConstC_6$ in \Cref{th:bound_kolmogorov_dist_pr_sigma_n} depend on the problem dimension $d$ and on the parameters specified in \Cref{ass:L-smooth}--\Cref{ass:noise_decomposition}($4$)--\Cref{ass:hessian_Lipschitz_ball}--\Cref{ass:step_size}. Moreover, $\ConstC_5$ depends on $\norm{\theta_0 - \thetas}$. To make the dependence on $d$ explicit, we assume the natural scaling $\sigma_2,\ \sigma_4,\ \norm{\theta_0 - \thetas} \lesssim \sqrt{d}$. Under this assumption, \Cref{th:bound_kolmogorov_dist_pr_sigma_n} implies
\begin{equation}
   \kolmogorov\!\left(\sqrt{n}\,\Sigma_{n}^{-1/2}(\bar{\theta}_n - \thetas),\, Y\right)  
   \lesssim \frac{d^2}{\sqrt{n}} + \frac{d^{3/2}}{n^{\gamma - 1/2}} + \frac{d^{3/2}}{n^{\gamma/2}} \eqsp.
\end{equation}
Here, the notation $\lesssim$ indicates that the inequality holds up to constants independent of $n$ and $d$.
\end{remark}

\begin{remark} 
When $\gamma \to 1$, the correction terms in \eqref{eq:kolm-dist} scale as $\mathcal{O}(1/\sqrt{n})$, yielding an overall approximation rate approaching $1/\sqrt{n}$. For $\gamma \in (0,1)$, we have $1/n^{\gamma/2} < 1/n^{\gamma - 1/2}$, so the term $\ConstC_5/n^{\gamma - 1/2}$ dominates. We retain both terms in \eqref{eq:kolm-dist}, as they correspond to the moments of the statistics
\[
\frac{1}{\sqrt{n}} \sum_{i=1}^{n-1} Q_i H(\theta_{i-1}) 
\quad \text{and} \quad 
\frac{1}{\sqrt{n}} \sum_{i=1}^{n-1} Q_i g(\theta_{i-1}, \xi_i),
\]
respectively. The first of these has nonzero mean, since $H(\theta_{i-1})$ is quadratic in $\norm{\theta_{i-1} - \thetas}^2$. In the case of constant step-size SGD, this term can be corrected using the Richardson–Romberg technique \citep{durmus2020biassgd,SheshukovaICLR}. However, it is unclear if this technique can be extended to the diminishing step-size regime.
\end{remark}
\begin{proof}[Proof sketch of \Cref{th:bound_kolmogorov_dist_pr_sigma_n}] 
The starting point is the decomposition \eqref{eq:linear and nonlinear terms}, which expresses 
$\sqrt{n}(\bar{\theta}_n - \thetas) = W + D$ as the sum of a linear term $W$ and a nonlinear remainder $D$. Establishing a Gaussian approximation for this statistic therefore reduces to analyzing the interplay between the leading linear part and the error term. Our analysis follows the framework of \citep{shao2022berry}. Consider independent random variables $X_1,\ldots,X_n$ taking values in a measurable space $\mathcal{X}$, and a $d$-dimensional statistic $T=T(X_1,\ldots,X_n)$ that admits the decomposition
\begin{align}
\textstyle
W = \sum_{\ell=1}^n Z_\ell, \quad D= T - W,   \label{eq:W-def}
\end{align}
with $Z_\ell = r_\ell(X_\ell)$ for measurable maps $r_\ell : \mathcal{X} \to \rset^d$. The component $W$ captures the linear structure, while $D$ accounts for the nonlinear correction which is treated as an error term, assumed to be ``small'' relative to $W$ in an appropriate sense.  

Suppose $\PE[Z_\ell]=0$ and $\sum_{\ell=1}^n \PE[Z_\ell Z_\ell^\top] = \Id_d$. Let  
$\Upsilon_n = \sum_{\ell=1}^n \PE[\|Z_\ell\|^3]$.
Then, for $Y \sim \mathcal{N}(0,\Id_d)$, we have
\begin{multline}
\label{eq:shao_zhang_bound_main}
\kolmogorov(T, Y) 
\leq 259 d^{1/2} \Upsilon_n 
+ 2 \PE[\|W\| \|D\|] \\
+ 2 \sum_{\ell=1}^n \PE[\|Z_\ell\| \,\|D - D^{(\ell)}\|],
\end{multline}
where $D^{(\ell)} = D(X_1,\ldots,X_{\ell-1},X_\ell',X_{\ell+1},\ldots,X_n)$ and $X_\ell'$ is an independent copy of $X_\ell$. This result follows from \citep[Theorem~2.1]{shao2022berry}. The bound \eqref{eq:shao_zhang_bound_main} extends to the case $\sum_{\ell=1}^n \PE[Z_\ell Z_\ell^\top] = \Sigma \succ 0$, as shown in \citep[Corollary~2.3]{shao2022berry}. To apply this result, we take $X_i = \xi_i$, $Z_\ell = h(X_\ell)$, and let $\xi_i'$ be an i.i.d.\ copy of $\xi_i$. The key step is to bound $\PE^{1/2}[\|D(\xi_1,\ldots,\xi_{n-1})\|^2]$ and $\PE^{1/2}[\|D - D_i'\|^2]$. Detailed proof is given in \Cref{sec:Gar_real_world_proof}.
\end{proof}

\subsection{Gaussian approximation in the bootstrap world} 
\label{GAR boot world sec}
In the main result of this section, we study the Gaussian approximation for $\sqrt{n}(\bar{\theta}_{n}^\boot - \bar{\theta}_n)$ w.r.t. $\PPb$, where the target distribution is an appropriately chosen normal law. Although this result is similar in its nature to \Cref{th:bound_kolmogorov_dist_pr_sigma_n}, it requires to address additional challenges that arise in the "bootstrap world". Our first steps are the same as in \eqref{eq: sgd_reccurence_with_remainder_main} and \eqref{eq:sgd_bootstrap}:
\begin{equation}
\begin{split}
&\theta_{k}^{\boot}-\theta_{k}  = (\Id-\alpha_{k}G)(\theta_{k-1}^{\boot}-\theta_{k-1}) \\
&- \alpha_k\bigl(H(\theta_{k-1}^{\boot}) +g(\theta_{k-1}^{\boot}, \xi_k) \\& \qquad \qquad \qquad-  H(\theta_{k-1}) -g(\theta_{k-1}, \xi_k)\bigr)\\&- \alpha_k(w_k-1)(G(\theta_{k-1}^{\boot}-\thetas)+\eta(\xi_k) \\& \qquad \qquad \qquad+ g(\theta_{k-1}^{\boot}, \xi_k) + H(\theta_{k-1}^{\boot}))\eqsp.   
\end{split}
\end{equation}
Taking average of the above identity and rearranging the terms, we obtain a counterpart of \eqref{eq:linear and nonlinear terms}:
\begin{equation}
\label{eq:D_boot-def}
\begin{split}
&\sqrt{n}(\prtheta_n^{\boot}-\prtheta_{n}) = W^\boot + D^\boot \eqsp, \\  
&W^\boot = -\frac{1}{\sqrt{n}} \sum_{i=1}^{n-1}(w_i-1) Q_{i} \eta(\xi_i) \eqsp, \\   
&D^\boot = \sqrt{n}(\prtheta_n^{\boot}-\prtheta_{n}) - W^\boot \eqsp. 
\end{split}
\end{equation}
Here $W^\boot$ is a weighted sum of i.i.d. random variables $\Xi^{n-1}$, such that  $\PEb[W^\boot] = 0$, $\PEb[W^\boot \{W^\boot\}^{\top}] = \Sigma_{n}^\boot$, where
\begin{equation}
\label{Sigman boot def}
\Sigma_{n}^\boot = n^{-1} \sum_{i=1}^{n-1} Q_i \eta (\xi_i) \eta(\xi_i)^\top Q_i^\top\eqsp,
\end{equation}
and $D^\boot$ is a non-linear statistic of $\Xi^{n-1}$. The main difficulty arises when analyzing the conditional distribution of $\sqrt{n}(\bar{\theta}_{n}^\boot - \bar{\theta}_n)$ given the data $\Xi^{n-1}$. The approach of \citep{shao2022berry} requires controlling the second moments of $D^\boot$ and $D^\boot - \{D^\boot\}^{(i)}$ with respect to the bootstrap measure $\PPb$, on a high-probability event under the original measure $\PP$. At the same time, the martingale structure of the summands in $D^\boot$ is lost unless we condition on an extended filtration.
\begin{equation}
\label{eq: extended filtration}
\textstyle 
\widetilde{\F}_i = \sigma(w_1,\ldots w_i, \xi_1,\ldots \xi_i) \eqsp, 1 \le i \le n-1 \eqsp.
\end{equation} 
Therefore, it is not clear whether the approach of \citep{shao2022berry}, discussed in \Cref{sec:gaus_approximation}, can be applied directly. Instead, we rely on a linearization method that exploits high-order moment bounds for the remainder term $D^\boot$; see \Cref{nonlinearapprox} in \Cref{sec: bootstrap validity section}. This necessity motivates the strong bounded-noise assumption in \Cref{ass:bound_noise}. We now state the main result of this section:
\begin{theorem}
\label{GAR bootstrap_main}
Assume \Cref{ass:L-smooth} - \Cref{ass:n_lower_bound}. Then with $\PP$ - probability at least $1 - 2/n$, it holds
\begin{multline}
\sup_{B \in \Conv(\rset^d)}|\PPb( \sqrt n\{\Sigma_n^\boot\}^{-\frac{1}{2}}   (\bar{\theta}_{n}^\boot - \bar{\theta}_n) \in B) - \PPb(Y^\boot \in B)| 
\\ \leq  \frac{M_{3,1}^\boot}{n^{1/2}}  +  \frac{M_{3,2}^\boot  \log n}{n^{\gamma-1/2}}  + \frac{M_{3,3}^\boot \log^{3/2} n}{n^{\gamma/2}} \eqsp,
\end{multline}
where $Y^\boot \sim \mathcal{N}(0, \Id)$ under $\PPb$ and $\{M_{3,i}^\boot\}_{i=1}^3$ are defined in Appendix \ref{subsec:GAR in the bootstrap world}, equation \eqref{M_3i boot def}. 
\end{theorem}
\begin{proof}[Proof sketch of \Cref{GAR bootstrap_main}] We apply the bound
\begin{align}
&\textstyle  \underset{B \in \Conv(\rset^d)}{\sup}|\PPb(\{\Sigma_n^\boot\}^{-\tfrac{1}{2}} (W^\boot + D^\boot) \in B) - \PPb(Y^\boot \in B)| \nonumber  \\ 
&\textstyle \leq \sup_{B \in \Conv(\rset^d)}|\PPb(\{\Sigma_n^\boot\}^{-\tfrac{1}{2}} W^\boot \in B) - \PPb(Y^\boot \in B)| \nonumber \\
&\textstyle \hspace{50pt}+ 2 c_d (\PEb[\norm{\{\Sigma_n^\boot\}^{-\tfrac{1}{2}} D^\boot}^p])^{\tfrac{1}{1+p}}\eqsp, \label{crude bound for nonlinear stat} 
\end{align}
where $c_d \le 4 d^{1/4}$ is the isoperimetric constant of the class of convex sets, see e.g. \citep{bentkus2004}. The proof of \eqref{crude bound for nonlinear stat} is provided in \Cref{nonlinearapprox} in \Cref{subsec:GAR in the bootstrap world}. 
We first control $\bar{\PE}[\norm{D^\boot}^p]$ using Burkholder's inequality, where $\bar{\PE}$ denotes expectation with respect to the product measure $\PP_\xi^{\otimes n} \otimes \PP_w^{\otimes n}$. Applying Markov's inequality then yields $\PP$--high-probability bounds for the behavior of $\PEb[\norm{D^\boot}^p]$. This step requires bounds on $\bar{\PE}^{1/p}\!\left[\norm{\theta_k - \thetas}^{p}\right]$  and $\bar{\PE}^{1/p}\!\left[\norm{\theta_k^{\boot} - \thetas}^{p}\right]$,
$k \in \{1,\ldots,n-1\}$, where $p = c \log n$ for some absolute constant $c$, with polynomial dependence on $p$.

To control the second term on the right-hand side of \eqref{crude bound for nonlinear stat}, we note that $\Sigma_n^\boot$ concentrates around $\Sigma_n$ by the matrix Bernstein inequality (see \Cref{lem:matrix_bernstein}). Hence, there exists a set $\Omega_1$ with $\PP(\Omega_1) \geq 1 - 1/n$ such that $\lambda_{\min}(\Sigma_n^\boot) > 0$ on $\Omega_1$. On this event, one can apply Berry–Esseen type bounds for non-i.i.d.\ sums of random vectors. The full proof is presented in \Cref{subsec:GAR in the bootstrap world}.
\end{proof}

\begin{remark}
The constants $M_{3,1}^\boot, M_{3,2}^\boot, M_{3,3}^\boot$ in \Cref{GAR bootstrap_main} depend on the problem dimension $d$, as well as $\|\theta_0 - \thetas\|$. 
To make the dependence on $d$ explicit, we assume the natural scaling $\|\theta_0 - \thetas\| \lesssim \sqrt{d}$. We also assume that $C_{\xi}$ from \Cref{ass:bound_noise}-(iii) is dimension-free. Under this assumption, \Cref{GAR bootstrap_main} implies
\begin{multline}
\sup_{B \in \Conv(\rset^d)}
\Big| \PPb\!\big( \sqrt{n}\{\Sigma_n^\boot\}^{-\frac{1}{2}}(\bar{\theta}_{n}^\boot - \bar{\theta}_n) \in B \big) 
- \PPb(Y^\boot \in B) \Big|
\\ \lesssim\ 
\frac{d^2}{\sqrt{n}}
+ \frac{d^{5/4}\log n}{n^{\gamma - 1/2}}
+ \frac{d^{3/4}(\log n)^{3/2}}{n^{\gamma/2}} \eqsp.
\end{multline}
Here, the notation $\lesssim$ indicates that the inequality holds up to constants independent of $n$ and $d$.
\end{remark}

\subsection{Rate of convergence in the Polyak--Juditsky central limit theorem}
\label{sec:shao_self-normalized}
We finally address the change from $\Sigma_n$ to $\Sigma_\infty$ and derive convergence rates in the Polyak--Juditsky result \eqref{eq:CLT_fort}. Our argument builds on \Cref{th:bound_kolmogorov_dist_pr_sigma_n} together with the following lemma. 
\par 
\begin{lemma}
\label{lem:bound_kolmogorov_dist_sigma_n_sigma_infty}
Assume \Cref{ass:L-smooth} and \Cref{ass:step_size}. Let $Y \sim \mathcal{N}(0,\Id_d)$. Then the convex distance between the distributions of $\Sigma_n^{1/2} Y$ and $\Sigma_\infty^{1/2} Y$ satisfies
\begin{equation}
\kolmogorov(\Sigma_n^{1/2} Y, \Sigma_\infty^{1/2} Y) 
\leq C_\infty n^{\gamma - 1} \eqsp,
\end{equation}
where the constant $C_\infty$ is defined in \eqref{eq:def_C_infty}.
\end{lemma}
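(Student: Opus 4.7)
The plan is to first translate the Kolmogorov distance between two centered Gaussians into an operator-norm bound on their covariance difference, and then show that $\|\Sigma_n - \Sigma_\infty\|$ decays at rate $n^{\gamma-1}$. For the first step I would invoke a standard Gaussian-vs.-Gaussian comparison: since $\Sigma_\xi$ is non-degenerate by \Cref{ass:noise_decomposition}, so is $\Sigma_\infty = G^{-1}\Sigma_\xi G^{-\top}$, and combining Pinsker's inequality with the explicit formula for the KL divergence between centered Gaussians (or equivalently invoking the Devroye--Mehrabian--Reddad bound on the total variation between centered Gaussians) yields
\[
\kolmogorov(\Sigma_n^{1/2}Y,\Sigma_\infty^{1/2}Y^{\prime}) \leq \tvnorm{\gauss(0,\Sigma_n)-\gauss(0,\Sigma_\infty)} \lesssim \sqrt{d}\,\|\Sigma_\infty^{-1}\|\,\|\Sigma_n-\Sigma_\infty\|,
\]
provided the right-hand side is smaller than a universal constant (otherwise the claim is immediate since $\kolmogorov \leq 1$).

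The second step is to control the operator norm $\|\Sigma_n - \Sigma_\infty\|$. Using $\Sigma_n = n^{-1}\sum_{k=1}^{n-1}Q_k\Sigma_\xi Q_k^{\top}$, decompose
\[
\Sigma_n - \Sigma_\infty = -\tfrac{1}{n}\Sigma_\infty + \tfrac{1}{n}\sum_{k=1}^{n-1}\bigl(Q_k\Sigma_\xi Q_k^{\top} - G^{-1}\Sigma_\xi G^{-\top}\bigr).
\]
The first term is $O(1/n)$ and hence $O(n^{\gamma-1})$. For each summand, the identity $AMA^{\top} - CMC^{\top} = (A-C)MA^{\top} + CM(A-C)^{\top}$ with $A = Q_k$, $C = G^{-1}$, $M = \Sigma_\xi$ gives $\|Q_k\Sigma_\xi Q_k^{\top} - \Sigma_\infty\| \leq \|\Sigma_\xi\|(\|Q_k\|+\|G^{-1}\|)\,\|Q_k-G^{-1}\|$, and a uniform bound $\|Q_k\| \leq 1/\mu$ follows from $\mu\Id_d \preceq G$ and the scalar telescoping identity $\alpha_k\sum_{j=k}^{n-1}\prod_{i=k+1}^j(1-\alpha_i\mu) = \mu^{-1}\bigl(1-\prod_{i=k}^{n-1}(1-\alpha_i\mu)\bigr)$. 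Everything therefore reduces to estimating $n^{-1}\sum_{k=1}^{n-1}\|Q_k - G^{-1}\|$.

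The third and most delicate step is the bound on $\|Q_k - G^{-1}\|$. A useful observation is that $Q_k$ is a polynomial in $G$ and hence commutes with $G$, which reduces the problem to estimating $|q_k(\lambda)-1/\lambda|$ for eigenvalues $\lambda \in [\mu,L_1]$ of $G$, where $q_k(\lambda)=\alpha_k\sum_{j=k}^{n-1}\prod_{i=k+1}^j(1-\alpha_i\lambda)$. Setting $r_k(\lambda) = q_k(\lambda)-1/\lambda$, the scalar recursion $q_k = \alpha_k + (\alpha_k/\alpha_{k+1})(1-\alpha_{k+1}\lambda)q_{k+1}$ gives, upon unrolling with $r_{n-1}(\lambda) = \alpha_{n-1} - 1/\lambda$,
\[
r_k(\lambda) = \tfrac{\alpha_k}{\alpha_{n-1}}\prod_{i=k+1}^{n-1}(1-\alpha_i\lambda)\,r_{n-1}(\lambda) + \sum_{i=k}^{n-2}\tfrac{\alpha_k}{\alpha_i}\prod_{m=k+1}^i(1-\alpha_m\lambda)\cdot\tfrac{\alpha_i-\alpha_{i+1}}{\alpha_{i+1}\lambda}.
\]
Averaging $|r_k(\mu)|$ over $k$ then splits into a \emph{boundary} piece involving $(\alpha_k/\alpha_{n-1})\prod_{i=k+1}^{n-1}(1-\alpha_i\mu)$ and a \emph{step-size drift} piece involving $(\alpha_i-\alpha_{i+1})/\alpha_{i+1} = O(1/(k_0+i))$. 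For the boundary piece, substituting $k = n-s$ and using $\alpha_i = c_0/(k_0+i)^\gamma$ shows that the product exhibits exponential decay on the scale $s \sim (k_0+n)^\gamma$, so the unnormalized sum is $O(n^\gamma)$ and the $1/n$ factor yields $O(n^{\gamma-1})$. For the drift piece, swapping the order of summation and applying the telescoping identity $\sum_{k=1}^i\alpha_k\prod_{m=k+1}^i(1-\alpha_m\mu) \leq 1/\mu$ reduces the double sum to $\sum_{i=1}^{n-2}(k_0+i)^{\gamma-1} = O(n^\gamma)$, again giving $O(n^{\gamma-1})$. The principal obstacle is this last aggregation: one must carefully balance the polynomially-growing prefactor $\alpha_k/\alpha_{n-1}$ against the exponentially-contracting product $\prod_i(1-\alpha_i\mu)$ on a boundary layer of width $\sim n^\gamma$ near step $n$, and it is exactly this boundary layer that produces the stated $n^{\gamma-1}$ rate.
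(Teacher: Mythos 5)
Your route is sound and does deliver the stated $n^{\gamma-1}$ rate, but it is organized differently from the paper's argument. The paper also starts from the Gaussian comparison of \Cref{Pinsker} and then bounds $\norm{\Sigma_n-\Sigma_\infty}$, but it does so matrix-wise: it splits $\Sigma_n-\Sigma_\infty$ into a linear part $D_1$ and a quadratic part $D_2$, uses the exact identities \eqref{repr:qtminusa} and \eqref{repr:sumqtminusa} (so the cross term is controlled through $\norm{\sum_j G^{(\alpha)}_{1:j}}\lesssim n^\gamma$ and the quadratic term through $\norm{S_i}\lesssim (i+k_0)^{\gamma-1}$ and $\sum_i\norm{G^{(\alpha)}_{i:n-1}}^2\lesssim n^\gamma$), and never diagonalizes $G$. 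You instead reduce everything to the scalar quantities $|q_k(\lambda)-1/\lambda|$, $\lambda\in[\mu,L_1]$, via the symmetry of $G=\nabla^2 f(\thetas)$, and then bound $n^{-1}\sum_k\norm{Q_k-G^{-1}}$ by the triangle inequality; your unrolled scalar recursion is exactly the scalar shadow of \eqref{repr:qtminusa} (your ``drift'' piece is $S_i$, your ``boundary'' piece is $G^{-1}G^{(\alpha)}_{i:n-1}$), and since both pieces sum to $O(n^\gamma)$, no cancellation across $k$ is actually needed for the $n^{\gamma-1}$ rate — so your cruder aggregation loses only constants. What the paper's formulation buys is that it does not use diagonalizability of $G$ (it transfers verbatim to non-symmetric Hurwitz matrices as in the LSA/TD literature); what yours buys is that all estimates become elementary one-dimensional computations. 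One small omission: you only estimate $|r_k(\mu)|$; you should state that $\lambda=\mu$ is the worst case over $[\mu,L_1]$ (each factor $1-\alpha_i\lambda\ge 0$ under \Cref{ass:step_size} and is maximized at $\lambda=\mu$, as is $(\alpha_i-\alpha_{i+1})/(\alpha_{i+1}\lambda)$ and $|\alpha_{n-1}-1/\lambda|$), which is true but not automatic.

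One step as written is wrong, though harmlessly so: the claimed ``telescoping identity''
\begin{equation}
\alpha_k\sum_{j=k}^{n-1}\prod_{i=k+1}^{j}(1-\alpha_i\mu)=\mu^{-1}\Bigl(1-\prod_{i=k}^{n-1}(1-\alpha_i\mu)\Bigr)
\end{equation}
holds only for constant step sizes. For decreasing $\alpha_i$ the telescoping argument uses $\alpha_{j+1}\prod_{i=k+1}^{j}(1-\alpha_i\mu)=\mu^{-1}\bigl(\prod_{i=k+1}^{j}-\prod_{i=k+1}^{j+1}\bigr)(1-\alpha_i\mu)$, and since $\alpha_k\ge\alpha_{j+1}$ this yields a \emph{lower} bound on $q_k(\mu)$, not the upper bound $\norm{Q_k}\le 1/\mu$ you want (this is exactly the direction of \eqref{eq:Q_i_lower_bound}). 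The fix is immediate: all you need in the prefactor $(\norm{Q_k}+\norm{G^{-1}})$ is some uniform constant bound on $\norm{Q_k}$, and that is supplied by the exponential-sum estimate $\lambda_{\max}(Q_i)\le C_Q$ of \Cref{lem:bound_Q_i_and_Sigma_n}; the rate $n^{\gamma-1}$ is unaffected, only the constant changes.
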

Combining \Cref{th:bound_kolmogorov_dist_pr_sigma_n} with \Cref{lem:bound_kolmogorov_dist_sigma_n_sigma_infty} and applying the triangle inequality yields the following result on the closeness of $\sqrt{n}(\bar{\theta}_n - \thetas)$ to $\mathcal{N}(0,\Sigma_\infty)$.
\begin{theorem}
\label{cor:berry-esseen}
Assume \Cref{ass:L-smooth}, \Cref{ass:hessian_Lipschitz_ball}, \Cref{ass:noise_decomposition}($4$), \Cref{ass:step_size}. Then, with $Y \sim \mathcal{N}(0,\Id_d)$ it holds that 
    \begin{equation}
    \label{eq:Berry-Esseen_Bound_Sigma_infty}
\textstyle        \kolmogorov(\sqrt{n}(\bar\theta_n -\thetas), \Sigma_\infty^{1/2} Y)  \leq 
        \frac{\ConstC_4}{\sqrt{n}} + \frac{\ConstC_5 + \ConstC_{6}}{n^{\gamma - 1/2}} + \frac{C_\infty}{ n^{1 - \gamma}} \eqsp,
   \end{equation}
   where $\ConstC_4, \ConstC_5$ and $\ConstC_6$ are given in Theorem \ref{th:bound_kolmogorov_dist_pr_sigma_n}. 
\end{theorem}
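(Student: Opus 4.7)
The statement follows essentially by combining the two main ingredients already established: the self-normalized Berry-Esseen type bound of \Cref{th:bound_kolmogorov_dist_pr_sigma_n} with the bound on the Gaussian-to-Gaussian convex distance from \Cref{lem:bound_kolmogorov_dist_sigma_n_sigma_infty}. My plan is to apply the triangle inequality for the convex distance $\kolmogorov$ with $\Sigma_n^{1/2} Y$ as an intermediate reference law, then invoke the two results verbatim.

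More precisely, let $Y, Y' \sim \gauss(0,\Id_d)$ be independent. The triangle inequality for $\kolmogorov$ (which is a genuine pseudometric on distributions, since each class $\Conv(\rset^d)$ is closed under itself for set-membership events) gives
\begin{equation*}
\kolmogorov\bigl(\sqrt{n}(\bar\theta_n - \thetas),\, \Sigma_\infty^{1/2} Y'\bigr) \leq \kolmogorov\bigl(\sqrt{n}(\bar\theta_n - \thetas),\, \Sigma_n^{1/2} Y\bigr) + \kolmogorov\bigl(\Sigma_n^{1/2} Y,\, \Sigma_\infty^{1/2} Y'\bigr).
\end{equation*}
For the first summand, the identity from \Cref{rem:remark_1} translates the self-normalized bound of \Cref{th:bound_kolmogorov_dist_pr_sigma_n} into the bound $\ConstC_1/\sqrt{n} + \ConstC_2/n^{\gamma-1/2} + \ConstC_3/n^{\gamma/2}$. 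For the second summand, \Cref{lem:bound_kolmogorov_dist_sigma_n_sigma_infty} directly yields the bound $C_\infty n^{\gamma-1} = C_\infty / n^{1-\gamma}$. Summing the two contributions gives exactly the claimed inequality \eqref{eq:Berry-Esseen_Bound_Sigma_infty}.

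Since the step sizes of \Cref{ass:step_size} are fixed with $\gamma \in (1/2,1)$ and all hypotheses needed by \Cref{th:bound_kolmogorov_dist_pr_sigma_n} and \Cref{lem:bound_kolmogorov_dist_sigma_n_sigma_infty} coincide with those assumed here, no further verification is needed. There is no real obstacle in this step: all of the work has already been done upstream, in controlling the non-linear remainder $D$ in \eqref{eq: linear part} (which enters $\ConstC_1, \ConstC_2, \ConstC_3$) and in bounding $\frobnorm{\Sigma_n - \Sigma_\infty}$ (which enters $C_\infty$). The only minor point to be careful about is that the triangle inequality is applied to convex distance rather than to total variation, but $\kolmogorov$ inherits the triangle inequality since it is defined as the supremum of $|\PP(X\in B) - \PP(Y\in B)|$ over a fixed class of Borel sets $\Conv(\rset^d)$.
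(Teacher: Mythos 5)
Your proposal is correct and follows exactly the paper's argument: triangle inequality for $\kolmogorov$ with $\Sigma_n^{1/2}Y$ as the intermediate law, using \Cref{rem:remark_1} to convert the self-normalized bound of \Cref{th:bound_kolmogorov_dist_pr_sigma_n}, and \Cref{lem:bound_kolmogorov_dist_sigma_n_sigma_infty} for the Gaussian comparison term. No gaps to report.
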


\vspace{-2mm}
\paragraph{Discussion.} 
\Cref{th:bound_kolmogorov_dist_pr_sigma_n} shows that the normal approximation with $\mathcal{N}(0,\Sigma_n)$ improves as the step sizes $\alpha_k$ become less aggressive, i.e., as $\gamma \to 1$. At the same time, \Cref{cor:berry-esseen} highlights a trade-off: the rate at which $\Sigma_n$ converges to $\Sigma_\infty$ also influences the overall approximation quality. Optimizing the bound in \eqref{eq:Berry-Esseen_Bound_Sigma_infty} with respect to $\gamma$ yields the optimal choice $\gamma = 3/4$, which leads to the approximation rate
\begin{multline}
\label{eq:optimal_Berry-Esseen_Bound_Sigma_infty} 
\kolmogorov\!\left(\sqrt{n}(\bar{\theta}_n - \thetas), \Sigma_\infty^{1/2} Y\right) \\ 
\leq \frac{\ConstC_1^{\prime}}{n^{1/4}} 
+ \frac{\ConstC_2^{\prime}}{\sqrt{n}}\big(\norm{\theta_0-\thetas}+\norm{\theta_0-\thetas}^2\big) \eqsp,
\end{multline}
where $\ConstC_1^{\prime}$ and $\ConstC_2^{\prime}$ are instance-dependent constants (independent of $\norm{\theta_0-\thetas}$) that can be derived from \Cref{cor:berry-esseen}. This result enables a non-asymptotic assessment of methods for constructing confidence intervals based on direct estimation of $\Sigma_\infty$, such as those studied in \citep{chen2020aos,zhu2023online_cov_matr}.
\vspace{-2mm}
\paragraph{Lower bounds}
We provide a lower bound showing that the bound in \Cref{cor:berry-esseen} is tight in certain regimes of step size decay power $\gamma \in (1/2, 1)$. To this end, we consider the minimization problem \eqref{eq:stoch_minimization} with $f(\theta) = \theta^2/2$ and $\theta_0 = 0$. In this case, $\thetas = 0$. We use an additive noise model, where the stochastic gradient oracles $F(\theta, \xi)$ are given by $F(\theta, \xi) = \theta + \xi$, $\xi \sim \mathcal{N}(0, 1)$. Unrolling \eqref{eq:sgd_recursion_main}, we get
\begin{equation}
\label{eq:theta_k_recursion}
\textstyle 
\sqrt{n}\bar\theta_{n} = -n^{-1/2} \sum_{j=1}^{n-1}Q_{j} \xi_{j}
\end{equation}
with $Q_{j} = \alpha_{j}\sum_{k=j}^{n-1}\prod_{\ell=j+1}^{k}(1-\alpha_{\ell})$, showing that
$\sqrt{n}(\bar\theta_{n} - \thetas) \sim \mathcal{N}(0,\sigma^2_{n,\gamma})$ with $\sigma^2_{n,\gamma} = n^{-1} \sum_{j=1}^{n-1}Q_{j}^2$. 
\Cref{lem:bound_kolmogorov_dist_sigma_n_sigma_infty} (see also \eqref{eq:difference_sigma_n_infty_appendix} in the Appendix), we have $G = 1, \Sigma_{\infty} = 1$, and $\sigma^2_{n,\gamma} \to 1$ as $n \to \infty\eqsp$. Moreover, the following lower bound holds:
\begin{theorem}
\label{prop:lower_bounds}
Consider the sequence $\{\theta_k\}_{k \in \nset}$ defined by the recurrence \eqref{eq:theta_k_recursion} with $\alpha_{j} = c_0/(1+j)^{\gamma}$. Then it holds, for the number of observations $n$ sufficiently large, that 
\begin{equation}
\label{eq:variance_lower_bound}
\textstyle
|\sigma^2_{n,\gamma} - 1| > \frac{C_1(\gamma,c_0)}{n^{1-\gamma}}\eqsp, 
\end{equation}
where the constant $C_1(\gamma,c_0)$ depends only upon $c_0$ and $\gamma$. Moreover, for $n$ large enough
\begin{equation}
\label{eq:Kolmogorov_lower_bound}
\textstyle
\kolmogorov(\sqrt{n}(\bar\theta_{n} - \thetas), \mathcal{N}(0,1)) > \frac{C_2(\gamma,c_0)}{n^{1-\gamma}}\eqsp.
\end{equation}
\end{theorem}
\textbf{Discussion.} Proof of \Cref{prop:lower_bounds} is provided in \Cref{sec:proof_lower_bound}, along with simple numerical simulations indicating the tightness of the lower bound \eqref{eq:variance_lower_bound}. Note that the bound \eqref{eq:Kolmogorov_lower_bound} shows that the distribution of $\sqrt{n}(\bar\theta_{n} - \thetas)$ cannot be approximated by $\mathcal{N}(0,\Sigma_{\infty})$ at a rate faster than $1/n^{1-\gamma}$. Moreover, it shows that the rate of normal approximation in \Cref{cor:berry-esseen} cannot be improved when $\gamma \in [3/4, 1)$. This fact is extremely important when considering the bootstrap validity result in \Cref{th:bootstrap_validity} and the normal approximation in \Cref{th:bound_kolmogorov_dist_pr_sigma_n}. Indeed, both results suggest that normal approximation rates of order up to $1/\sqrt{n}$ can be achieved as $\gamma \to 1$, but they require using a different covariance matrix $\Sigma_n$, corresponding to the linearized recurrence in \eqref{eq:sigma_n_def}. At the same time, in the regime $\gamma \to 1$, the approximation by $\mathcal{N}(0, \Sigma_{\infty})$ can be too slow. It is an interesting and, to the best of our knowledge, open question to provide lower bounds analogous to \Cref{prop:lower_bounds} which show the tightness of other summands in \Cref{cor:berry-esseen} in the regime $1/2 < \gamma < 3/4$.
\section{CONCLUSION}
In our paper, we performed the fully non-asymptotic analysis of the multiplier bootstrap procedure for SGD applied to strongly convex minimization problems. We showed that the algorithm can achieve approximation rates in convex distances of order up to $1/\sqrt{n}$. We highlight the fact that the validity of the multiplier bootstrap procedure does not require one to consider Berry-Esseen bounds with the asymptotic covariance matrix $\Sigma_{\infty}$, which is in sharp contrast to the methods that require direct estimation of $\Sigma_{\infty}$.

\section*{Acknowledgment}
This work is an output of a research project HSE-BR-2025-019 implemented as part of the Basic Research Program at HSE University. This research was supported in part through computational resources of HPC facilities at HSE University \citep{kostenetskiy2021hpc}.

\clearpage
\newpage

\bibliographystyle{plainnat}
\bibliography{references}

\begin{thebibliography}{61}
\providecommand{\natexlab}[1]{#1}
\providecommand{\url}[1]{\texttt{#1}}
\expandafter\ifx\csname urlstyle\endcsname\relax
  \providecommand{\doi}[1]{doi: #1}\else
  \providecommand{\doi}{doi: \begingroup \urlstyle{rm}\Url}\fi

\bibitem[Agrawalla et~al.(2023)Agrawalla, Balasubramanian, and
  Ghosal]{agrawalla2023high}
Bhavya Agrawalla, Krishnakumar Balasubramanian, and Promit Ghosal.
\newblock High-dimensional {C}entral {L}imit {T}heorems for {L}inear
  {F}unctionals of {O}nline {L}east-{S}quares {SGD}.
\newblock \emph{arXiv preprint arXiv:2302.09727}, 2023.

\bibitem[Anastasiou et~al.(2019)Anastasiou, Balasubramanian, and
  Erdogdu]{pmlr-v99-anastasiou19a}
Andreas Anastasiou, Krishnakumar Balasubramanian, and Murat~A. Erdogdu.
\newblock Normal {A}pproximation for {S}tochastic {G}radient {D}escent via
  {N}on-{A}symptotic {R}ates of {M}artingale {CLT}.
\newblock In Alina Beygelzimer and Daniel Hsu, editors, \emph{Proceedings of
  the Thirty-Second Conference on Learning Theory}, volume~99 of
  \emph{Proceedings of Machine Learning Research}, pages 115--137. PMLR, 25--28
  Jun 2019.
\newblock URL \url{https://proceedings.mlr.press/v99/anastasiou19a.html}.

\bibitem[Anderson and Qiu(1997)]{AndersonGammaFunc}
G.~D. Anderson and S.-L. Qiu.
\newblock A monotoneity property of the gamma function.
\newblock \emph{Proceedings of the American Mathematical Society}, 125\penalty0
  (11):\penalty0 3355--3362, 1997.
\newblock ISSN 00029939, 10886826.
\newblock URL \url{http://www.jstor.org/stable/2162408}.

\bibitem[Ball(1993)]{ball_reverse_1993}
Keith Ball.
\newblock The reverse isoperimetric problem for {Gaussian} measure.
\newblock \emph{Discrete \& Computational Geometry}, 10\penalty0 (4):\penalty0
  411--420, October 1993.
\newblock ISSN 1432-0444.
\newblock \doi{10.1007/BF02573986}.
\newblock URL \url{https://doi.org/10.1007/BF02573986}.

\bibitem[{Barsov} and {Ulyanov}(1986)]{BarUly86}
S.~{Barsov} and V.~{Ulyanov}.
\newblock {Estimates for the closeness of Gaussian measures}.
\newblock \emph{Dokl. Akad. Nauk SSSR}, 291\penalty0 (2):\penalty0 273--277,
  1986.

\bibitem[Bentkus(2003)]{bentkus2004}
V.~Bentkus.
\newblock On the dependence of the {B}erry–{E}sseen bound on dimension.
\newblock \emph{Journal of Statistical Planning and Inference}, 113\penalty0
  (2):\penalty0 385--402, 2003.
\newblock ISSN 0378-3758.
\newblock \doi{https://doi.org/10.1016/S0378-3758(02)00094-0}.
\newblock URL
  \url{https://www.sciencedirect.com/science/article/pii/S0378375802000940}.

\bibitem[Benveniste et~al.(2012)Benveniste, M{\'e}tivier, and
  Priouret]{benveniste2012adaptive}
A.~Benveniste, M.~M{\'e}tivier, and P.~Priouret.
\newblock \emph{Adaptive algorithms and stochastic approximations}, volume~22.
\newblock Springer Science \& Business Media, 2012.

\bibitem[Bubeck et~al.(2015)]{bubeck2015convex}
S{\'e}bastien Bubeck et~al.
\newblock Convex optimization: {A}lgorithms and complexity.
\newblock \emph{Foundations and Trends{\textregistered} in Machine Learning},
  8\penalty0 (3-4):\penalty0 231--357, 2015.

\bibitem[Cai et~al.(2025)Cai, Hu, Sun, and Wu]{caitime}
Leheng Cai, Qirui Hu, Juntao Sun, and Shuyuan Wu.
\newblock Time-uniform and {A}symptotic {C}onfidence {S}equence of {Q}uantile
  under {L}ocal {D}ifferential {P}rivacy.
\newblock In \emph{The Thirty-ninth Annual Conference on Neural Information
  Processing Systems}, 2025.

\bibitem[Chang et~al.(2026)Chang, Chen, Lai, Li, Liu, and
  Zhang]{chen2022online}
Xiangyu Chang, Xi~Chen, Zehua Lai, He~Li, Zhihong Liu, and Yichen Zhang.
\newblock Online {S}tatistical {I}nference for {C}ontextual {B}andits via
  {S}tochastic {G}radient {D}escent.
\newblock \emph{Journal of the American Statistical Association}, 0\penalty0
  (ja):\penalty0 1--24, 2026.
\newblock \doi{10.1080/01621459.2026.2621503}.
\newblock URL \url{https://doi.org/10.1080/01621459.2026.2621503}.

\bibitem[Chen et~al.(2021)Chen, Lu, and Song]{chen2021statistical}
Haoyu Chen, Wenbin Lu, and Rui Song.
\newblock Statistical inference for online decision making via stochastic
  gradient descent.
\newblock \emph{Journal of the American Statistical Association}, 116\penalty0
  (534):\penalty0 708--719, 2021.

\bibitem[Chen et~al.(2023)Chen, Keilbar, and Wu]{chen2023recursive}
Likai Chen, Georg Keilbar, and Wei~Biao Wu.
\newblock Recursive quantile estimation: {N}on-asymptotic confidence bounds.
\newblock \emph{Journal of Machine Learning Research}, 24\penalty0
  (91):\penalty0 1--25, 2023.

\bibitem[Chen et~al.(2025)Chen, Keilbar, and Wu]{chen2025smoothed}
Likai Chen, Georg Keilbar, and Wei~Biao Wu.
\newblock Smoothed {SGD} for quantiles: Bahadur representation and {G}aussian
  approximation.
\newblock \emph{arXiv preprint arXiv:2505.13299}, 2025.

\bibitem[Chen et~al.(2020)Chen, Lee, Tong, and Zhang]{chen2020aos}
Xi~Chen, Jason~D. Lee, Xin~T. Tong, and Yichen Zhang.
\newblock {Statistical inference for model parameters in stochastic gradient
  descent}.
\newblock \emph{The Annals of Statistics}, 48\penalty0 (1):\penalty0 251 --
  273, 2020.
\newblock \doi{10.1214/18-AOS1801}.
\newblock URL \url{https://doi.org/10.1214/18-AOS1801}.

\bibitem[Chernozhukov et~al.(2013)Chernozhukov, Chetverikov, and
  Kato]{Chernozhukov2013}
Victor Chernozhukov, Denis Chetverikov, and Kengo Kato.
\newblock Gaussian approximations and multiplier bootstrap for maxima of sums
  of high-dimensional random vectors.
\newblock \emph{Ann. Statist.}, 41\penalty0 (6):\penalty0 2786--2819, 2013.
\newblock ISSN 0090-5364.
\newblock \doi{10.1214/13-AOS1161}.
\newblock URL \url{https://doi.org/10.1214/13-AOS1161}.

\bibitem[Defazio et~al.(2014)Defazio, Bach, and
  Lacoste-Julien]{defazio2014saga}
Aaron Defazio, Francis Bach, and Simon Lacoste-Julien.
\newblock {SAGA}: A fast incremental gradient method with support for
  non-strongly convex composite objectives.
\newblock \emph{Advances in neural information processing systems}, 27, 2014.

\bibitem[Devroye et~al.(2018)Devroye, Mehrabian, and Reddad]{Devroye2018}
Luc Devroye, Abbas Mehrabian, and Tommy Reddad.
\newblock The total variation distance between high-dimensional gaussians with
  the same mean.
\newblock \emph{arXiv preprint arXiv:1810.08693}, 2018.

\bibitem[Dieuleveut et~al.(2020)Dieuleveut, Durmus, and
  Bach]{durmus2020biassgd}
Aymeric Dieuleveut, Alain Durmus, and Francis Bach.
\newblock {Bridging the gap between constant step size stochastic gradient
  descent and {M}arkov chains}.
\newblock \emph{The Annals of Statistics}, 48\penalty0 (3):\penalty0 1348 --
  1382, 2020.
\newblock \doi{10.1214/19-AOS1850}.
\newblock URL \url{https://doi.org/10.1214/19-AOS1850}.

\bibitem[Durmus et~al.(2021)Durmus, Moulines, Naumov, Samsonov, and
  Wai]{durmus2021stability}
Alain Durmus, Eric Moulines, Alexey Naumov, Sergey Samsonov, and Hoi-To Wai.
\newblock On the {S}tability of {R}andom {M}atrix {P}roduct with {M}arkovian
  {N}oise: {A}pplication to {L}inear {S}tochastic {A}pproximation and {TD}
  learning.
\newblock In Mikhail Belkin and Samory Kpotufe, editors, \emph{Proceedings of
  Thirty Fourth Conference on Learning Theory}, volume 134 of \emph{Proceedings
  of Machine Learning Research}, pages 1711--1752. PMLR, 15--19 Aug 2021.
\newblock URL \url{https://proceedings.mlr.press/v134/durmus21a.html}.

\bibitem[Efron(1992)]{efron1992bootstrap}
Bradley Efron.
\newblock Bootstrap methods: another look at the jackknife.
\newblock In \emph{Breakthroughs in statistics: Methodology and distribution},
  pages 569--593. Springer, 1992.

\bibitem[Fang et~al.(2018)Fang, Xu, and Yang]{JMLR:v19:17-370}
Yixin Fang, Jinfeng Xu, and Lei Yang.
\newblock Online bootstrap confidence intervals for the stochastic gradient
  descent estimator.
\newblock \emph{Journal of Machine Learning Research}, 19\penalty0
  (78):\penalty0 1--21, 2018.
\newblock URL \url{http://jmlr.org/papers/v19/17-370.html}.

\bibitem[Fedoryuk(1977)]{fedoryuk1977metod}
M.~V. Fedoryuk.
\newblock \emph{Metod perevala}.
\newblock Izdat. ``Nauka'', Moscow, 1977.

\bibitem[Flegal and Jones(2010)]{flegal2010batch}
James~M. Flegal and Galin~L. Jones.
\newblock Batch means and spectral variance estimators in {M}arkov chain
  {M}onte {C}arlo.
\newblock \emph{Ann. Statist.}, 38\penalty0 (2):\penalty0 1034--1070, 04 2010.
\newblock \doi{10.1214/09-AOS735}.
\newblock URL \url{https://doi.org/10.1214/09-AOS735}.

\bibitem[Gabcke(2015)]{gabcke2015neue}
Wolfgang Gabcke.
\newblock Neue herleitung und explizite restabsch{\"a}tzung der
  riemann-siegel-formel.
\newblock 2015.

\bibitem[G\"{o}tze et~al.(2019)G\"{o}tze, Naumov, Spokoiny, and
  Ulyanov]{Bernolli2019}
Friedrich G\"{o}tze, Alexey Naumov, Vladimir Spokoiny, and Vladimir Ulyanov.
\newblock Large ball probabilities, {G}aussian comparison and
  anti-concentration.
\newblock \emph{Bernoulli}, 25\penalty0 (4A):\penalty0 2538--2563, 2019.
\newblock ISSN 1350-7265.
\newblock \doi{10.3150/18-BEJ1062}.
\newblock URL \url{https://doi.org/10.3150/18-BEJ1062}.

\bibitem[Harvey et~al.(2019)Harvey, Liaw, Plan, and Randhawa]{harvey2019tight}
Nicholas~JA Harvey, Christopher Liaw, Yaniv Plan, and Sikander Randhawa.
\newblock Tight analyses for non-smooth stochastic gradient descent.
\newblock In \emph{Conference on Learning Theory}, pages 1579--1613. PMLR,
  2019.

\bibitem[Hazan and Kale(2014)]{hazan2014beyond}
Elad Hazan and Satyen Kale.
\newblock Beyond the regret minimization barrier: optimal algorithms for
  stochastic strongly-convex optimization.
\newblock \emph{The Journal of Machine Learning Research}, 15\penalty0
  (1):\penalty0 2489--2512, 2014.

\bibitem[Jin et~al.(2019)Jin, Netrapalli, Ge, Kakade, and Jordan]{jin2019short}
Chi Jin, Praneeth Netrapalli, Rong Ge, Sham~M Kakade, and Michael~I Jordan.
\newblock A short note on concentration inequalities for random vectors with
  subgaussian norm.
\newblock \emph{arXiv preprint arXiv:1902.03736}, 2019.

\bibitem[Kojevnikov and Song(2022)]{kojevnikov2022berry}
Denis Kojevnikov and Kyungchul Song.
\newblock A {B}erry--{E}sseen bound for vector-valued martingales.
\newblock \emph{Statistics \& Probability Letters}, 186:\penalty0 109448, 2022.

\bibitem[Kostenetskiy et~al.(2021)Kostenetskiy, Chulkevich, and
  Kozyrev]{kostenetskiy2021hpc}
PS~Kostenetskiy, RA~Chulkevich, and VI~Kozyrev.
\newblock Hpc resources of the higher school of economics.
\newblock In \emph{Journal of Physics: Conference Series}, volume 1740, page
  012050. IOP Publishing, 2021.

\bibitem[Kushner and Yin(2003)]{kushner2003stochastic}
Harold Kushner and G~George Yin.
\newblock \emph{Stochastic approximation and recursive algorithms and
  applications}, volume~35.
\newblock Springer Science \& Business Media, 2003.

\bibitem[Lan(2020)]{lan_2020}
Guanghui Lan.
\newblock \emph{First-order and {S}tochastic {O}ptimization {M}ethods for
  {M}achine {L}earning}.
\newblock 01 2020.
\newblock ISBN 978-3-030-39567-4.
\newblock \doi{10.1007/978-3-030-39568-1}.

\bibitem[Li et~al.(2022{\natexlab{a}})Li, Mou, Wainwright, and
  Jordan]{li2022root}
Chris~Junchi Li, Wenlong Mou, Martin Wainwright, and Michael Jordan.
\newblock {ROOT-SGD}: Sharp {N}onasymptotics and {A}symptotic {E}fficiency in a
  {S}ingle {A}lgorithm.
\newblock In Po-Ling Loh and Maxim Raginsky, editors, \emph{Proceedings of
  Thirty Fifth Conference on Learning Theory}, volume 178 of \emph{Proceedings
  of Machine Learning Research}, pages 909--981. PMLR, 02--05 Jul
  2022{\natexlab{a}}.
\newblock URL \url{https://proceedings.mlr.press/v178/li22a.html}.

\bibitem[Li et~al.(2025)Li, Lou, Schmidt-Hieber, and Wu]{li2025statistical}
Jiaqi Li, Zhipeng Lou, Johannes Schmidt-Hieber, and Wei~Biao Wu.
\newblock Statistical {G}uarantees for {H}igh-{D}imensional {S}tochastic
  {G}radient {D}escent.
\newblock \emph{arXiv preprint arXiv:2510.12013}, 2025.

\bibitem[Li et~al.(2022{\natexlab{b}})Li, Liang, Chang, and
  Zhang]{pmlr-v178-li22b}
Xiang Li, Jiadong Liang, Xiangyu Chang, and Zhihua Zhang.
\newblock Statistical estimation and online inference via local sgd.
\newblock In Po-Ling Loh and Maxim Raginsky, editors, \emph{Proceedings of
  Thirty Fifth Conference on Learning Theory}, volume 178 of \emph{Proceedings
  of Machine Learning Research}, pages 1613--1661. PMLR, 02--05 Jul
  2022{\natexlab{b}}.
\newblock URL \url{https://proceedings.mlr.press/v178/li22b.html}.

\bibitem[Madden et~al.(2024)Madden, Dall'Anese, and Becker]{madden2024high}
Liam Madden, Emiliano Dall'Anese, and Stephen Becker.
\newblock High probability convergence bounds for non-convex stochastic
  gradient descent with sub-weibull noise.
\newblock \emph{Journal of Machine Learning Research}, 25\penalty0
  (241):\penalty0 1--36, 2024.

\bibitem[Meketon and Schmeiser(1984)]{meketon1984overlapping}
Marc~S. Meketon and Bruce~W. Schmeiser.
\newblock {O}verlapping {B}atch {M}eans: {S}omething for {N}othing?
\newblock In \emph{Proceedings of the 1984 Winter Simulation Conference}, pages
  227--230, Washington, DC, USA, 1984. IEEE Press.
\newblock URL \url{https://informs-sim.org/wsc84papers/1984_0041.pdf}.

\bibitem[Moulines and Bach(2011)]{moulines2011non}
Eric Moulines and Francis Bach.
\newblock Non-asymptotic analysis of stochastic approximation algorithms for
  machine learning.
\newblock \emph{Advances in neural information processing systems},
  24:\penalty0 451--459, 2011.

\bibitem[Nemirovski et~al.(2009)Nemirovski, Juditsky, Lan, and
  Shapiro]{nemirovski2009robust}
Arkadi Nemirovski, Anatoli Juditsky, Guanghui Lan, and Alexander Shapiro.
\newblock Robust stochastic approximation approach to stochastic programming.
\newblock \emph{SIAM Journal on optimization}, 19\penalty0 (4):\penalty0
  1574--1609, 2009.

\bibitem[Nesterov(2004)]{nesterov:2004}
Y.~Nesterov.
\newblock \emph{Introductory {L}ectures on {C}onvex {O}ptimization: A {B}asic
  {C}ourse}.
\newblock Applied Optimization. Springer, 2004.
\newblock ISBN 9781402075537.
\newblock URL \url{http://books.google.fr/books?id=VyYLem-l3CgC}.

\bibitem[Olver(1997)]{olver1997asymptotics}
Frank W.~J. Olver.
\newblock \emph{Asymptotics and special functions}.
\newblock AKP Classics. A K Peters, Ltd., Wellesley, MA, 1997.
\newblock ISBN 1-56881-069-5.
\newblock Reprint of the 1974 original [Academic Press, New York; MR0435697 (55
  \#8655)].

\bibitem[Osekowski(2012)]{oskekowski2012sharp}
Adam Osekowski.
\newblock \emph{Sharp martingale and semimartingale inequalities}, volume~72.
\newblock Springer Science \& Business Media, 2012.

\bibitem[Polyak and Juditsky(1992)]{polyak1992acceleration}
Boris~T Polyak and Anatoli~B Juditsky.
\newblock Acceleration of stochastic approximation by averaging.
\newblock \emph{SIAM journal on control and optimization}, 30\penalty0
  (4):\penalty0 838--855, 1992.

\bibitem[Qian(1999)]{qian1999momentum}
Ning Qian.
\newblock On the momentum term in gradient descent learning algorithms.
\newblock \emph{Neural networks}, 12\penalty0 (1):\penalty0 145--151, 1999.

\bibitem[Rakhlin et~al.(2012)Rakhlin, Shamir, and Sridharan]{rakhlin2012making}
Alexander Rakhlin, Ohad Shamir, and Karthik Sridharan.
\newblock Making gradient descent optimal for strongly convex stochastic
  optimization.
\newblock In \emph{Proceedings of the 29th International Coference on
  International Conference on Machine Learning}, pages 1571--1578, 2012.

\bibitem[Roy and Balasubramanian(2023)]{roy2023online}
Abhishek Roy and Krishnakumar Balasubramanian.
\newblock Online covariance estimation for stochastic gradient descent under
  {M}arkovian sampling.
\newblock \emph{arXiv preprint arXiv:2308.01481}, 2023.

\bibitem[Ruppert(1988)]{ruppert1988efficient}
David Ruppert.
\newblock Efficient estimations from a slowly convergent {R}obbins-{M}onro
  process.
\newblock Technical report, Cornell University Operations Research and
  Industrial Engineering, 1988.

\bibitem[Sadiev et~al.(2023)Sadiev, Danilova, Gorbunov, Horv{\'a}th, Gidel,
  Dvurechensky, Gasnikov, and Richt{\'a}rik]{sadiev2023high}
Abdurakhmon Sadiev, Marina Danilova, Eduard Gorbunov, Samuel Horv{\'a}th,
  Gauthier Gidel, Pavel Dvurechensky, Alexander Gasnikov, and Peter
  Richt{\'a}rik.
\newblock High-probability bounds for stochastic optimization and variational
  inequalities: the case of unbounded variance.
\newblock In \emph{International Conference on Machine Learning}, pages
  29563--29648. PMLR, 2023.

\bibitem[Samsonov et~al.(2024)Samsonov, Moulines, Shao, Zhang, and
  Naumov]{samsonov2024gaussian}
Sergey Samsonov, Eric Moulines, Qi-Man Shao, Zhuo-Song Zhang, and Alexey
  Naumov.
\newblock Gaussian {A}pproximation and {M}ultiplier {B}ootstrap for
  {P}olyak-{R}uppert {A}veraged {L}inear {S}tochastic {A}pproximation with
  {A}pplications to {TD} {L}earning.
\newblock In \emph{Advances in Neural Information Processing Systems},
  volume~37, pages 12408--12460. Curran Associates, Inc., 2024.

\bibitem[Schmidt et~al.(2017)Schmidt, Le~Roux, and Bach]{schmidt2017minimizing}
Mark Schmidt, Nicolas Le~Roux, and Francis Bach.
\newblock Minimizing finite sums with the stochastic average gradient.
\newblock \emph{Mathematical Programming}, 162:\penalty0 83--112, 2017.

\bibitem[Shao(2003)]{shao2003mathematical}
Jun Shao.
\newblock \emph{Mathematical statistics}.
\newblock Springer Science \& Business Media, 2003.

\bibitem[Shao and Tu(1995)]{shao1995jackknife}
Jun Shao and Dongsheng Tu.
\newblock \emph{The {J}ackknife and {B}ootstrap}.
\newblock Springer Series in Statistics. Springer New York, NY, 1 edition,
  1995.
\newblock ISBN 978-0-387-94515-6.
\newblock \doi{10.1007/978-1-4612-0795-5}.
\newblock URL \url{https://doi.org/10.1007/978-1-4612-0795-5}.
\newblock Springer Book Archive, © Springer Science+Business Media New York
  1995.

\bibitem[Shao and Zhang(2022)]{shao2022berry}
Qi-Man Shao and Zhuo-Song Zhang.
\newblock Berry--{E}sseen bounds for multivariate nonlinear statistics with
  applications to {M}-estimators and stochastic gradient descent algorithms.
\newblock \emph{Bernoulli}, 28\penalty0 (3):\penalty0 1548--1576, 2022.

\bibitem[Sheshukova et~al.(2025)Sheshukova, Belomestny, Durmus, Moulines,
  Naumov, and Samsonov]{SheshukovaICLR}
Marina Sheshukova, Denis Belomestny, Alain Durmus, Eric Moulines, Alexey
  Naumov, and Sergey Samsonov.
\newblock Nonasymptotic analysis of stochastic gradient descent with the
  richardson-romberg extrapolation.
\newblock \emph{International Conference on Learning Representations (ICLR)},
  2025.

\bibitem[Srikant(2025)]{srikant2024rates}
R~Srikant.
\newblock Rates of {C}onvergence in the {C}entral {L}imit {T}heorem for
  {M}arkov {C}hains, with an {A}pplication to {TD} {L}earning.
\newblock \emph{Mathematics of Operations Research}, 2025.
\newblock \doi{10.1287/moor.2024.0444}.

\bibitem[Tropp et~al.(2015)]{tropp2015introduction}
Joel~A Tropp et~al.
\newblock An introduction to matrix concentration inequalities.
\newblock \emph{Foundations and Trends{\textregistered} in Machine Learning},
  8\penalty0 (1-2):\penalty0 1--230, 2015.

\bibitem[Vershynin(2018)]{vershynin:2018}
Roman Vershynin.
\newblock \emph{High-Dimensional Probability: An Introduction with Applications
  in Data Science}.
\newblock Cambridge Series in Statistical and Probabilistic Mathematics.
  Cambridge University Press, 2018.

\bibitem[Wu et~al.(2024)Wu, Li, Wei, and Rinaldo]{wu2024statistical}
Weichen Wu, Gen Li, Yuting Wei, and Alessandro Rinaldo.
\newblock {S}tatistical {I}nference for {T}emporal {D}ifference {L}earning with
  {L}inear {F}unction {A}pproximation.
\newblock \emph{arXiv preprint arXiv:2410.16106}, 2024.

\bibitem[Zhong et~al.(2023)Zhong, Kuffner, and Lahiri]{zhong2023online}
Yanjie Zhong, Todd Kuffner, and Soumendra Lahiri.
\newblock Online {B}ootstrap {I}nference with {N}onconvex {S}tochastic
  {G}radient {D}escent {E}stimator.
\newblock \emph{arXiv preprint arXiv:2306.02205}, 2023.

\bibitem[Zhu et~al.(2023)Zhu, Chen, and Wu]{zhu2023online_cov_matr}
Wanrong Zhu, Xi~Chen, and Wei~Biao Wu.
\newblock Online {C}ovariance {M}atrix {E}stimation in {S}tochastic {G}radient
  {D}escent.
\newblock \emph{Journal of the American Statistical Association}, 118\penalty0
  (541):\penalty0 393--404, 2023.
\newblock \doi{10.1080/01621459.2021.1933498}.
\newblock URL \url{https://doi.org/10.1080/01621459.2021.1933498}.

\bibitem[Zhu et~al.(2024)Zhu, Lou, Wei, and Wu]{zhu2024high}
Wanrong Zhu, Zhipeng Lou, Ziyang Wei, and Wei~Biao Wu.
\newblock High confidence level inference is almost free using parallel
  stochastic optimization.
\newblock \emph{arXiv preprint arXiv:2401.09346}, 2024.

\end{thebibliography}

\section*{Checklist}


\begin{enumerate}

  \item For all models and algorithms presented, check if you include:
  \begin{enumerate}
    \item A clear description of the mathematical setting, assumptions, algorithm, and/or model. Answer: Yes. 
    All main results of this submission are supported with rigorous assumptions and proofs. Assumptions are presented in \Cref{sec:bootstrap} and \Cref{sec:gaus_approximation}, as well as the statements of the main theorems. Proofs are provided in Appendix.
    
    \item An analysis of the properties and complexity (time, space, sample size) of any algorithm. Answer: Yes. The paper is devoted to the analysis of SGD algorithm and rates of normal approximation for the SGD algorithm. All theoretical results are mathematically justified. At the same time, the paper does not provide a new algorithm.
    
    \item (Optional) Anonymized source code, with specification of all dependencies, including external libraries. Answer: Yes. Code to reproduce experiments will be presented at the anonymous github.
  \end{enumerate}

  \item For any theoretical claim, check if you include:
  \begin{enumerate}
    \item Statements of the full set of assumptions of all theoretical results. Answer: Yes. All theoretical results are stated with explicit pointers to the underlying assumptions (see \Cref{sec:bootstrap} and \Cref{sec:gaus_approximation})
    \item Complete proofs of all theoretical results. Answer: Yes. Detailed proof of each theorem stated in the main text is provided in the appendix with clear references to the relevant sections. 
    \item Clear explanations of any assumptions. Answer: Yes. A detailed discussion of each of the assumptions is provided in \Cref{sec:bootstrap}.   
  \end{enumerate}

  \item For all figures and tables that present empirical results, check if you include:
  \begin{enumerate}
    \item The code, data, and instructions needed to reproduce the main experimental results (either in the supplemental material or as a URL). Answer: Yes. All code is open source, link to an anonymous GitHub repository is included. 
    \item All the training details (e.g., data splits, hyperparameters, how they were chosen). Answer: Yes. Numerical results are stated with a complete description of the environments that are used, as well as the precise sets of hyperparameters that we used. The code (in Python) is provided as supplementary with the paper, making it easy for one to reproduce our numerical experiments.
    \item A clear definition of the specific measure or statistics and error bars (e.g., with respect to the random seed after running experiments multiple times).  Answer: Yes.
    \item A description of the computing infrastructure used. (e.g., type of GPUs, internal cluster, or cloud provider). Answer: Yes. All necessary information to reproduce experiments is provided in  Appendix.
  \end{enumerate}

  \item If you are using existing assets (e.g., code, data, models) or curating/releasing new assets, check if you include:
  \begin{enumerate}
    \item Citations of the creator If your work uses existing assets. Answer: Not applicable. 
    \item The license information of the assets, if applicable. Answer: Not applicable.
    \item New assets either in the supplemental material or as a URL, if applicable. Answer: Not applicable.
    \item Information about consent from data providers/curators. Answer: Not Applicable.
    \item Discussion of sensible content if applicable, e.g., personally identifiable information or offensive content. Answer: Not applicable.
  \end{enumerate}

  \item If you used crowdsourcing or conducted research with human subjects, check if you include:
  \begin{enumerate}
    \item The full text of instructions given to participants and screenshots. Answer: Not applicable.
    \item Descriptions of potential participant risks, with links to Institutional Review Board (IRB) approvals if applicable. Answer: Not applicable.
    \item The estimated hourly wage paid to participants and the total amount spent on participant compensation. Answer: Not applicable.
  \end{enumerate}

\end{enumerate}

\clearpage
\appendix
\thispagestyle{empty}

\onecolumn
\tableofcontents

\section{Proof of Theorem \ref{th:bootstrap_validity}}
\label{sec: bootstrap validity section}
We begin this section with explicit bound on the number of observations $n$ stated in \Cref{ass:n_lower_bound}. 
\setcounter{assumprime}{5}
\begin{assumprime}
\label{ass:n_lower_bound_full}
Number of observations $n$ satisfies $n \geq \rme^3$ and $\frac{n}{\log(2dn)}\geq \max(1, \frac{(20 C_{Q, \xi}C_{\Sigma}^2)^2}{9})$, where the constants $C_{Q, \xi}$ and $C_\Sigma$ are defined in \eqref{eq:const_C_Q_xi_def} and \eqref{eq:def_C_Sigma}, respectively.
\end{assumprime}

\subsection{Proof of Theorem \ref{th:bootstrap_validity}}
\label{app:proof_th_main_1}

Recall that our proof of non-asymptotic bootstrap validity is based on the Gaussian approximation performed in both the “real” world and the bootstrap world, combined with the  Gaussian comparison inequality, that is:
\begin{figure}[h]
\centering
\begin{tikzcd}[column sep = 110pt]
\label{eq:diagram_appendix}
    \text{Real world: \quad\quad} \sqrt n  (\bar{\theta}_{n} - \thetas) \arrow[<->]{r}{\text{Gaussian approximation, Th. \ref{th:bound_kolmogorov_dist_pr_sigma_n}}}  & \mathcal N(0,  \Sigma_n)  \arrow[<->]{d}{\text{Gaussian comparison} } 
\\
\text{Bootstrap world: } \sqrt n  (\bar{\theta}_{n}^\boot - \bar{\theta}_n) \arrow[<->]{r}{\text{Gaussian approximation, Th. \ref{GAR bootstrap_main}}} & \mathcal N(0,   \Sigma_n^\boot )\eqsp, \\ 
\end{tikzcd}
\end{figure}
\\
where $\Sigma_n$ and $\Sigma_n^\boot$ are defined in \ref{eq:sigma_n_def} and \ref{Sigman boot def} respectively.
Recall that $Y \sim \mathcal{N}(0, \Id)$ under $\PP$ and $Y^\boot \sim \mathcal{N}(0, \Id)$ under $\PPb$.

To formalize the scheme described above, we consider the following decomposition
\begin{equation}
    \sup_{B \in \Conv(\rset^{d})} |\PPb(\sqrt n (\bar{\theta}_{n}^\boot - \bar{\theta}_n) \in B ) - \PP(\sqrt n (\bar{\theta}_n - \thetas) \in B)| \leq T_1 + T_2 + T_3 \eqsp,
\end{equation}
where 
\begin{align}
    &T_1 =  \sup_{B \in \Conv(\rset^{d})} |\PP(\sqrt n (\bar{\theta}_n - \thetas)\in B)- \PP(\Sigma_n^{1/2}Y\in B)|\eqsp,\\
    &T_2 = \sup_{B \in \Conv(\rset^d)}|\PPb( \sqrt n   (\bar{\theta}_{n}^\boot - \bar{\theta}_n) \in B) - \PPb(\{\Sigma_n^\boot\}^{1/2}Y^\boot \in B)|\eqsp,\\
    & T_3 =  \sup_{B \in \Conv(\rset^{d})} |\PPb(\{\Sigma_n^\boot\}^{1/2}Y^\boot \in B)-\PP(\Sigma_n^{1/2}Y\in B)|\eqsp.
\end{align}
Hence, from \Cref{th:bound_kolmogorov_dist_pr_sigma_n}  (see the proof in \ref{sec:Gar_real_world_proof})
it follows 
\begin{equation}
    T_1 \leq \frac{\ConstC_4}{\sqrt{n}}   +  \frac{\ConstC_5}{n^{\gamma - 1/2}}+ \frac{\ConstC_6}{n^{\gamma/2}}\eqsp,
\end{equation}
where $\ConstC_4, \ConstC_5, \ConstC_6$ are given in equation \eqref{eq:def_Const_M_3_i}.  Note that $\ConstC_4, \ConstC_5, \ConstC_6$ depend on the constants $\sigma_2$ and $\sigma_4$ from \Cref{ass:bound_noise}. However, under \Cref{ass:noise_decomposition}(iii), this condition holds with $ \sigma_2, \sigma_4 \lesssim C_{\xi}\sqrt{d}.$

Recall that 
\begin{equation}
    \sqrt n   (\bar{\theta}_{n}^\boot - \bar{\theta}_n) = W^\boot + D^\boot\eqsp,
\end{equation}
where $W^\boot$ is a linear statistic of $\Xi^{n-1}$ with $\PEb[W^\boot\{W^\boot\}^{\top}] = \Sigma_n^b$ and $D^\boot$ is a non-linear statistic of $\Xi^{n-1}$. To control the terms $T_2$ and $T_3$, we require high-probability bounds. To this end, we introduce the following sets:
\begin{equation}
\label{def:Omega_0}
    \Omega_0 =\{\{\PEb[\norm{D^\boot}^{p}]\}^{1/p} \leq M_{1,1}^{\boot}\rme^{1/p}p^{3/2}n^{1/p-\gamma/2} +  M_{2,1}^{\boot}\rme^{2/p}p n^{1/2+1/p-\gamma}\}\eqsp, 
\end{equation}
\begin{equation}
\label{def:Omega_1}
    \Omega_1 = \{ \norm{\Sigma_n^\boot - \Sigma_n} \leq \frac{10C_{Q,\xi}\sqrt{\log(2d n)}}{3\sqrt{n}}\}\eqsp,
\end{equation}
where $M_{1,1}^{\boot},M_{2,1}^{\boot}$ and $C_{Q,\xi}$ are defined in \eqref{def:M_11_boot}, \eqref{def:M_21_boot}, \eqref{eq:const_C_Q_xi_def} respectively.
The first set ensures  that $\Sigma_n^\boot$ concentrates around $\Sigma_n$, while the second set guarantees that the remainder term $D^\boot$ is small.

 Therefore applying  \Cref{GAR bootstrap_main}  (see the proof in \ref{subsec:GAR in the bootstrap world}) and \Cref{lambda min boot}  we get that on the set $\Omega_0\cap \Omega_1$ with $\PP(\Omega_0\cap \Omega_1)\geq 1-2/n$, it holds
\begin{equation}
\sup_{B \in \Conv(\rset^d)}|\PPb( \sqrt n   (\bar{\theta}_{n}^\boot - \bar{\theta}_n) \in B) - \PPb(\{\Sigma_n^\boot\}^{1/2}Y^\boot \in B)| \leq  \frac{M_{3,1}^\boot}{\sqrt{n}}  +  \frac{M_{3,2}^\boot  \log n}{n^{\gamma-1/2}}  + \frac{M_{3,3}^\boot \log^{3/2} n}{n^{\gamma/2}} \eqsp,
\end{equation}
where $\{M_{3,i}^\boot\}_{i=1}^3$ are defined in equation \eqref{M_3i boot def}. 

To complete the proof, we need to bound the convex distance between the two Gaussians.
By \Cref{lem:bound_Q_i_and_Sigma_n}, we have $\|\Sigma_n^{-1/2}\| \le C_{\Sigma}$. Hence,  due to Lemma \ref{lem:matrix_bernstein},  on the set $\Omega_1$ with $\PP(\Omega_1)\geq 1-1/n$ we have 
\begin{equation}
\trace \{ (\Sigma_n^{-1/2} \Sigma_n^\boot \Sigma_n^{-1/2} - I_p)^2 \}\leq d \norm{(\Sigma_n^{-1/2} \Sigma_n^\boot \Sigma_n^{-1/2} - I_p)^2}^2 \leq dC_\Sigma^2\norm{\Sigma_n^\boot-\Sigma_n}^2 \leq \delta^2 \eqsp.
\end{equation}
 where we have  set 
$$
\delta = \frac{10C_{Q,\xi} C_\Sigma^2 \sqrt{d \log(2d n)}}{3\sqrt{n}}
$$
Applying Lemma \ref{Pinsker} with probability at least $1-1/n$ we get
$$
T_3 \le \frac{5C_{Q,\xi} C_\Sigma^2 \sqrt{d \log(2d n)}}{\sqrt{n}} \eqsp.
$$

Collecting previous bounds and using triangle inequality we get that on the set $\Omega_0\cap \Omega_1$ with $\PP(\Omega_0\cap \Omega_1)\geq 1-2/n$, it holds:
\begin{multline}
  \sup_{B \in \Conv(\rset^{d})} |\PPb(\sqrt n (\bar{\theta}_{n}^\boot - \bar{\theta}_n) \in B ) - \PP(\sqrt n (\bar{\theta}_n - \thetas) \in B)| \le   \frac{\ConstC_1 \sqrt{\log n}}{n^{1/2}} + \frac{\ConstC_2 \log n}{n^{\gamma - 1/2} } +  \frac{\ConstC_3 \log^{3/2} n}{n^{\gamma/2}} \eqsp, 
\end{multline}
where 
\begin{align}
\label{eq: bootsrap constants def}
    \ConstC_{1} = \ConstC_4 + M_{3,1}^\boot + 5C_{Q,\xi} C_\Sigma^2 \sqrt{d \log(2d)}, \quad \ConstC_{2} = \ConstC_5 + M_{3,2}^\boot, \quad \ConstC_{3} =  \ConstC_{6} + M_{3,3}^\boot. 
\end{align}

\subsection{Matrix Bernstein inequality for $
\Sigma_n^\boot$ and Gaussian comparison}

\begin{lemma}
\label{lem:matrix_bernstein}
Under assumptions \Cref{ass:L-smooth}, \Cref{ass:bound_noise}, \Cref{ass:step_size_new_boot}, \Cref{ass:n_lower_bound},  there is a set $\Omega_1 \in \F_{n-1}$, such that $\PP(\Omega_1) \geq 1 - 1/n$ and on $\Omega_1$ it holds that
\begin{equation}
\norm{\Sigma_n^\boot - \Sigma_n} \leq \frac{10C_{Q,\xi}\sqrt{\log(2d n)}}{3\sqrt{n}}
\end{equation}
where the constant $C_{Q,\xi}$ is given by
\begin{equation}
\label{eq:const_C_Q_xi_def}
C_{Q,\xi} := C_{Q}^2(C_{1,\xi}^2 + \lambda_{\max}(\Sigma_{\xi}))\eqsp,
\end{equation}
and $C_{1,\xi}$, $C_{Q}$ are defined in \Cref{ass:bound_noise} and Lemma \ref{lem:bound_Q_i_and_Sigma_n}, respectively.

\end{lemma}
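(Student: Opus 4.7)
The plan is to apply a matrix Bernstein inequality to the centered sum
\begin{equation*}
\Sigma_n^\boot - \Sigma_n \;=\; \sum_{i=1}^{n-1} X_i, \qquad X_i \;:=\; \frac{1}{n}\, Q_i\bigl(\eta(\xi_i)\eta(\xi_i)^\top - \Sigma_\xi\bigr) Q_i^\top,
\end{equation*}
where the $X_i$ are independent (since the $\xi_i$ are i.i.d.\ and the $Q_i$ are deterministic), symmetric, and mean-zero.

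First I would establish the two ingredients required by matrix Bernstein. Using the almost sure bound $\|\eta(\xi_i)\| \leq C_{1,\xi}$ from \Cref{ass:bound_noise} together with $\norm{Q_i} \leq C_Q$ from \Cref{lem:bound_Q_i_and_Sigma_n}, and observing $\norm{\Sigma_\xi} \leq \lambda_{\max}(\Sigma_\xi)$, the triangle inequality yields
\begin{equation*}
\norm{X_i} \;\leq\; \frac{C_Q^2\bigl(C_{1,\xi}^2 + \lambda_{\max}(\Sigma_\xi)\bigr)}{n} \;=\; \frac{C_{Q,\xi}}{n} \qquad \text{a.s.}
\end{equation*}
For the matrix variance, the same inputs give $\norm{\PE[X_i^2]} \leq C_{Q,\xi}^2/n^2$, and, summing over the at most $n-1$ independent terms,
\begin{equation*}
\sigma^2 \;:=\; \Bigl\|\sum_{i=1}^{n-1}\PE[X_i^2]\Bigr\| \;\leq\; \frac{C_{Q,\xi}^2}{n}.
\end{equation*}

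Next I would invoke the standard matrix Bernstein inequality to obtain, for every $t \geq 0$,
\begin{equation*}
\PP\bigl(\norm{\Sigma_n^\boot - \Sigma_n} \geq t\bigr) \;\leq\; 2d\,\exp\!\left(-\frac{t^2/2}{\sigma^2 + C_{Q,\xi}\, t/(3n)}\right).
\end{equation*}
Setting $t := \tfrac{10 C_{Q,\xi}\sqrt{\log(2dn)}}{3\sqrt{n}}$, defining $\Omega_1$ as the complement of the deviation event (which lies in $\F_{n-1}$ since $\Sigma_n^\boot$ depends only on $\xi_1,\ldots,\xi_{n-1}$), it remains to check that the right-hand side is at most $1/n$. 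This reduces to verifying $\tfrac{t^2/2}{\sigma^2 + C_{Q,\xi} t/(3n)} \geq \log(2dn)$, which is where \Cref{ass:n_lower_bound} enters: the lower bound on $n/\log(2dn)$ forces $\sqrt{\log(2dn)/n}$ to be small enough that the Gaussian term $\sigma^2$ in the Bernstein denominator dominates the sub-exponential correction $C_{Q,\xi} t/(3n)$, after which the prefactor $10/3$ is exactly what is needed to absorb the $\log(2d)$ and the extra $\log n$ required for the $1/n$ failure probability.

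The argument is essentially bookkeeping; the only real task is to align the explicit constants in the tail bound with the prefactor $10/3$ and to confirm that the regime specified by \Cref{ass:n_lower_bound} places us in the sub-Gaussian portion of the Bernstein bound. Everything else (the variance and uniform bounds) follows directly from the assumptions and \Cref{lem:bound_Q_i_and_Sigma_n} without further work.
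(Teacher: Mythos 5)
Your proposal is correct and follows essentially the same route as the paper: the same centered decomposition into independent terms $Q_i(\eta(\xi_i)\eta(\xi_i)^\top-\Sigma_\xi)Q_i^\top$, the same almost-sure bound $C_{Q,\xi}$ and variance bound of order $C_{Q,\xi}^2/n$ via \Cref{ass:bound_noise} and \Cref{lem:bound_Q_i_and_Sigma_n}, the matrix Bernstein inequality, and the choice $\delta=1/n$ with \Cref{ass:n_lower_bound} (which gives $\log(2dn)\le n$) to arrive at the constant $10/3$. The only cosmetic difference is that the paper writes the deviation level as the sum $\tfrac{4C_{Q,\xi}\log(2dn)}{3n}+\tfrac{2C_{Q,\xi}\sqrt{\log(2dn)}}{\sqrt n}$ and then merges the two terms, whereas you plug in the target $t$ and verify the exponent directly; your remaining arithmetic check does go through.
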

\begin{proof}
Note that 
\[
\Sigma_n^\boot - \Sigma_n = \frac{1}{n} \sum_{i=1}^{n-1} Q_i(\eta(\xi_i)\eta(\xi_i)^\top-\Sigma_{\xi})Q_i^\top\eqsp.
\]
For simplicity we denote $A_i = Q_i(\eta(\xi_i)\eta(\xi_i)^\top-\Sigma_{\xi})Q_i^\top$. Note that for any $i \in \{1, \ldots n-1\}$ it holds that 
\begin{equation}
\PE[A_i] = 0 \eqsp, \quad \norm{A_i} \leq C_{Q,\xi}\eqsp, \quad \norm{\sum_{i=1}^{n-1}\PE[A_iA_i^\top]}\leq nC_{Q,\xi}^2\eqsp.
\end{equation}
Then, using matrix Bernstein inequality \cite[Chapter 6]{tropp2015introduction}, we obtain 
\begin{equation}
\PP\left(\frac{1}{n}\norm{\sum_{i=1}^{n-1}A_i}\geq t\right)\leq 2d\exp\biggl\{\frac{-t^2n^2/2}{nC_{Q,\xi}^2 + nC_{Q,\xi}t/3}\biggr\} \eqsp.
    \end{equation}
    Taking $t_\delta = \frac{4C_{Q,\xi}\log(2d/\delta)}{3n} + \frac{2C_{Q,\xi}\sqrt{\log(2d/\delta)}}{\sqrt{n}}$, we obtain that with probability at least  $1-\delta$, it holds 
    \begin{equation}
    \frac{1}{n}\norm{\sum_{i=1}^{n-1}A_i}\leq t_\delta\eqsp.
    \end{equation}
Setting $\delta = 1/n$ and applying \Cref{ass:n_lower_bound} completes the proof.
\end{proof}

\begin{corollary}
\label{lambda min boot}
Under assumptions \Cref{ass:L-smooth}, \Cref{ass:bound_noise}, \Cref{ass:step_size_new_boot}, \Cref{ass:n_lower_bound}, on $\Omega_1$ it holds that 
\begin{equation}
    \lambda_{\min}(\Sigma_n^\boot) \geq \frac{1}{2 C_\Sigma^2} \eqsp. 
\end{equation}
\end{corollary}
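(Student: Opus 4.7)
The plan is to combine three ingredients already at our disposal: the lower bound on $\lambda_{\min}(\Sigma_n)$ from \Cref{lem:bound_Q_i_and_Sigma_n}, the concentration of $\Sigma_n^\boot$ around $\Sigma_n$ from \Cref{lem:matrix_bernstein}, and Weyl's inequality for the minimum eigenvalue of a symmetric perturbation. First, I would observe that \Cref{lem:bound_Q_i_and_Sigma_n} yields $\norm{\Sigma_n^{-1/2}} \leq C_\Sigma$, which is equivalent to $\lambda_{\min}(\Sigma_n) \geq 1/C_\Sigma^2$.

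Next, I would restrict attention to the event $\Omega_1$ provided by \Cref{lem:matrix_bernstein}, on which
\[
\norm{\Sigma_n^\boot - \Sigma_n} \leq \frac{10\,C_{Q,\xi}\sqrt{\log(2dn)}}{3\sqrt{n}}.
\]
Weyl's inequality then gives $\lambda_{\min}(\Sigma_n^\boot) \geq \lambda_{\min}(\Sigma_n) - \norm{\Sigma_n^\boot - \Sigma_n}$, so it suffices to show that the perturbation term is at most $1/(2C_\Sigma^2)$.

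The main (and essentially only) step is to verify that this threshold is enforced by \Cref{ass:n_lower_bound}. Indeed, the condition $n/\log(2dn) \geq (20\,C_{Q,\xi}C_\Sigma^2)^2/9$ is exactly equivalent, after rearrangement and taking square roots, to
\[
\frac{10\,C_{Q,\xi}\sqrt{\log(2dn)}}{3\sqrt{n}} \leq \frac{1}{2C_\Sigma^2}.
\]
Substituting back yields $\lambda_{\min}(\Sigma_n^\boot) \geq 1/C_\Sigma^2 - 1/(2C_\Sigma^2) = 1/(2C_\Sigma^2)$ on $\Omega_1$, which is the claim. No genuine obstacle arises here; the assumption \Cref{ass:n_lower_bound} has been pre-tuned precisely so that this chain of inequalities closes cleanly.
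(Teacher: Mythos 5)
Your proposal is correct and follows essentially the same route as the paper: the paper also lower-bounds $\lambda_{\min}(\Sigma_n^\boot)$ by $\lambda_{\min}(\Sigma_n)-\norm{\Sigma_n-\Sigma_n^\boot}$ (citing the eigenvalue stability inequality), uses $\norm{\Sigma_n^{-1/2}}\leq C_\Sigma$ from \Cref{lem:bound_Q_i_and_Sigma_n} together with the bound on $\Omega_1$ from \Cref{lem:matrix_bernstein}, and closes the gap via \Cref{ass:n_lower_bound} exactly as you do.
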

\begin{proof}
Using eigenvalue stability (Lidski's) inequality, we obtain 
\begin{equation}
    \lambda_{\min}(\Sigma_n^\boot) \geq \lambda_{\min}(\Sigma_n)-\norm{\Sigma_n-\Sigma_n^\boot}\eqsp.
\end{equation}
Note that on $\Omega_1$, we have 
\begin{equation}
    \norm{\Sigma_n-\Sigma_n^\boot} \leq  \frac{10C_{Q,\xi}\sqrt{\log(2d n)}}{3\sqrt{n}} \leq \frac{1}{2C_\Sigma^2}\eqsp,
\end{equation}
where in the last inequality we use \Cref{ass:n_lower_bound}.
\end{proof}


\subsection{Example of distribution satisfying \Cref{ass:bound_bootstap_weights}}
\label{sec:example_distribution}
To construct examples of distributions satisfying the above assumption, one can use the beta distribution, which is defined on \([0, 1]\), and then shift and scale it. Set \( W = a + bX \)  where \( X \sim \text{Beta}(\alpha, \beta) \) and \(a,b>0.\) We have \( \mathbb{E}[X] = \frac{\alpha}{\alpha + \beta}, \)  \( \text{Var}(X) = \frac{\alpha \beta}{(\alpha + \beta)^2 (\alpha + \beta + 1)} \) and \(a\leq W\leq a+b \) a.s.  By solving (for \( a \) and \( b \)) the equations \( \mathbb{E}[W] = a + b\mathbb{E}[X] = 1 \) and \( \text{Var}(W) = b^2\text{Var}(X) = 1, \) we derive \(b=1/\sqrt{\text{Var}(X)}\) and \(a=1-\mathbb{E}[X]/\sqrt{\text{Var}(X)}.\) Note that $a>0$ provided $\alpha+\beta+1<\beta/\alpha$.

\section{Proof of Theorem \ref{th:bound_kolmogorov_dist_pr_sigma_n}}
\label{sec:Gar_real_world_proof}
We first provide details of the expansion \eqref{eq:linear and nonlinear terms}. Recall that the error of SGD approximation may be rewritten as follows
\begin{equation}
    \label{eq: sgd_reccurence_with_remainder}
    \theta_k-\thetas = (\Id - \alpha_k G) (\theta_{k-1}-\thetas) - \alpha_k ( H(\theta_{k-1}) + \eta(\xi_k) + g(\theta_{k-1}, \xi_k))\eqsp.
\end{equation}
Iteratively spinning this expression out we get
\begin{equation}
    \label{eq: sgd_reccurence_with_remainder_2}
    \theta_k-\thetas = \prod_{j=1}^k(\Id - \alpha_j G) (\theta_{0}-\thetas) - \sum_{j=1}^k \alpha_j \prod_{i = j+1}^k (\Id - \alpha_i G) ( H(\theta_{j-1}) + \eta(\xi_j) + g(\theta_{j-1}, \xi_j))\eqsp.
\end{equation}
Taking average of \eqref{eq: sgd_reccurence_with_remainder} and changing the order of summation, we obtain 
\begin{equation}
    \sqrt{n}(\bar\theta_n -\thetas) = \frac{1} {\sqrt{n}\alpha_0}Q_0(\theta_0-\thetas) -\frac{1}{\sqrt{n}}\sum_{i=1}^{n-1}Q_i(H(\theta_{i-1}) + \eta(\xi_i) + g(\theta_{i-1}, \xi_i)),
\end{equation}
where $Q_i$ is defined in \eqref{eq:Q_i_def}. Finally, we obtain 
\begin{align}
\label{eq:linear and nonlinear terms appendix}
    &\sqrt{n}(\bar\theta_n -\thetas) = W + D,\\ &D = \frac{1}{\sqrt{n}\alpha_0}Q_0(\theta_0-\thetas)-\frac{1}{\sqrt{n}}\sum_{i=1}^{n-1}Q_ig(\theta_{i-1}, \xi_i) - \frac{1}{\sqrt{n}}\sum_{i=1}^{n-1}Q_iH(\theta_{i-1})\eqsp,\\
    &W = - \frac{1}{\sqrt{n}}\sum_{i=1}^{n-1}Q_i\eta(\xi_i)\eqsp.
\end{align}

Under suitable assumptions on the step size and the Hessian, we can show that the matrices $Q_i$, defined in \eqref{eq:Q_i_def}, are uniformly bounded for all $i$. This fact will be used in the proof of \Cref{th:bound_kolmogorov_dist_pr_sigma_n}. Moreover, we also establish that $\lambda_{\min}(Q_i)$ is bounded away from zero, which in turn implies a lower bound on $\lambda_{\min}(\Sigma_n)$.

\begin{lemma}
\label{lem:bound_Q_i_and_Sigma_n}
Assume \Cref{ass:L-smooth} and \Cref{ass:step_size}. Then for any $i \in \{0, \ldots, n-1\}$ it holds that
\begin{equation}
\lambda_{\max}(Q_i) \leq C_Q\eqsp,
\end{equation}
where the constant $C_Q$ is defined in \eqref{eq:const_C_Q}
Moreover,
\begin{equation}
\lambda_{\min}(Q_i) \geq C_Q^{\min}\eqsp, \text{ and } \norm{\Sigma_n^{-1/2}} \leq C_{\Sigma}\eqsp,
\end{equation}
where the matrix $\Sigma_n$ is defined in \eqref{eq:sigma_n_def}, and $ C_Q^{\min}, C_{\Sigma}$ are defined in  \eqref{def:C_Q_min} and \eqref{eq:def_C_Sigma} respectively.
\end{lemma}
The version of this lemma with proof and  with explicit constants is given in \ref{subsec:proof_bound_Q_i_Sigma_n}.

\begin{proof}[Proof of Theorem \ref{th:bound_kolmogorov_dist_pr_sigma_n}]
We normalize the both parts of  \eqref{eq:linear and nonlinear terms} by $\Sigma_n^{1/2}$ and obtain
\begin{equation}
    \label{eq: norm_PR_decomposition}
    \sqrt{n}\Sigma_n^{-\frac{1}{2}}(\bar\theta_n -\thetas) = \sum_{i=1}^{n-1}\underbrace{\frac{\Sigma_n^{-\frac{1}{2}}}{\sqrt{n}}Q_i\eta(\xi_i)}_{w_i} + D_{n,1} + D_{n,2} + D_{n,3}\eqsp,
\end{equation}
where we have set 
\begin{equation}
\label{eq:D_n_1_3_def}
\begin{split}
D_{n,1} &= \frac{\Sigma_n^{-\frac{1}{2}}}{\sqrt{n}\alpha_0}Q_0(\theta_0-\thetas)\eqsp, \\
D_{n,2} &= -\frac{\Sigma_n^{-\frac{1}{2}}}{\sqrt{n}}\sum_{i=1}^{n-1}Q_i H(\theta_{i-1})\eqsp, \\
D_{n,3} &= -\frac{\Sigma_n^{-\frac{1}{2}}}{\sqrt{n}}\sum_{i=1}^{n-1}Q_i g(\theta_{i-1}, \xi_i)) \eqsp.
\end{split}
\end{equation}
Also, for any $1 \leq i \leq n-1$ we construct
\begin{align}
    &D_{n,1}^{(i)} = \frac{\Sigma_n^{-1/2}}{\sqrt{n}\alpha_0}Q_0(\theta_0^{(i)}-\thetas) \eqsp,\\& 
    D_{n,2}^{(i)} = -\frac{\Sigma_n^{-1/2}}{\sqrt{n}}\sum_{j=1}^{n-1}Q_jH(\theta_{j-1}^{(i)})\eqsp,\\&
    D_{n,3}^{(i)} = -\frac{\Sigma_n^{-1/2}}{\sqrt{n}}\sum_{j=1}^{n-1}Q_jg(\theta_{j-1}^{(i)}, \widetilde{\xi_j}^{(i)})),
\end{align}
where we set
\begin{equation}
    \widetilde{\xi_j}^{(i)} =
    \begin{cases}
        \xi_j \eqsp, &\text{if } j \neq i\\
        \xi_j' \eqsp, &\text{if } j = i \eqsp. 
    \end{cases}
\end{equation}
Define $D_n = D_{n,1} +  D_{n,2} +  D_{n,3}$, $D_n^{(i)}= D_{n,1}^{(i)} +  D_{n,2}^{(i)} +  D_{n,3}^{(i)}$, $W_n=\sum_{i=1}^{n-1}w_i$ and  $\Upsilon_{n} = \sum_{i=1}^n\PE[\norm{\omega_i}^3]$(we keep the same notations as in the unnormalized setting for simplicity). 
Let $Y \sim \mathcal{N}(0, I_d)$. Then, using \citep[Theorem 2.1]{shao2022berry}, we have
\begin{equation}
    \kolmogorov (\sqrt{n}\Sigma_n^{-1/2}(\bar\theta_n-\thetas), Y) \leq 259d^{1/2}\Upsilon_n + 2\underbrace{\PE\{\norm{W_n}\norm{D_n}\}}_{R_1} + 2\underbrace{\sum_{i=1}^{n-1}\PE[\norm{\omega_i}\norm{D_n-D_n^{(i)}}]}_{R_2}\eqsp.
\end{equation}
Applying H\"{o}lder's inequality, we get 
\begin{align}
    &R_1 \leq \PE^{1/2}[\norm{W_n}^2]\PE^{1/2}[\norm{D_n}^2]\eqsp,\\
    &R_2 \leq \sum_{i=1}^{n-1}\PE^{1/2}[\norm{\omega_i}^2]\PE^{1/2}[\norm{D_n-D_n^{(i)}}^2]\eqsp.
\end{align}
Note that $\PE^{1/2}[\norm{W_n}^2]=\sqrt{d}$. Applying \Cref{lem:bound_Q_i_and_Sigma_n},  we get $\PE^{1/2}\norm{w_i}^2\leq \frac{1}{\sqrt{n}}C_{\Sigma}C_Q\sigma_2$  and 
$$
\Upsilon_n \leq \frac{1}{\sqrt{n}}(C_{\Sigma}C_Q\sigma_4)^3 \eqsp.
$$
To complete the proof, it remains to bound $\PE^{1/2}[\|D_n\|^2]$ and $\sum_{i=1}^{n-1}\PE^{1/2}[\|D_n - D_n^{(i)}\|^2]$.
The first term can be bounded using \Cref{lem:bound_Dn}, while the second is controlled via \Cref{lem:bound_sum_Dn-Dni}.
Combining these results, we obtain the following bound:
\begin{equation}
\kolmogorov(\sqrt{n} \Sigma_{n}^{-1/2}(\bar\theta_n -\thetas), Y) \leq \frac{\sqrt{d}M_{3,1}}{\sqrt{n}} + \frac{M_{3,2}}{\sqrt{n}}(\norm{\theta_0-\thetas}+\norm{\theta_0-\thetas}^2+\sigma_2+ \sigma_4^2) + M_{3,3}n^{1/2-\gamma}+M_{3,4}n^{-\gamma/2}\eqsp,
\end{equation}
     where 
    \begin{equation}
    \begin{split}
        &M_{3,1} =259(C_{\Sigma}C_Q\sigma_4)^3\eqsp,\\
        &M_{3,2} = 2\sqrt{d}M_{1,1}  + C_{\Sigma}C_Q\sigma_2M_{2,1}\eqsp,\\
        &M_{3,3} = 2\sqrt{d}M_{1,2}\sigma_4^2\eqsp,\\
        &M_{3, 4} = (2\sqrt{d}M_{1,3}+M_{2,3}C_{\Sigma}C_Q\sigma_2)\sigma_2 + C_{\Sigma}C_QM_{2,2}\sigma_4^2\sigma_2\eqsp.
    \end{split}
    \end{equation}
    Constants $M_{1,1}, M_{1,2}, M_{1,3}$ are defined in \eqref{eq:def_const_M_1} and $M_{2,1}, M_{2,2}, M_{3,3}$ are defined in \eqref{eq:def_const_M_2}.
    We simplify the last inequality and get the statement of the theorem with
\begin{equation}
\label{eq:def_Const_M_3_i}
    \begin{split}
        &\ConstC_{4} = \sqrt{d} M_{3,1} + M_{3,2} \norm{\theta_0-\thetas}+\norm{\theta_0-\thetas}^2+\sigma_2+ \sigma_4^2 \eqsp,\\
        &\ConstC_{5} = M_{3,3}\eqsp,\\
        &\ConstC_{6} = M_{3,4}\eqsp.
    \end{split}
    \end{equation}   
\end{proof}

\subsection{Bounds for $D_n$}
For simplicity of notations we define 
\begin{equation}
\label{def:const_T_1_T_2}
\begin{split}
     &T_1(A) = 1 + \frac{1}{A^{1/(1-\gamma)}(1-\gamma)}\Gamma(\frac{1}{1-\gamma})\eqsp,\\
      &T_2(A) = 1 + \max\biggl(\exp\biggl\{\frac{1}{1-\gamma}\biggr\}\frac{1}{A^{1/(1-\gamma)}(1-\gamma)}\Gamma(\frac{1}{1-\gamma}), \frac{1}{A(1-\gamma)^2}\biggr)\eqsp.
\end{split}
\end{equation}

\begin{lemma}
\label{lem:bound_Dn}
    Assume \Cref{ass:L-smooth}, \Cref{ass:hessian_Lipschitz_ball}, \Cref{ass:noise_decomposition}($4$) and \Cref{ass:step_size}. Then it holds that 
    \begin{equation}
        \PE^{1/2}[\norm{D_n}^2]\leq \frac{M_{1,1}}{\sqrt{n}}(\norm{\theta_0-\thetas}+\norm{\theta_0-\thetas}^2+\sigma_2+ \sigma_4^2)+M_{1,2}\sigma_4^2n^{1/2-\gamma}+M_{1,3}\sigma_2n^{-\gamma/2},
    \end{equation}
    where 
    \begin{equation}
    \label{eq:def_const_M_1}
        \begin{split}         &M_{1,1}=C_{\Sigma}C_Q\biggl(T_1(\frac{\mu c_0}{4})(L_2+L_H)\max(\sqrt{C_{4,1}},\sqrt{C_1}) + k_0^\gamma/c_0\biggr)\\&M_{1,2}=C_{\Sigma}C_QL_H\sqrt{C_{4,2}}c_0\frac{1}{1-\gamma}\\&M_{1,3}=C_{\Sigma}C_QL_2\sqrt{C_{2}}\sqrt{c_0}\sqrt{\frac{1}{1-\gamma}}\eqsp,
        \end{split}
    \end{equation}
where $C_{4,1}$ and $C_{4,2}$ are defined in \Cref{cor:fourth_moment_bound_last_iterate}, $C_1$ and $C_2$ are defined in \Cref{lem:bound_last_iter_second_moment} and $T_1(\cdot)$ is defined in \cref{def:const_T_1_T_2}.
\end{lemma}
\begin{proof}
Using Minkowski's inequality and the definition of $D_n$, we obtain 
\begin{equation}
    \PE^{1/2}[\norm{D_n}^2] \leq 
    \PE^{1/2}[\norm{D_{n,1}}^2] +\PE^{1/2}[\norm{D_{n,2}}^2] + \PE^{1/2}[\norm{D_{n,3}}^2]\eqsp,
\end{equation}
and consider each of the terms $D_{n,1}, D_{n,2}, D_{n,3}$ separately. Applying \Cref{lem:bound_Q_i_and_Sigma_n}, we get
\begin{equation}
     \PE^{1/2}[\norm{D_{n,1}}^2] \leq \frac{C_{\Sigma}C_{Q}k_0^\gamma}{\sqrt{n}c_0}\norm{\theta_0-\thetas}\eqsp.
\end{equation}
Now we consider the term $D_{n,2}$. Applying Minkowski's inequality, \Cref{lem:bound_Q_i_and_Sigma_n} and \Cref{lem:H_theta_bound}, we have
\begin{equation}
    \PE^{1/2}[\norm{D_{n,2}}^2] \leq \frac{C_{\Sigma}C_{Q}}{\sqrt{n}}\sum_{i=1}^{n-1}\PE^{1/2}[\norm{H(\theta_{i-1})}^2]\leq \frac{C_{\Sigma}C_{Q}L_H}{\sqrt{n}}\sum_{i=1}^{n-1}\PE^{1/2}[\norm{\theta_{i-1}-\thetas}^4]\eqsp.
\end{equation}
For $D_{n,3}$ we note that $\{g(\theta_{i-1},\xi_i)\}_{i=1}^{n-1}$ is a  martingale difference with respect to $\F_i$. Hence, using  \Cref{lem:bound_Q_i_and_Sigma_n} and \Cref{ass:noise_decomposition}, we get 
\begin{equation}
\PE^{1/2}[\norm{D_{n,3}}^2] \leq \frac{C_{\Sigma}C_{Q}}{\sqrt{n}}\left(\sum_{i=1}^{n-1}\PE[\norm{g(\theta_{i-1},\xi_i)}^2]\right)^{1/2}
\leq \frac{C_{\Sigma}C_{Q}L_2}{\sqrt{n}}\left(\PE[\sum_{i=1}^{n-1}\norm{\theta_{i-1}-\thetas}^2]\right)^{1/2}\eqsp.
\end{equation}
Hence, it is enough to upper bound $\PE[\norm{\theta_{i}-\thetas}^{2p}]$ for $p = 1$ and $p = 2$ and $i \in \{0,\ldots,n-2\}$. Using \Cref{lem:bound_last_iter_second_moment} and \Cref{lem:bound_sum_exponent}, we obtain 
\begin{align}
&\left(\sum_{i=0}^{n-2}\PE[\norm{\theta_{i}-\thetas}^2]\right)^{1/2} \leq \left(\sum_{i=0}^{n-2} C_1\exp\biggl\{ -\frac{\mu c_0}{4}(i+k_0)^{1-\gamma}\biggr\}[\norm{\theta_0-\thetas}^2 + \sigma_2^2] + C_2\sigma_2^2\alpha_i\right)^{1/2}\\
&\qquad  \leq \sqrt{C_1}\sqrt{T_1\biggl(\frac{\mu c_0}{4}\biggr)}[\norm{\theta_0-\thetas} + \sigma_2] + \sqrt{C_2}\sigma_2\sqrt{c_0}\left(\frac{(n-2+k_0)^{1-\gamma} -(k_0-1)^{1-\gamma}}{1-\gamma}\right)^{1/2}, 
\end{align}
where $T_1(\cdot)$ is defined in \eqref{def:const_T_1_T_2}. Using \Cref{cor:fourth_moment_bound_last_iterate} and \Cref{lem:bound_sum_exponent}, we get 
\begin{align}
    &\sum_{i=0}^{n-2}\PE^{1/2}[\norm{\theta_{i}-\thetas}^4] \leq \sum_{i=0}^{n-2} \sqrt{C_{4,1}}\exp\biggl\{ -\frac{\mu c_0}{4}i^{1-\gamma}\biggr\}[\norm{\theta_0-\thetas}^2 + \sigma_4^2] + \sqrt{C_{4,2}}\sigma_4^2\alpha_i\\& \leq \sqrt{C_{4,1}}T_1\biggl(\frac{\mu c_0}{4}\biggr)[\norm{\theta_0-\thetas}^2 + \sigma_4^2] + \sqrt{C_{4,2}}\sigma_4^2 c_0\left(\frac{(n-2+k_0)^{1-\gamma}- (k_0-1)^{1-\gamma}}{1-\gamma}\right)\eqsp.
\end{align}
We finish the proof, using simple inequality $(n-2+k_0)^{1-\gamma}- (k_0-1)^{1-\gamma} \leq n^{1-\gamma}$
\end{proof}

\subsection{Bounds for $D_n-D_n^{(i)}$}
\begin{lemma}
\label{lem:bound_sum_Dn-Dni}
        Assume \Cref{ass:L-smooth}, \Cref{ass:hessian_Lipschitz_ball}, \Cref{ass:noise_decomposition}($4$) and \Cref{ass:step_size}. Then it holds that 
        \begin{equation}
            \sum_{i=1}^{n-1}\PE^{1/2}[\norm{D_n-D_n^{(i)}}^2] \leq \frac{M_{2,1}}{\sqrt{n}}(\norm{\theta_0-\thetas}+\norm{\theta_0-\thetas}^2+\sigma_2+ \sigma_4^2) + M_{2,2}\sigma_4^2n^{1/2-\gamma}+  M_{2,3}\sigma_2n^{1/2-\gamma/2},
        \end{equation}
            where 
    \begin{equation}
    \label{eq:def_const_M_2}
        \begin{split}         &M_{2,1}=C_{\Sigma}C_QT_1(\frac{\mu c_0}{8})T_2(\frac{\mu c_0}{1-\gamma})(L_2+L_H)\max(\sqrt{2(C_1+c_0^2k_0^{-\gamma}R_1R_2)},c^2k_0^{-\gamma}\sqrt{R_{4,1}R_{4,2}})\\&M_{2,2}=C_{\Sigma}C_QL_Hc_0\sqrt{R_{4,1}R_{4,3}}T_2(\frac{\mu c_0}{1-\gamma})\frac{1}{1-\gamma}\\&M_{2,3}=\sqrt{2}C_{\Sigma}C_QL_2\sqrt{C_{2}+R_1R_3c_0T_2(\frac{\mu c_0}{1-\gamma})}\frac{1}{1-\gamma/2} \eqsp.
        \end{split}
    \end{equation}
    Constants $R_1, R_2, R_3$ are defined in \eqref{eq:def_R_1_R_2_R_3} and constants $R_{4,1}, R_{4,2}, R_{4,3}$ are defined \eqref{eq:def_R_41_R_42_R_43}.
\end{lemma}
\begin{proof}
Using Minkowski's inequality and the definition of $D_n$ and $D_n^{(i)}$, we obtain 
\begin{equation}
    \sum_{i=1}^{n-1}\PE^{1/2}[\norm{D_n-D_n^{(i)}}^2] \leq \sum_{i=1}^{n-1}\PE^{1/2}[\norm{D_{n,2}-D_{n,2}^{(i)}}^2] +\sum_{i=1}^{n-1}\PE^{1/2}[\norm{D_{n,3}-D_{n,3}^{(i)}}^2] 
\end{equation}
 Define $\F_j^{(i)} = \F_j$ if $j\leq i$ and $\F_j^{(i)} = \sigma(\F_j \vee \sigma(\xi_i'))$ otherwise. Then $\{g(\theta_{j-1},\xi_j)-g(\theta_{j-1}^{(i)},\widetilde{\xi}_j)\}_{j=1}^{n-1}$ is a martingale difference with respect to $\F_j^{(i)}$. Hence, we have, using \Cref{lem:bound_Q_i_and_Sigma_n} and the fact that $\theta_{j-1} = \theta_{j-1}^{(i)}$ for $j \leq i$, we obtain that 
 \begin{align}
&\PE[\norm{D_{n,3}-D_{n,3}^{(i)}}^2] = \PE\norm{\frac{\Sigma_n^{-1/2}}{\sqrt{n}}\sum_{j=1}^{n-1}Q_j(g(\theta_{j-1},\xi_j)-g(\theta_{j-1}^{(i)},\widetilde{\xi}_j))}^2 \\
&\qquad \leq \frac{C_{\Sigma}^2C_Q^2}{n}\PE[\norm{g(\theta_{i-1},\xi_i)-g(\theta_{i-1},\xi_i')}^2] +\frac{C_{\Sigma}^2C_Q^2}{n}\sum_{j=i+1}^{n-1}\PE[\norm{g(\theta_{j-1},\xi_j)-g(\theta_{j-1}^{(i)},\xi_j)}^2]\eqsp.
\end{align}
   Using \Cref{ass:noise_decomposition} and \Cref{lem:bound_Q_i_and_Sigma_n}, we get 
   \begin{equation}
       \PE[\norm{D_{n,3}-D_{n,3}^{(i)}}^2] \leq \frac{2C_{\Sigma}^2C_Q^2L_2^2}{n}\PE[\norm{\theta_{i-1}-\thetas}^2] +\frac{C_{\Sigma}^2C_Q^2L_2^2}{n}\sum_{j=i+1}^{n-1}\PE[\norm{\theta_{j-1}-\theta_{j-1}^{(i)}}^2]\eqsp.
   \end{equation}
   Using \Cref{lem:bound_second_moment_difference} and \Cref{lem:bound_sum_exponent}, we obtain 
   \begin{align}
       &\sum_{j=i+1}^{n-1}\PE[\norm{\theta_{j-1}-\theta_{j-1}^{(i)}}^2] \leq R_1R_2\exp\biggl\{-\frac{\mu c_0}{4}(i+k_0-1)^{1-\gamma}\biggr\}\alpha_i^2(\norm{\theta_{0}-\thetas}^2 + \sigma_2^2)T_2\biggl(\frac{\mu c_0}{1-\gamma}\biggr)(i+k_0)^{\gamma} \\ 
      & \qquad \qquad \qquad \qquad \qquad \qquad \qquad \qquad \qquad \qquad \qquad  + R_1R_3\sigma_2^2\alpha_i^2T_2\biggl(\frac{\mu c_0}{1-\gamma}\biggr)(i+k_0)^{\gamma} \\
      & \qquad \leq R_1R_3\sigma_2^2c_0T_2\biggl(\frac{\mu c_0}{1-\gamma}\biggr)\alpha_i + R_1R_2c_0^2k_0^{-\gamma}T_2\biggl(\frac{\mu c_0}{1-\gamma}\biggr)\exp\biggl\{-\frac{\mu c_0}{4}(i+k_0-1)^{1-\gamma}\biggr\}(\norm{\theta_{0}-\thetas}^2+\sigma_2^2)\eqsp.
   \end{align}
   Combining inequalities above, we get 
   \begin{align}
       \sum_{i=1}^{n-1}\PE^{1/2}[\norm{D_{n,3}-D_{n,3}^{(i)}}^2] &\leq \frac{\sqrt{2}C_{\Sigma}C_QL_2}{\sqrt{n}}\sqrt{C_1+ c_0^2k_0^{-\gamma}R_1R_2T_2\biggl(\frac{\mu c_0}{1-\gamma}\biggr)}T_1\biggl(\frac{\mu c_0}{8}\biggr)(\norm{\theta_0-\thetas}+\sigma_2)\\ &+\frac{\sqrt{2}C_{\Sigma}C_QL_2}{\sqrt{n}}\sqrt{C_2+ R_1R_3c_0T_2\biggl(\frac{\mu c_0}{1-\gamma}\biggr)}\sigma_2\left(\frac{(n+k_0-2)^{1-\gamma/2}-(k_0-1)^{1-\gamma/2}}{1-\gamma/2}\right)\eqsp.
   \end{align}
   We now proceed with $\sum_{i=1}^{n-1}\PE^{1/2}[\norm{D_{n,2}-D_{n,2}^{(i)}}^2]$.
   Using Minkowski's inequality together with  \Cref{lem:bound_Q_i_and_Sigma_n} and  \Cref{lem:H_theta_bound}, we get 
   \begin{equation}
       \PE^{1/2}[\norm{D_{n,2}-D_{n,2}^{(i)}}^2] \leq \frac{C_{\Sigma} C_Q L_H}{\sqrt{n}}\sum_{j=i+1}^{n-1}\PE^{1/2}[\norm{\theta_{j-1}-\theta_{j-1}^{(i)}}^4]\eqsp.
   \end{equation}
    Applying \Cref{lem:bound_fourth_moment_difference} and \Cref{lem:bound_sum_exponent}, we get using that $\alpha_i^2 (i+k_0)^{\gamma} \leq \alpha_0^2k_0^{-\gamma}$ that 
   \begin{align}
    \sum_{j=i+1}^{n-1}\PE^{1/2}[\norm{\theta_{j-1}-\theta_{j-1}^{(i)}}^4] &\leq c_0^2k_0^{-\gamma}\sqrt{R_{4,1}R_{4,2}}T_2(\frac{\mu c_0}{1-\gamma})\exp\{-\frac{\mu c_0}{4}(i+k_0-1)^{1-\gamma}\}(\norm{\theta_{0}-\thetas}^2 + \sigma_4^2) \\
       & \qquad \qquad \qquad \qquad + \alpha_ic_0\sqrt{R_{4,1}R_{4,3}}T_2(\frac{\mu c_0}{1-\gamma})\sigma_4^2\eqsp. 
   \end{align}
   Finally, applying \Cref{lem:bound_sum_exponent}, we get 
   \begin{align}
       \sum_{i=1}^{n-1}\PE^{1/2}[\norm{D_{n,2}-D_{n,2}^{(i)}}^2] &\leq \frac{C_{\Sigma} C_Q L_H}{\sqrt{n}}c_0^2k_0^{-\gamma}\sqrt{R_{4,1}R_{4,2}}T_2(\frac{\mu c_0}{1-\gamma})T_1(\frac{\mu c_0}{4})(\norm{\theta_{0}-\thetas}^2 + \sigma_4^2) \\&+ \frac{C_{\Sigma} C_Q L_H}{\sqrt{n}}c_0\sqrt{R_{4,1}R_{4,3}}T_2(\frac{\mu c_0}{1-\gamma})\sigma_4^2(\frac{(n+k_0-2)^{1-\gamma}-(k_0-1)^{1-\gamma}}{1-\gamma})\eqsp.
   \end{align}
   We finish the proof, using that $(n-2+k_0)^{\beta}- (k_0-1)^{\beta} \leq n^{\beta}$ for $\beta \in (0,1)$.
\end{proof}

\subsection{Bounds for $\theta_k^{(i)}-\theta_k$}
Let $(\xi_1',\ldots, \xi_{n-1}')$ be an independent copy of $(\xi_1,\ldots, \xi_{n-1})$. For each $1 \leq i \leq n-1$, we construct the sequence $\theta_k^{(i)}$, $1 \leq k \leq n-1$, as follows:
\begin{equation}
\label{eq:independent_replace_construct}
\theta_k^{(i)} = 
    \begin{cases}
     \theta_k\eqsp, \qquad &\text{if }k<i\\
     \theta_{k-1}^{(i)} - \alpha_k(\nabla f(\theta_{k-1}^{(i)}) + g(\theta_{k-1}^{(i)}, \xi_k')+ \eta(\xi_k')) \eqsp,\qquad&\text{if }k=i\\
     \theta_{k-1}^{(i)} - \alpha_k(\nabla f(\theta_{k-1}^{(i)}) + g(\theta_{k-1}^{(i)}, \xi_k)+\eta(\xi_k))\eqsp,\qquad&\text{if }k>i \eqsp.
    \end{cases}
\end{equation}

\begin{lemma}
\label{lem:bound_second_moment_difference}
   Assume \Cref{ass:L-smooth}, \Cref{ass:hessian_Lipschitz_ball}, \Cref{ass:noise_decomposition}($2$) and \Cref{ass:step_size}. Then for any $k \in \nset$ and $1 \leq i \leq n-1$ it holds  
    \begin{equation}
        \PE[\norm{\theta_k^{(i)}-\theta_k}^2] \leq  \alpha_i^2R_1\exp\biggl\{-2\mu\sum_{j=i+1}^{k}\alpha_j\biggr\}\biggl(R_2\exp\biggl\{-\frac{\mu c_0}{4}(i+k_0-1)^{1-\gamma}\biggr\}(\norm{\theta_{0}-\thetas}^2 + \sigma_2^2) + R_3\sigma_2^2
    \biggr),
    \end{equation}
    where we have set
    \begin{equation}
    \label{eq:def_R_1_R_2_R_3}
        R_1 = 4 \exp\biggl\{\frac{2 c_0^2(L_1+L_2)^2}{2\gamma-1}\biggr\}\eqsp, \qquad R_2 = L_2^2C_1, \qquad  R_3 =(1+C_2L_2) \eqsp.
    \end{equation}
    And constant $C_1$ and $C_2$ are defined in \Cref{lem:bound_last_iter_second_moment}.
    \end{lemma}
\begin{proof}
    By construction \eqref{eq:independent_replace_construct}, we have 
    \begin{equation}
    \theta_k^{(i)}-\theta_k = 
    \begin{cases}
     0\eqsp, \quad &\text{if }k<i\\
      - \alpha_k\bigl(g(\theta_{k-1}, \xi_k')+\eta(\xi_k')-g(\theta_{k-1}, \xi_k)-\eta(\xi_k)\bigr)\eqsp,\quad&\text{if }k=i\\
     \theta_{k-1}^{(i)}-\theta_{k-1}-\alpha_k\bigl(\nabla f(\theta_{k-1}^{(i)}) - \nabla f (\theta_{k-1}) +g(\theta_{k-1}^{(i)}, \xi_k)-g(\theta_{k-1}, \xi_k)\bigr) \eqsp,\quad&\text{if }k>i\\
    \end{cases}
    \end{equation}
    Since $\xi_i'$ is independent copy of $\xi_i$, we obtain 
    \begin{align}
        \PE[\norm{\theta_i^{(i)}-\theta_i}^2] &\overset{(a)}{\leq} 4\alpha_i^2(L_2^2\PE[\norm{\theta_{i-1}-\thetas}^2] + \sigma_2^2) \\&\overset{(b)}{\leq} 4\alpha_i^2\biggl(L_2^2C_1\exp\biggl\{-\frac{\mu c_0}{4}(i+k_0-1)^{1-\gamma}\biggr\}(\norm{\theta_{0}-\thetas}^2 + \sigma_2^2) + (1+C_2L_2)\sigma_2^2
    \biggr),
    \end{align}
    where in (a) we used \Cref{ass:noise_decomposition}, and in (b) we used \Cref{lem:bound_last_iter_second_moment} and $\alpha_{k-1}L_2 \leq 1$.
    For $k>i$, applying \cref{ass:noise_decomposition} and \cref{ass:L-smooth}, we have 
    \begin{align}
         \PE [\norm{\theta_k^{(i)}-\theta_k}^2|\F_{k-1}] &\leq \norm{\theta_{k-1}^{(i)}-\theta_{k-1}}^2 -2\alpha_k\langle\theta_{k-1}^{(i)}-\theta_{k-1}, \nabla f(\theta_{k-1}^{(i)}) - \nabla f (\theta_{k-1}) \rangle \\&\qquad + 2\alpha_k^2(L_1+L_2)^2\norm{\theta_{k-1}^{(i)}-\theta_{k-1}}^2\eqsp.
    \end{align}
    Taking expectation from both sides and applying \Cref{ass:L-smooth} with \Cref{lem:bounds_on_sum_step_sizes}\ref{eq:sum_alpha_k_p}, we obtain  
    \begin{align}
        \PE [\norm{\theta_k^{(i)}-\theta_k}^2] &\leq (1-2\alpha_k\mu + 2\alpha_k^2(L_1+L_2)^2)\PE[\norm{\theta_{k-1}^{(i)}-\theta_{k-1}}]^2 \\&\leq \exp\biggl\{\frac{2c_0^2(L_1+L_2)^2}{2\gamma-1}\biggr\}\exp\biggl\{-2\mu\sum_{j=i+1}^{k}\alpha_j\biggr\}\PE[ \norm{\theta_i^{(i)}-\theta_i}^2]\eqsp.
    \end{align}
    Combining the above inequalities completes the proof.
\end{proof}

\begin{lemma}
\label{lem:bound_fourth_moment_difference}
   Assume \Cref{ass:L-smooth}, \Cref{ass:hessian_Lipschitz_ball}, \Cref{ass:noise_decomposition}($4$) and \Cref{ass:step_size}. Then for any $k \in \nset$ and $1 \leq i \leq n-1$ it holds  
    \begin{equation}
        \PE[\norm{\theta_k^{(i)}-\theta_k}^4] \leq\alpha_i^4R_{4,1}\exp\biggl\{ -4\mu\sum_{j=i+1}^k\alpha_j\biggr\}\biggl(R_{4,2}\exp\{-\frac{2\mu c_0}{4}(i+k_0-1)^{1-\gamma}\}(\norm{\theta_{0}-\thetas}^4 + \sigma_4^4) + R_{4,3}\sigma_4^4\biggr)
    \end{equation}
    where we have set
    \begin{equation}
    \label{eq:def_R_41_R_42_R_43}
    R_{4,1} = 64\exp\biggl\{ \frac{4(L_1+L_2)^2(1+3c_0(L_1+L_2))^2)}{2\gamma-1}\biggr\}\eqsp, \quad R_{4,2} = L_2^4C_{4,1}\eqsp, \quad R_{4,3} =1+ L_2^2C_{4,2}\eqsp.
    \end{equation}
    And constant $C_{4,1}$, $C_{4,2}$ are defined in \Cref{cor:fourth_moment_bound_last_iterate}.
    \end{lemma}
\begin{proof}
Repeating the proof of the \Cref{lem:bound_second_moment_difference} for $k = i$, we get
\begin{align}
        \PE[\norm{\theta_i^{(i)}-\theta_i}^4] &\leq64\alpha_i^4(L_2^4\PE[\norm{\theta_{i-1}-\thetas}^4] + \sigma_4^4) \\& \leq 64\alpha_i^4\biggl(L_2^4C_{4,1}\exp\biggl\{-\frac{2\mu c_0}{4}(i+k_0-1)^{1-\gamma}\biggr\}(\norm{\theta_{0}-\thetas}^4 + \sigma_4^4) + (1+L_2^2C_{4,2})\sigma_4^4\biggr)\eqsp.
\end{align}
For $k>i$ we denote $\delta_k^{(i)} = \norm{\theta_k^{(i)}-\theta_k}$, similar to \eqref{eq:coupling_recurence}, we obtain 
\begin{align}
E[\{\delta_k^{(i)}\}^4|\F_{k-1}] \leq (1 -4\mu\alpha_k + 4\alpha_k^2(L_1+L_2)^2(1+3c_0(L_1+L_2))^2)\{\delta_{k-1}^{(i)}\}^4\eqsp.
\end{align}
Using 
\Cref{lem:bounds_on_sum_step_sizes}\ref{eq:sum_alpha_k_p}, we obtain
\begin{equation}
    E[\{\delta_k^{(i)}\}^4]\leq \exp\biggl\{ \frac{4(L_1+L_2)^2(1+3c_0(L_1+L_2))^2)}{2\gamma-1}\biggr\}\exp\biggl\{ -4\mu\sum_{j=i+1}^k\alpha_j\biggr\} \PE[\norm{\theta_i^{(i)}-\theta_i}^4]\eqsp.
\end{equation}
Combining the above inequalities completes the proof.
\end{proof}

\section{Proof of Theorem \ref{GAR bootstrap_main}}
\label{subsec:GAR in the bootstrap world}
Since the matrix $\Sigma_n^\boot$ concentrates around $\Sigma_n$ due to \Cref{lem:matrix_bernstein} , there is a set $\Omega_1$ such that $\PP(\Omega_1) \geq 1-1/n$ and  $\lambda_{\min}(\Sigma_n^\boot) > 0$ on $\Omega_1$. Moreover, on this set 
    Applying Lemma \ref{nonlinearapprox} with 
    $$
    X = \{\Sigma_n^\boot\}^{-1/2} W^\boot, \quad Y = \{\Sigma_n^\boot\}^{-1/2} D^\boot,
    $$
    we get
    \begin{equation}
\begin{split}
    &\sup_{B \in \Conv(\rset^d)}|\PPb( \sqrt n  \{\Sigma_n^\boot\}^{-1/2} (\bar{\theta}_{n}^\boot - \bar{\theta}_n) \in B) - \PPb(Y^\boot \in B)| \\
    & \qquad\qquad\le \sup_{B \in \Conv(\rset^d)}|\PPb(\{\Sigma_n^\boot\}^{-1/2} W^\boot \in B) - \PPb(Y^\boot \in B)|  + 2 c_d (\PEb[\|\{\Sigma_n^\boot\}^{-1/2} D^\boot\|^p])^{1/(1+p)} \eqsp. 
    \end{split}
\end{equation}
By \citep{shao2022berry} (with $D = 0$) we may estimate 
\begin{equation}
    \label{CLT boot lin stat proof}
    \begin{split}
    &\sup_{B \in \Conv(\rset^d)}|\PPb(\{\Sigma_n^\boot\}^{-1/2} W^\boot \in B) - \PPb(Y^\boot \in B)| \\
    &\qquad\qquad\qquad\qquad\le \frac{259 d^{1/2}}{n^{3/2}} \sum_{i = 1}^n \PEb[|w_i-1|^3] \| (\{\Sigma_n^\boot\}^{-1/2} Q_i 
    \eta(\xi_i)\|^3 \eqsp.
    \end{split}
\end{equation}
Applying Lemma \ref{lem:bound_Q_i_and_Sigma_n} and Corollary \ref{lambda min boot} we get 
\begin{equation}
    \label{CLT boot lin stat proof 2}
    \sup_{B \in \Conv(\rset^d)}|\PPb(\{\Sigma_n^\boot\}^{-1/2} W^\boot \in B) - \PPb(Y^\boot \in B)| \le \frac{259 d^{1/2} (\sqrt{2} C_\Sigma  C_Q C_{1, \xi})^3 W_{\max}}{n^{1/2}} \eqsp.
\end{equation}
From Proposition \ref{prop:prob-D-boot-bound} and Corollary \ref{lambda min boot} it follows that on the set we $\Omega_0 \cap \Omega_1$ the following bound is satisfied 
\begin{equation}
(\PEb[\|\{\Sigma_n^\boot\}^{-1/2} D^\boot\|^p])^{1/(p+1)} \leq \sqrt{2}C_{\Sigma}(M_{1,1}^{\boot}\rme^{1/p}p^{3/2}n^{1/p-\gamma/2} +  M_{2,1}^{\boot}\rme^{2/p}p n^{1/2+1/p-\gamma})^{p/(p+1)} \eqsp.
\end{equation}
Since $p\geq 2, M_{1,1}^{\boot}, M_{2,1}^{\boot} \geq 1$, we obtain 
\begin{equation}
    (\PEb[\|\{\Sigma_n^\boot\}^{-1/2} D^\boot\|^p])^{1/(p+1)} \leq 
    \sqrt{2} C_{\Sigma}(\rme^{1/2} M_{1,1}^{\boot}p^{3/2} n^{\frac{1}{p+1}}n^{-\gamma/2}n^{\frac{\gamma/2}{(p+1)}} +  \rme M_{2,1}^{\boot} p n^{\frac{1}{p+1}}n^{1/2-\gamma}n^{-\frac{1/2-\gamma}{p+1}}) \eqsp.
\end{equation}
Setting $p = \log n - 1$, we get 

\begin{equation}
    (\PEb[\|\{\Sigma_n^\boot\}^{-1/2} D^\boot\|^p])^{1/(p+1)} \leq 
    \sqrt{2}C_{\Sigma}(M_{1,1}^{\boot}(\log n)^{3/2} e^{3/2+\gamma/2}n^{-\gamma/2}+  M_{2,1}^{\boot} (\log n) e^{3/2+\gamma}n^{1/2-\gamma})\eqsp.
\end{equation}
Setting 
\begin{equation}
\label{M_3i boot def}
\begin{split}
    &M_{3,1}^\boot = 259  (\sqrt{2} C_\Sigma C_Q C_{1, \xi})^3W_{\max} \sqrt{d} \eqsp,\\ 
    &M_{3,2}^\boot =2^{3/2}c_dC_{\Sigma}M_{2,1}^{\boot}  e^{3/2+\gamma}\eqsp, \\
    &M_{3,2}^\boot =  2^{3/2}c_d C_{\Sigma}M_{1,1}^{\boot} e^{3/2+\gamma/2}\eqsp, 
\end{split}   
\end{equation}
and $M_{1,1}^{\boot}, M_{2,1}^{\boot}$ are defined in \eqref{eq:def_M_11_M_21_boot} and 
combining the above inequalities, we complete the proof.

\begin{remark}
We use \citep{shao2022berry} with $D = 0$ to prove \eqref{CLT boot lin stat proof} since we are not aware of Berry-Esseen results for non i.i.d. random vectors in dimension $d$ with precise constants and dependence on $d$. The result \citet{bentkus2004} may be applied for i.i.d. vectors only. 
\end{remark}

\subsection{From non-linear to linear statistics}
\label{sec: from nonlinear to linear boot}
In this section we prove \eqref{crude bound for nonlinear stat}. We start from the definition of an isoperimetric constant.  
Define 
\[
A^{\varepsilon} = \{x \in \rset^d : \rho_A(x) \leq \varepsilon\}
\quad \text{and} \quad
A^{-\varepsilon} = \{x \in A : B_{\varepsilon}(x) \subset A\},
\]
where $\rho_A(x) = \inf\limits_{y \in A} \|x - y\|$ is the distance between $A \subset \rset^d$ and $x \in \rset^d$, and 
\[
B_{\varepsilon}(x) = \{y \in \rset^d : \|x - y\| \leq \varepsilon\}.
\]
For some class $\mathcal A$ of subsets of $\rset^d$ we define its isoperimetric constant $a_d(\mathscr{A})$ (depending only on $d$ and $\mathscr{A}$)  as follows: for all $A \in \mathscr{A}$  and $\varepsilon > 0$,
\[
\mathbb{P} \{ Y \in A^{\varepsilon} \setminus A \} \leq a_d \varepsilon, \quad
\mathbb{P} \{ Y \in A \setminus A^{-\varepsilon} \} \leq a_d \varepsilon
\]
where  $Y$ follows the standard Gaussian distribution on $\rset^d$. \citet{ball_reverse_1993} has proved that
\begin{equation}
\label{ball}
    e^{-1} \sqrt{\ln d} \leq \sup_{A \in \mathscr{C}} \int_{\partial A} p(x) \, \mathrm{d}s \leq 4 d^{1/4},
\end{equation}
where $p(x)$ is the standard normal $d$-dimensional density and $ds$ is the surfac emeasure on the boundary $\partial A$ of $A$. Using \eqref{ball} one can show that for the class of convex sets 
\begin{equation}
\label{eq: isoperimetric const convex}
    e^{-1} \sqrt{\ln d} \leq a_d(\Conv(\rset^d)) \le   4 d^{1/4} \eqsp. 
\end{equation}
We denote $c_d = a_d(\Conv(\rset^d))$. 
\begin{proposition}
\label{nonlinearapprox}
Let $\nu$ be a standard Gaussian measure in $\rset^d$. Then for any random vectors $X, Y$ taking values in $\rset^d$, and any $p \geq 1$,
  \begin{equation}
\label{eq:shao_zhang_bound}
\sup_{B \in \Conv(\rset^d)}|\PP(X + Y \in B) - \nu(B)| \le \sup_{B \in \Conv(\rset^d)}|\PP(X
\in B) - \nu(B)| + 2 c_d^{p/(p+1)} \PE^{1/(p+1)}[\|Y\|^p]\eqsp, 
\end{equation}
where $c_d$ is the isoperimetric constant of class $\Conv(\rset^d)$.
\end{proposition}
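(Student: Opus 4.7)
The plan is to carry out a standard smoothing argument based on the sandwiching of $\{X+Y \in B\}$ by enlargements and contractions of $B$, combined with Markov's inequality and the isoperimetric property of the Gaussian measure on convex sets.

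First, I would fix a convex set $B \in \Conv(\rset^d)$ and $\varepsilon > 0$, and use the inclusions
\begin{equation}
\{X+Y \in B\} \subset \{X \in B^{\varepsilon}\} \cup \{\|Y\| > \varepsilon\}, \qquad \{X+Y \in B\} \supset \{X \in B^{-\varepsilon}\} \setminus \{\|Y\| > \varepsilon\},
\end{equation}
noting that both $B^{\varepsilon}$ and $B^{-\varepsilon}$ are again convex when $B$ is convex. This yields
\begin{equation}
\PP(X \in B^{-\varepsilon}) - \PP(\|Y\| > \varepsilon) \leq \PP(X+Y \in B) \leq \PP(X \in B^{\varepsilon}) + \PP(\|Y\| > \varepsilon).
\end{equation}

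Next, denote $\Delta = \sup_{B \in \Conv(\rset^d)} |\PP(X \in B) - \nu(B)|$. Applying this bound to $B^{\varepsilon}$ and $B^{-\varepsilon}$ and combining with the isoperimetric inequality $\nu(B^{\varepsilon} \setminus B) \leq c_d \varepsilon$ and $\nu(B \setminus B^{-\varepsilon}) \leq c_d \varepsilon$ from \eqref{eq: isoperimetric const convex}, I obtain
\begin{equation}
|\PP(X+Y \in B) - \nu(B)| \leq \Delta + c_d \varepsilon + \PP(\|Y\| > \varepsilon).
\end{equation}
Markov's inequality then gives $\PP(\|Y\| > \varepsilon) \leq \varepsilon^{-p} \PE[\|Y\|^p]$, so the right-hand side becomes $\Delta + c_d \varepsilon + \varepsilon^{-p} \PE[\|Y\|^p]$.

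Finally, I would optimize over $\varepsilon > 0$ by choosing $\varepsilon$ so that the two error terms balance, namely $\varepsilon^{p+1} = \PE[\|Y\|^p]/c_d$. Substituting yields $c_d \varepsilon + \varepsilon^{-p}\PE[\|Y\|^p] = 2 c_d^{p/(p+1)} \PE[\|Y\|^p]^{1/(p+1)}$, which matches the target bound after taking supremum over $B \in \Conv(\rset^d)$. There is no serious obstacle: the only thing to check carefully is that $B^{\varepsilon}$ and $B^{-\varepsilon}$ lie in $\Conv(\rset^d)$ so that $\Delta$ applies, and that the definition of $c_d$ in the paragraph preceding the proposition is exactly the Gaussian surface bound used here.
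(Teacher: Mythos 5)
Your proposal is correct and follows essentially the same route as the paper's proof: enlargement/erosion of convex sets, Markov's inequality for $\PP(\|Y\|>\varepsilon)$, the Gaussian isoperimetric bound $c_d\varepsilon$ for $\nu(B^{\varepsilon}\setminus B)$ and $\nu(B\setminus B^{-\varepsilon})$, and the same optimizing choice $\varepsilon = c_d^{-1/(p+1)}\PE^{1/(p+1)}[\|Y\|^p]$. The only difference is organizational: you use a two-sided sandwich of $\{X+Y\in B\}$, whereas the paper splits into the cases $\rho(B)\ge 0$ and $\rho(B)<0$ (treating $B^{-\varepsilon}=\emptyset$ separately), which your argument handles implicitly since the empty erosion case is trivial.
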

\begin{proof}
Let $\varepsilon \geq 0$. Define
$\rho(B) = \PP(X + Y \in B) - \nu(B)$. 
Let $B$ be such that $\rho(B) \geq 0$.  By Markov's inequality
\begin{multline}
\rho(B) \le \PP(X + Y \in B, |Y| \le \varepsilon) + \frac{1}{\varepsilon^{p}} \PE [\|Y\|^p] - \nu(B) \\
    \le \sup_{A}|\PP(X \in A) - \nu(A)| + \PP(Y \in B^\varepsilon \setminus B) + \frac{1}{\varepsilon^{p}} \PE [\|Y\|^p].  
\end{multline}   
Choosing 
\begin{equation}
\label{eps choice}
    \varepsilon = \frac{1}{c_d^{1/(p+1)}}\PE^{1/(p+1)}[\|Y\|^p]
\end{equation}
we obtain
\begin{equation}
\sup_{B}|\PP(X + Y \in B) - \nu(B)| \le \sup_{B }|\PP(X \in B) - \nu(B)| + 2 c_d^{p/(p+1)} \PE^{1/(p+1)}[\|Y\|^p] \eqsp. 
\end{equation}
Assume now that $\rho(B) < 0$. We distinguish between $B^{-\varepsilon} = \emptyset$ or $B^{-\varepsilon} \neq \emptyset$. In the first case, $\PP(Y \in B^{-\varepsilon}) = 0$ and
$$
-\rho(B) \le \gamma(B) = \PP(Y \in B) - \PP(Y \in B^{-\varepsilon}) = \PP(Y \in B \setminus B^{-\varepsilon}) \leq c_d \varepsilon.  
$$
Finally, in the case $B^{-\varepsilon} \neq \emptyset$,
$$
-\rho(B) \le \sup_{A}|\PP(X \in A) - \nu(A)| + \PP(Y \in B \setminus B^{-\varepsilon}) + \frac{1}{\varepsilon^{p}} \PE [\|Y\|^p]\eqsp. 
$$
Taking $\varepsilon$ as in \eqref{eps choice}
we conclude the proof.
\end{proof}

\subsection{Bounds for $D^\boot$}
Recall that the term $D^\boot$ defined in \eqref{eq:D_boot-def}, has a form: 
\begin{equation}
\label{eq:D_boot-def-appendix}
\begin{split}
        D^\boot &= -\frac{1}{\sqrt{n}}\sum_{i=1}^{n-1}(w_i-1) Q_{i}\biggl( G(\theta_{i-1}^{\boot}-\thetas) + g(\theta_{i-1}^{\boot}, \xi_i) + H(\theta_{i-1}^{\boot})\biggr) \\&- \frac{1}{\sqrt{n}}\sum_{i=1}^{n-1}Q_i\biggl(H(\theta_{i-1}^{\boot}) +g(\theta_{i-1}^{\boot}, \xi_i) -  H(\theta_{i-1}) -g(\theta_{i-1}, \xi_i)\biggr)\eqsp. 
\end{split}
\end{equation}
To prove \Cref{GAR bootstrap_main}, we need to obtain a high-probability bound for the non-linear statistic $D^\boot$.
To this end, we first derive a bound on $\PE^{1/p}[\|D^\boot\|^p]$, where the expectation is taken with respect to the joint distribution of the bootstrap weights and the data $\Xi^{n-1}$.
We then apply Markov's inequality to convert this moment bound into a high-probability bound.
\begin{proposition} \label{prop:prob-D-boot-bound}
Assume \Cref{ass:L-smooth}-  \Cref{ass:step_size_new_boot}. Then it holds for any $p \geq 2$ that 
\begin{equation}
\label{eq:D-boot-joint-bound}
\PE^{1/p}[\norm{D^\boot}^p] \leq M_{1,1}^{\boot}\rme^{1/p}p^{3/2}n^{-\gamma/2} +  M_{2,1}^{\boot}\rme^{2/p}p n^{1/2-\gamma}\eqsp,
\end{equation}
where the constants are given by 
\begin{equation}
\label{eq:def_M_11_M_21_boot}
\begin{split} 
&M_{1,1}^{\boot} =4C_Q\max(L_1,L_2)\frac{\max(\sqrt{K_2},\sqrt{K_1})\sqrt{c_0k_0^{1-\gamma}}(W_{\max}+1)}{\sqrt{2}(1-\gamma)}\eqsp,\\
& M_{2,1}^{\boot}=  3C_QL_H\frac{c_0k_0^{1-\gamma}\max(K_2,K_1)(W_{\max}+1)}{2(1-\gamma)} \eqsp,
\end{split}
\end{equation}
 and $K_1, K_2$ are defined in \eqref{eq:def_K_1}, \eqref{eq:def_K_2}, respectively.
Moreover, there is a set $\Omega_0 \in \F_{n-1} = \sigma(\xi_1,\ldots,\xi_{n-1})$, such that $\PP(\Omega_0) \geq 1-1/n$, and on $\Omega_0$ it holds that
\begin{equation}
\label{eq:D-boot-high-prob-bound}
\{\PEb[\norm{D^\boot}^{p}]\}^{1/p} \leq M_{1,1}^{\boot}\rme^{1/p}p^{3/2}n^{1/p-\gamma/2} +  M_{2,1}^{\boot}\rme^{2/p}p n^{1/2+1/p-\gamma}\eqsp.
\end{equation}
\end{proposition}
\begin{proof}
We first show \eqref{eq:D-boot-joint-bound}. We split 
\[
D^\boot = D^\boot_{1} + D^\boot_{2}\eqsp,
\]
where 
\begin{equation}
\begin{split}
D^\boot_{1} &= -\frac{1}{\sqrt{n}}\sum_{i=1}^{n-1}(w_i-1) Q_{i}\bigl( G(\theta_{i-1}^{\boot}-\thetas) + g(\theta_{i-1}^{\boot}, \xi_i)\bigr) - \frac{1}{\sqrt{n}}\sum_{i=1}^{n-1}Q_i\bigl(g(\theta_{i-1}^{\boot}, \xi_i) -g(\theta_{i-1}, \xi_i)\bigr)\eqsp, \\
D^\boot_{2} &= -\frac{1}{\sqrt{n}}\sum_{i=1}^{n-1}(w_i-1) Q_{i} H(\theta_{i-1}^{\boot}) - \frac{1}{\sqrt{n}}\sum_{i=1}^{n-1}Q_i\bigl(H(\theta_{i-1}^{\boot})-  H(\theta_{i-1})\bigr)\eqsp.
\end{split}
\end{equation}
Applying Minkowski's inequality together with \Cref{lem:bound_p_moment_martingale_components} and \Cref{lem:bound_p_moment_for_H_component} we get \eqref{eq:D-boot-joint-bound}.

To proof \eqref{eq:D-boot-high-prob-bound} we consider 
\begin{equation}
    \Omega_0 =\{\{\PEb[\norm{D^\boot}^{p}]\}^{1/p} \leq M_{1,1}^{\boot}\rme^{1/p}p^{3/2}n^{1/p-\gamma/2} +  M_{2,1}^{\boot}\rme^{2/p}p n^{1/2+1/p-\gamma}\}\eqsp.
\end{equation}
Note that by Markov's inequality 
\begin{align}
    &\PP(\Omega_0^c) \leq \frac{\PE[\{\PEb[\norm{D^\boot}^{p}]\}]}{n(M_{1,1}^{\boot}\rme^{2/p}p^{3/2}n^{-\gamma/2} +  M_{2,1}^{\boot}\rme^{1/p}p n^{1/2-\gamma})^p}\\& = 
    \frac{\PE[\norm{D^\boot}^{p}]}{n(M_{1,1}^{\boot}\rme^{1/p}p^{3/2}n^{-\gamma/2} +  M_{2,1}^{\boot}\rme^{2/p}p n^{1/2-\gamma})^p} \leq \frac{1}{n}\eqsp.
\end{align}

\end{proof}

\label{sec: Dboot bounds}
\begin{lemma}
\label{lem:bound_p_moment_martingale_components}
Assume \Cref{ass:L-smooth}-\Cref{ass:step_size_new_boot}. Then for any $p\geq 2$ it holds
\begin{equation}
    \PE^{1/p}[\norm{D^\boot_{1}}^p]\leq M_{1,1}^{\boot}\rme^{1/p}p^{3/2}n^{-\gamma/2} \eqsp, 
\end{equation}
where 
\begin{equation}
\label{def:M_11_boot}
    M_{1,1}^{\boot} = 4C_Q\max(L_1,L_2)\frac{\max(\sqrt{K_2},\sqrt{K_1})\sqrt{c_0}(W_{\max}+1)}{\sqrt{2}(1-\gamma)}\eqsp,
\end{equation}
and $K_1, K_2$ are defined in \eqref{eq:def_K_1}, \eqref{eq:def_K_2}, respectively.
\end{lemma}
\begin{proof}
    We split \( D^\boot_{1} \) into four parts, where each part is a sum of martingale differences.
    Note that $\{Q_i(g(\theta_{i-1},\xi_i)-g(\thetas,\xi_i))\}_{i=1}^{n}$ is a martingale difference with respect to $\F_{i-1}$.
    Then applying Burholder's inequality \citep[Theorem 8.6]{oskekowski2012sharp} together with Minkowski's inequality and \Cref{lem:bound_Q_i_and_Sigma_n}, we obtain that 
    \begin{align}   
    &\PE^{1/p}\bigl[\norm{\sum_{i=1}^{n-1}Q_i\bigl(g(\theta_{i-1},\xi_i)-g(\thetas,\xi_i) \bigr)}^{p}\bigr] \\ 
    &\qquad \qquad \leq p\bigl(\PE^{2/p}\bigl[\bigl( \sum_{i=1}^{n-1}\norm{Q_i\bigl(g(\theta_{i-1},\xi_i)-g(\thetas,\xi_i) \bigr)}^2\bigr)^{p/2}\bigr]\bigr)^{1/2}
    \\ 
    &\qquad \qquad \leq C_Qp\bigl(\PE^{2/p}\bigl[\bigl( \sum_{i=1}^{n-1}\norm{g(\theta_{i-1},\xi_i)-g(\thetas,\xi_i)}^2\bigr)^{p/2}\bigr]\bigr)^{1/2}\\ 
    &\qquad \qquad \leq C_Qp\bigl(\sum_{i=1}^{n-1}\PE^{2/p}[\norm{g(\theta_{i-1},\xi_i)-g(\thetas,\xi_i) }^{p}]\bigr)^{1/2}\eqsp.
    \end{align}
Finally, using \Cref{ass:noise_decomposition} and \Cref{lem: high_prob_last_iter}, we obtain 
\begin{align}
&\PE^{1/p}\bigl[\norm{\sum_{i=1}^{n-1}Q_i\bigl(g(\theta_{i-1},\xi_i)-g(\thetas,\xi_i) \bigr)}^{p}\bigr] \leq pC_QL_2\bigl(\sum_{i=1}^{n-1}\PE^{2/p}[\norm{\theta_{i-1} -\thetas}^{p}]\bigr)^{1/2}\\ 
& \qquad \qquad \leq  C_QL_2(\rme n)^{1/p}p^{3/2}\frac{\sqrt{K_2}}{\sqrt{2}}\bigl(\sum_{i=0}^{n-2}\alpha_i\bigr)^{1/2} \\
& \qquad \qquad \leq  C_QL_2(\rme n)^{1/p}p^{3/2}\frac{\sqrt{K_2}}{\sqrt{2}}\bigl(c_0\frac{(k_0+n-2)^{1-\gamma}-(k_0-1)^{1-\gamma}}{1-\gamma}\bigr)^{1/2}\eqsp.
\end{align}
Since $k_0 \geq 1$ and $(k_0+n-2)^{1-\gamma}-(k_0-1)^{1-\gamma}\leq n^{1-\gamma}$ we complete the proof for $$
\PE^{1/p}\bigl[\norm{\sum_{i=1}^{n-1}Q_i\bigl(g(\theta_{i-1},\xi_i)-g(\thetas,\xi_i) \bigr)}^{p}\bigr] \eqsp. 
$$
The proof for other three terms is analogous, since each of the terms 
\begin{align}
\{Q_i\bigl(g(\theta_{i-1}^{\boot},\xi_i)-g(\thetas,\xi_i) \bigr)\}_{i=1}^{n-1}\eqsp,\{(w_i-1)Q_i\bigl(g(\theta_{i-1}^{\boot},\xi_i)-g(\thetas,\xi_i) \bigr)\}_{i=1}^{n-1}\eqsp,
\{(w_i-1)Q_iG(\theta_{i-1}^{\boot}-\thetas)\}_{i=1}^{n-1} \eqsp,
\end{align}
are martingale differences with respect to $\widetilde{\F}_{i-1}$ (see definition in \eqref{eq: extended filtration}). 
We finish the proof applying Minkowski's inequality.
\end{proof}

\begin{lemma}
\label{lem:bound_p_moment_for_H_component}
Assume \Cref{ass:L-smooth}- \Cref{ass:step_size_new_boot}.  Then for any $p\geq 2$ it holds
\begin{align}
    &\PE^{1/p}[\norm{D_2^{\boot}}^p]\leq M_{2,1}^{\boot}\rme^{2/p}p n^{1/2-\gamma}\eqsp,\\
\end{align}
\begin{equation}
\label{def:M_21_boot}
     M_{2,1}^{\boot}= 3C_QL_H\frac{c_0\max(K_2,K_1)(W_{\max}+1)}{2(1-\gamma)}\eqsp,
\end{equation}
and $K_1, K_2$ are defined in \eqref{eq:def_K_1}, \eqref{eq:def_K_2}, respectively.
\end{lemma}
\begin{proof}
 Using Minkowski's inequality, we get 
 \begin{equation}
     \label{eq: D_2 boot bound}
 \begin{split}
          \PE^{1/p}[\norm{D_2^{\boot}}^p] &\leq \frac{1}{\sqrt{n}}\PE^{1/p}[\norm{\sum_{i=1}^{n-1}Q_i H(\theta_{i-1})}^p]\\& +\frac{1}{\sqrt{n}}\PE^{1/p}[\norm{\sum_{i=1}^{n-1}(w_i-1) Q_{i}\biggl(H(\theta_{i-1}^{\boot})\biggr)}^p]\\
     &+ \frac{1}{\sqrt{n}}\PE^{1/p}[\norm{\sum_{i=1}^{n-1}Q_iH(\theta_{i-1}^{\boot})}^p]\eqsp. 
     \end{split}
\end{equation}
We will now consider each term separately.
   Using Minkowski's inequality together with \Cref{lem:H_theta_bound}, we obtain 
   \begin{equation}
   \begin{split}
       &\frac{1}{\sqrt{n}}\PE^{1/p}\biggl[\norm{\sum_{i=1}^{n-1}Q_iH(\theta_{i-1})}^{p}\biggr] \leq \frac{C_QL_H}{\sqrt{n}}\sum_{i=0}^{n-2}\PE^{1/p}\biggl[\norm{\theta_{i}-\thetas}^{2p}\biggr] \\&\qquad \qquad\leq \frac{C_QL_Hp}{\sqrt{n}}(\rme n)^{2/p}(K_2/2)\sum_{i=0}^{n-1}\alpha_i \\&\qquad \qquad \leq \frac{C_QL_Hp}{\sqrt{n}}(\rme n)^{2/p}(K_2/2)\biggl(c_0\frac{(k_0+n-2)^{1-\gamma}-(k_0-1)^{1-\gamma}}{1-\gamma} \biggr)\eqsp.
       \end{split}
   \end{equation}
   Since $k_0 \geq 1$ and $(k_0+n-2)^{1-\gamma}-(k_0-1)^{1-\gamma}\leq n^{1-\gamma}$ we complete the proof for the first term in the r.h.s. of \eqref{eq: D_2 boot bound}. The proof for other two terms is analogous.
\end{proof}

\section{Proof of Theorem \ref{cor:berry-esseen}}
\label{sec:Gar_real_world_sigma_infty}
The result follows immediately from the triangle inequality together with \Cref{th:bound_kolmogorov_dist_pr_sigma_n} and \Cref{lem:bound_kolmogorov_dist_sigma_n_sigma_infty}. Therefore, it remains to prove \Cref{lem:bound_kolmogorov_dist_sigma_n_sigma_infty} to complete the proof.
\subsection{Proof of Lemma \ref{lem:bound_kolmogorov_dist_sigma_n_sigma_infty}}
\label{sec: proof of difference between cov}
By definition of $\Sigma_n$ and $\Sigma_\infty$ we may write
\begin{align}
    \label{repr:Lstar_minus_Ln}
    \Sigma_n - \Sigma_{\infty} = \underbrace{\frac{1}{n}\sum_{t=1}^{n-1} (Q_t - G^{-1})\noisecov G^{-\top} + \frac{1}{n} \sum_{t=1}^{n-1} G^{-1} \noisecov (Q_t - G^{-1})^\top}_{D_1} + \\ + \underbrace{\frac{1}{n} \sum_{t=1}^{n-1} (Q_t - G^{-1}) \noisecov (Q_t - G^{-1})^{\top}}_{D_2} - \frac{1}{n} \Sigma_{\infty} \eqsp. 
\end{align}
The following lemma is an analogue of \citep[pp. 26-30]{wu2024statistical}. 

\begin{lemma} The following identities hold
\begin{equation}
    \label{repr:qtminusa}
    Q_i - G^{-1} = S_i - G^{-1} G_{i:n-1}^{(\alpha)} ,~ S_i = \sum_{j=i+1}^{n-1} (\alpha_i - \alpha_j) G_{i+1:j-1}^{(\alpha)} \eqsp,  
\end{equation}
and
\begin{equation}
    \label{repr:sumqtminusa}
    \sum_{i=1}^{n-1} (Q_i - G^{-1}) = -G^{-1} \sum_{j=1}^{n-1} G_{1:j}^{(\alpha)} \eqsp,
\end{equation}
where 
\begin{equation}
     G_{i:j}^{(\alpha)} = \prod_{k=i}^{j}(I-\alpha_kG)
\end{equation}
\end{lemma}

 \begin{proof}
To prove \eqref{repr:sumqtminusa} we first change the order of summation and then use the  properties of the telescopic sums we get
\begin{align}
\sum_{i=1}^{n-1} Q_i &= \sum_{i=1}^{n-1} \alpha_i \sum_{j=i}^{n-1}\prod_{k=i+1}^{j}(\Id-\alpha_k G) = \sum_{j=1}^{n-1} \sum_{i = 1}^j \alpha_i \prod_{k=i+1}^{j}(\Id-\alpha_k G) \\
& = \sum_{j=1}^{n-1} \sum_{i = 1}^j G^{-1} (\prod_{k = i+1}^{j} -  \prod_{k = i}^{j}) (\Id - \alpha_k G)  = G^{-1} \sum_{j=1}^{n-1} (\Id - \prod_{k = 1}^j (\Id - \alpha_k G)) \eqsp.
\end{align}
The proof of \eqref{repr:qtminusa} could be obtained by the following arguments. Note that
\begin{align}
\alpha_i G Q_i & = Q_i - (\Id - \alpha_i G) Q_i = \\
& = \alpha_i \Id + \alpha_i \sum_{j=i+1}^{n-1}\prod_{k=i+1}^{j}(\Id-\alpha_k G) - \alpha_i \sum_{j=i+1}^{n-1} \prod_{k=i}^{j-1}(\Id - \alpha_k G) - \alpha_i \prod_{k=i}^{n-1} (\Id - \alpha_k G)\eqsp. 
\end{align}  
It remains to note that
$$
\prod_{k=i+1}^{j}(\Id-\alpha_k G) - \prod_{k=i}^{j-1}(\Id - \alpha_k G) = (\alpha_i - \alpha_j) G \prod_{k=i+1}^{j-1} (\Id - \alpha_k G) \eqsp.
$$
The last two equations imply \eqref{repr:qtminusa}.
\end{proof}

\begin{lemma}
\label{bound_G_S}
It holds that 
\begin{enumerate}[(a)]
\item 
\begin{equation}
    \norm{S_i}\leq C_{S}(i + k_0)^{\gamma-1}\eqsp,
\end{equation}
where 
$$
C_{S} = 2c_0\exp\biggl\{\frac{\mu c_0}{k_0^{\gamma}}\biggr\}\biggl(2^{\gamma/(1-\gamma)}\frac{1}{\mu c_0} + (\frac{1}{\mu c_0})^{1/(1-\gamma)}\Gamma(\frac{1}{1-\gamma})\biggr)\eqsp.
$$
\item 
\begin{equation}
    \sum_{i=1}^{n-1}\norm{G_{i:n-1}^{(\alpha)}}^2\leq \frac{1}{1 - (1 - c_0\mu(n+k_0-2)^{-\gamma})^2}
\end{equation}
\item 
\begin{equation}
    \norm{\sum_{i=1}^{n-1}G_{i:n-1}^{(\alpha)}}\leq \frac{k_0^\gamma n^\gamma}{c_0\mu}
\end{equation}
\end{enumerate}
\end{lemma}
\begin{proof}
For simplicity we define $m_i^j = \sum_{k=i}^j (k+k_0)^{-\gamma}$.
Note that 
\begin{equation}
    \norm{\sum_{j=i+1}^{n-1} (\alpha_i - \alpha_j) G_{i+1:j-1}^{(\alpha)}} \leq \sum_{j=i}^{n-2} \frac{c_0}{(j+k_0+1)^\gamma}\biggl(\biggl(\frac{j+k_0+1}{i+k_0}\biggr)^\gamma - 1\biggr) \exp\{-\mu c_0m_{i+1}^{j}\}
\end{equation}
    Following the proof of \citep[Lemma A.5]{wu2024statistical}, we have 
    \begin{equation}
        \biggl(\frac{j+k_0+1}{i+k_0}\biggr)^\gamma - 1 \leq (i+k_0)^{\gamma-1}\biggl(1+ (1-\gamma)m_{i}^{j}\biggr)^{\gamma/(1-\gamma)}
    \end{equation}
    Hence, we obtain 
    \begin{align}
        \norm{S_i} &\leq c_0(i+k_0)^{\gamma-1}\sum_{j=i}^{n-2} \frac{1}{(j+k_0+1)^\gamma}\biggl(1+ (1-\gamma)m_{i}^{j}\biggr)^{\gamma/(1-\gamma)} \exp\{-\mu c_0m_{i+1}^{j}\}\\ &\leq
        c_0(i+k_0)^{\gamma-1}\sum_{j=i}^{n-2} \frac{1}{(j+k_0)^\gamma}\biggl(1+ (1-\gamma)m_{i}^{j}\biggr)^{\gamma/(1-\gamma)}\exp\{\mu c_0(k_0+i)^{-\gamma}\} \exp\{-\mu c_0m_{i}^{j}\}\\&\leq 
         c_0\exp\{\frac{\mu c_0}{k_0^{\gamma}}\}(i+k_0)^{\gamma-1}\sum_{j=i}^{n-2} (m_i^j-m_i^{j-1})\biggl(1+ (1-\gamma)m_{i}^{j}\biggr)^{\gamma/(1-\gamma)} \exp\{-\mu c_0m_{i}^{j}\}\\ &\leq 
         2c_0\exp\{\frac{\mu c_0}{k_0^{\gamma}}\}(i+k_0)^{\gamma-1}\int_{0}^{+\infty}\biggl(1+ (1-\gamma)m\biggr)^{\gamma/(1-\gamma)} \exp\{-\mu c_0m\}\rmd m \\ &\leq 
         2c_0\exp\{\frac{\mu c_0}{k_0^{\gamma}}\}(i+k_0)^{\gamma-1}\biggl(2^{\gamma/(1-\gamma)}\frac{1}{\mu c_0} + (\frac{1}{\mu c_0})^{1/(1-\gamma)}\Gamma(\frac{1}{1-\gamma})\biggr)\eqsp.
    \end{align}
    Note that 
\begin{align}
    \norm{\sum_{i=1}^{n-1}G_{i:n-1}^{(\alpha)}}&\leq \sum_{i=1}^{n-1}\prod_{k=i}^{n-1}(1-\alpha_k\mu) = \sum_{i=1}^{n-1}\prod_{k=i}^{n-1}\alpha_{i-1}^{-1}\alpha_{i-1}(1-\alpha_k\mu) \\&\leq\frac{(k_0+n-2)^\gamma}{c_0\mu}\sum_{i=1}^{n-1}\biggl(\prod_{k=i}^{n-1}(1-\alpha_k\mu)-\prod_{k=i-1}^{n-1}(1-\alpha_k\mu)\biggr)\leq \frac{k_0^{\gamma}n^{\gamma}}{\mu c_0}\eqsp,
\end{align}
where in the last inequality we use that $(k_0+n-2)^\gamma \leq (k_0n)^{\gamma}$.
    Bound for $\sum_{i=1}^{n-1}\norm{G_{i:n-1}^{(\alpha)}}^2$ is obtained similarly to $ \norm{\sum_{i=1}^{n-1}G_{i:n-1}^{(\alpha)}}$.
\end{proof}

To finish the proof of Lemma \ref{lem:bound_kolmogorov_dist_sigma_n_sigma_infty} we need to bound $D_1, D_2$. By \eqref{repr:sumqtminusa} we obtain
\begin{align}
\normop{\frac{1}{n}\sum_{i=1}^{n-1} (Q_i - G^{-1})\noisecov G^{-\top}} &= \normop{-\frac{1}{n} G^{-1} \sum_{j=1}^{n-1} G_{1:j}^{(\alpha)}\noisecov G^{-\top}} \\ &= \normop{n^{-1} \Sigma_{\infty}  \sum_{j=1}^{n-1} G_{1:j}^{(\alpha)}} \leq n^{-1} \normop{\Sigma_{\infty}} \cdot \normop{\sum_{j=1}^{n-1} G_{1:j}^{(\alpha)}} \eqsp. 
    \end{align}
    It remains to apply \Cref{lem:bound_Q_i_and_Sigma_n} which gives
    \begin{align}
        \normop{\frac{1}{n}\sum_{i=1}^{n-1} (Q_i - G^{-1})\noisecov G^{-\top}}  \leq \normopadapt{\Sigma_{\infty}}C_Q \frac{k_0^\gamma n^{\gamma-1}}{c_0}
    \end{align}
    Hence, 
    \begin{align}
        \label{lemma:D1_bound}
        \normop{D_1} \leq 2 \normopadapt{\Sigma_{\infty}}C_Q \frac{k_0^\gamma n^{\gamma-1}}{c_0}
    \end{align}
To bound $D_2$ we use \eqref{repr:qtminusa} which gives
    \begin{align}
        \label{lemma:D2_expanded_repr}
        &n^{-1}\sum_{i=1}^{n-1}(Q_i - G^{-1}) \noisecov (Q_i - G^{-1})^\top \\
        &\qquad = n^{-1}\sum_{i=1}^{n-1}\bigl(S_i - G^{-1} \prod_{k=i}^{n-1} (\Id - \alpha_k G)\bigr) \noisecov \bigl(S_i - G^{-1} \prod_{k=i}^{n-1} (\Id - \alpha_k G)\bigr)^\top \\
        &\qquad = \underbrace{n^{-1} \sum_{i=1}^{n-1} S_i \noisecov S_i^\top}_{D_{21}} + \underbrace{n^{-1}\sum_{i=1}^{n-1}G^{-1} \prod_{k=i}^{n-1} (\Id - \alpha_k G) \noisecov G^{-\top} \prod_{k=i}^{n-1} (\Id - \alpha_k G)^\top}_{D_{22}} \\ 
        &\qquad\qquad \qquad  -\underbrace{n^{-1} \sum_{i=1}^{n-1} G^{-1} \prod_{k=i}^{n-1} (\Id - \alpha_k G) \cdot \noisecov S_i^\top}_{D_{23}} - \underbrace{n^{-1}\sum_{i=1}^{n-1} S_i \noisecov G^{-\top} \prod_{k=i}^{n-1} (\Id - \alpha_k G)^{\top}}_{D_{24}}\eqsp.
    \end{align}
    To bound $D_{21}$ we use  \Cref{bound_G_S}, and obtain 
    \begin{align}
        \label{lemma:D21_bound}
        \normop{D_{21}} = \normop{n^{-1} \sum_{i=1}^{n-1} S_i \noisecov S_i^\top} &\leq n^{-1} \sum_{i=1}^{n-1} \normop{\noisecov} \normop{S_i}^2 \\  &\leq
        n^{-1}\normop{\noisecov} C_{S}^2\sum_{i=1}^{n-1} (i + k_0)^{2(\gamma-1)}\\ &\leq n^{-1}\normop{\noisecov} C_{S}^2 \frac{(n + k_0-1)^{2\gamma-1}-k_0^{2\gamma-1}}{2\gamma-1}
        \\ &\leq \normop{\noisecov} C_{S}^2 \frac{n^{2(\gamma-1)}}{2\gamma-1}
    \end{align}
    The bound for $D_{22}$ follows from  \Cref{bound_G_S}
    \begin{align} 
        \label{lemma:D22_bound}
        \normop{D_{22}} &= \normop{n^{-1}\sum_{i=1}^{n-1} \prod_{k=i}^{n-1} (\Id - \alpha_k G) G^{-1} \noisecov G^{-\top} \prod_{k=i}^{n-1} (\Id - \alpha_k G)^\top} \leq n^{-1} \normop{\Sigma_\infty} \sum_{i=1}^{n-1} \normop{G_{i:n-1}^{(\alpha)}}^2 \\ 
        &\leq 
        n^{-1} \frac{\normop{\Sigma_\infty}}{ 2c_0\mu(n+k_0-2)^{-\gamma} - c_0^2\mu^2(n+k_0-2)^{-2\gamma}} \leq 
         \normop{\Sigma_\infty} k_0^{\gamma}\frac{n^{\gamma-1}}{c_0\mu}\eqsp.
    \end{align}
    Since $D_{23} = D_{24}^\top$, we concentrate on $\normop{D_{24}}$. \Cref{bound_G_S} immediately imply 
    \begin{align}
        \normop{D_{24}} & \leq n^{-1} \normop{\noisecov G^{-\top}} \sum_{i=1}^{n-1} \normop{S_i} \normop{\prod_{k=i}^{n-1} (\Id - \alpha_k G)^{\top}} \\ &\leq n^{-1} \normop{\noisecov}\frac{1}{\mu}C_{S}\sum_{i=1}^{n-1}(i+k_0)^{\gamma-1}\prod_{k=i}^{n-1}(1-\mu\frac{c_0}{(k+k_0)^\gamma}) \\ &\leq n^{-1} \normop{\noisecov}\frac{1}{\mu}C_{S}\sum_{i=1}^{n-1}(i+k_0)^{2\gamma-1}(i+k_0)^{-\gamma}\prod_{k=i+1}^{n-1}(1-\mu\frac{c_0}{(k+k_0)^\gamma})\\ &\leq \normop{\noisecov}C_{S}k_0^{2\gamma-1}\frac{n^{2(\gamma-1)}}{\mu^2c_0}
    \end{align}
    Combining all inequalities above, we obtain
    \begin{equation}
    \label{eq:difference_sigma_n_infty_appendix}
        \norm{\Sigma_n-\Sigma_{\infty}} \leq C_{\infty}'n^{\gamma-1}\eqsp,
    \end{equation}
    where
    \begin{equation}
    \label{eq:def_C_infty_prime}
        C_{\infty}' = (\frac{k_0^{\gamma}}{c_0\mu} +2C_{Q} \frac{k_0^\gamma}{c_0} +1)\normopadapt{\Sigma_{\infty}} +  (C_{S}^2 \frac{1}{2\gamma-1}+
        C_{S}\frac{k_0^{2\gamma-1}}{\mu^2c_0})\normop{\noisecov}\eqsp.
    \end{equation}
To finish the proof it remains to apply Lemma \ref{Pinsker}, since
\begin{equation}
\label{eq:def_C_infty}
3/2\|\Sigma_n^{-1/2}\Sigma_{\infty}\Sigma_n^{-1/2}-I\|_{\mathsf{F}}\leq C_{\infty}n^{\gamma-1}\eqsp, \text{ where } C_{\infty}=3/2 \sqrt{d} C_\Sigma^2 C_{\infty}'\eqsp.
\end{equation}

\section{Lower bounds}
\label{sec:proof_lower_bound}
In the following computations we provide a lower bound on the quantity $\bigl| \frac{1}{n}\sum_{j=1}^{n-1}Q_{j}^2 - 1 \bigr|$, provided that the number of observations $n$ is large enough. For simplicity in this bound we consider $k_0 = 1$. We first note that 
\begin{align}
\frac{1}{n}\sum_{j=1}^{n-1}Q_{j}^2 - 1 = \frac{1}{n}\sum_{j=1}^{n-1}(Q_j - 1)(Q_j + 1) - \frac{1}{n} = \frac{T_1}{n} + \frac{T_2}{n}\eqsp, 
\end{align}
where 
\[
T_1 = \sum_{j=1}^{n-1}(Q_j - 1)^2\eqsp, \quad T_2 = - 2\sum_{j=1}^{n-1}(Q_j - 1) - 1\eqsp,
\]
and treat the terms $T_1$ and $T_2$ separately. Using the identity \eqref{repr:sumqtminusa}, we get, since $G = 1$, that 
\[
\sum_{j=1}^{n-1}(Q_j - 1) = -\sum_{j=1}^{n-1}\prod_{\ell=1}^{j}(1 - \alpha_{\ell})\eqsp.
\]
Hence, with \Cref{lem:bound_sum_exponent}, 
\[
\bigl| \sum_{i=1}^{n-1}(Q_i - 1) \bigr| \leq \frac{C_{Q}}{c_0}\eqsp.
\]
 Hence, we can conclude that 
 \[
 |T_2| \leq \left(\frac{2C_{Q}}{c_0} + 1\right)\eqsp, 
 \]
 and proceed with $T_1$. Here we notice that, applying \eqref{repr:qtminusa}, 
\[
Q_i - 1 = S_i - \prod_{\ell=i}^{n-1}(1-\alpha_{\ell})\eqsp, \quad S_i = \sum_{j=i+1}^{n-1} (\alpha_i - \alpha_j) \prod_{\ell=i+1}^{j-1}(1 - \alpha_{\ell})\eqsp.
\]
Thus, the term $T_1$ can be represented as
\begin{equation}
\label{eq:lower_bound_decomposition}
T_1 = \sum_{j=1}^{n-1}S_j^2 - 2 \sum_{j=1}^{n-1} S_j \prod_{\ell=j}^{n-1}(1-\alpha_{\ell}) + \sum_{j=1}^{n-1} \prod_{\ell=j}^{n-1} (1-\alpha_{\ell})^2\eqsp.
\end{equation}
Due to item (a) from \Cref{bound_G_S}, it holds that $|S_j| \leq C_{S} / (j+1)^{1-\gamma}$. Hence, similarly to  the proof of \Cref{lem:bound_kolmogorov_dist_sigma_n_sigma_infty} we can show that 
\[
\frac{1}{n} |\sum_{j=1}^{n-1}S_j^2|\leq C_s^2n^{2(\gamma-1)}/(2\gamma-1)\eqsp,
\]
and 
\[
\frac{1}{n} |\sum_{j=1}^{n-1} S_j \prod_{\ell=j}^{n-1}(1-\alpha_{\ell})| \leq C_S n^{2(\gamma-1)}/c_0\eqsp.
\]
Now, we proceed with the last term in \eqref{eq:lower_bound_decomposition}, and provide a lower bound on the last remaining component of $T_{1}$ in \eqref{eq:lower_bound_decomposition}, that is, 
\[
T_{3} = \sum_{j=1}^{n-1} \prod_{\ell=j}^{n-1} (1-\alpha_{\ell})^2\eqsp.
\]
Since $\alpha_j = \frac{c_0}{(1+j)^{\gamma}}$, we get, using an elementary inequality $1 - x \geq \exp\{-2x\}$, valid for $0 \leq x \leq 1/2$, we get that 
\begin{align}
\sum_{j=1}^{n-1} \prod_{\ell=j}^{n-1} (1-\alpha_{\ell})^2 &\geq \sum_{j=1}^{n-1} \exp\biggl\{ - \sum_{\ell=j}^{n-1}\frac{4c_0}{(1 + \ell)^{\gamma}} \biggr\} \\
&\geq \sum_{j=1}^{n-1} \exp\biggl\{ - \frac{4c_0}{1-\gamma}(n^{1-\gamma} - j^{1-\gamma}) \biggr\} \\
&=\exp\biggl\{ - \frac{4c_0}{1-\gamma}n^{1-\gamma}\biggr\}\sum_{j=1}^{n-1} \exp\biggl\{ \frac{4c_0}{1-\gamma} j^{1-\gamma} \biggr\}
\end{align}
Now we get that 
\begin{align}
\sum_{j=1}^{n-1} \exp\biggl\{ \frac{4c_0}{1-\gamma} j^{1-\gamma} \biggr\} 
&\geq \int_{0}^{n - 1} \exp\biggl\{ \frac{4c_0}{1 - \gamma} y^{1 - \gamma} \biggr\} \, \rmd y \\
&= (n-1) \int_{0}^{1}\exp\biggl\{ \frac{4c_0}{1 - \gamma} ((n-1)z)^{1 - \gamma} \biggr\} \,\rmd z\eqsp.
\end{align}
 Now we proceed with Laplace approximation (see e.g. \citep{fedoryuk1977metod} or \citep{olver1997asymptotics}) for the inner integral: 
\begin{align}
    \int_{0}^{1}\exp\biggl\{ \frac{4c_0}{1 - \gamma} ((n-1)z)^{1 - \gamma} \biggr\} \,\rmd z = \exp\biggl\{ \frac{4c_0}{1 - \gamma} (n-1)^{1 - \gamma} \biggr\}\frac{(n-1)^{\gamma-1}}{4c_0}\bigl[1 + \mathcal{O}(n^{\gamma-1})\bigr]
\end{align}
Since $n^{1-\gamma}-(n-1)^{1-\gamma} \leq 1$ and $\frac{n-1}{n}\geq 1/2$ for $n \geq 2$, we get 
\begin{align}
    \frac{1}{n}\sum_{j=1}^{n-1} \prod_{\ell=j}^{n-1} (1-\alpha_{\ell})^2 \geq \frac{1}{4c_0} \exp\biggl\{-\frac{8c_0}{1-\gamma}\biggr\}\frac{1}{(n-1)^{1-\gamma}} + \mathcal{O}(n^{2(\gamma-1)})\eqsp.
\end{align}
Hence, we conclude that for $n$ large enough, 
\[
|\sigma^2_{n,\gamma} - 1| > \frac{C_1(\gamma,c_0)}{n^{1-\gamma}}\eqsp,
\]
and the statement follows. To prove the second part, it remains to apply the lower bound on the total variation distance between Gaussian random vectors given in \citep[Theorem~1.1]{Devroye2018}.

\subsection{Numerical demonstration}
\label{sec:numerical_demonstartion}
In order to illustrate numerically the tightness of bounds provided in \Cref{prop:lower_bounds}, we consider the following simple experiment. We consider the statistics
\[
|\sigma^2_{n,\gamma}-1| \cdot n^{1-\gamma}\eqsp, \quad n \in \{2^{10}, \ldots, 2^{27}\}\eqsp.
\]
We illustrate numerically the tightness of our bound in the Figure \ref{fig:results-expe} below by calculating 
\[
n^{1-\gamma} \cdot |\sigma^2_{n,\gamma}-1|
\]
for different values of $\gamma \in \{0.5,\ldots,0.9\}$ and $n$. Here we fix the values of parameter $k_0 = 1$ and $c_0 = 1$. Code to reproduce the plot is provided in \url{https://anonymous.4open.science/r/gaussian_approximation_sgd-DBDD/}. 
\begin{figure}
\centering
\includegraphics[width=0.9\linewidth]{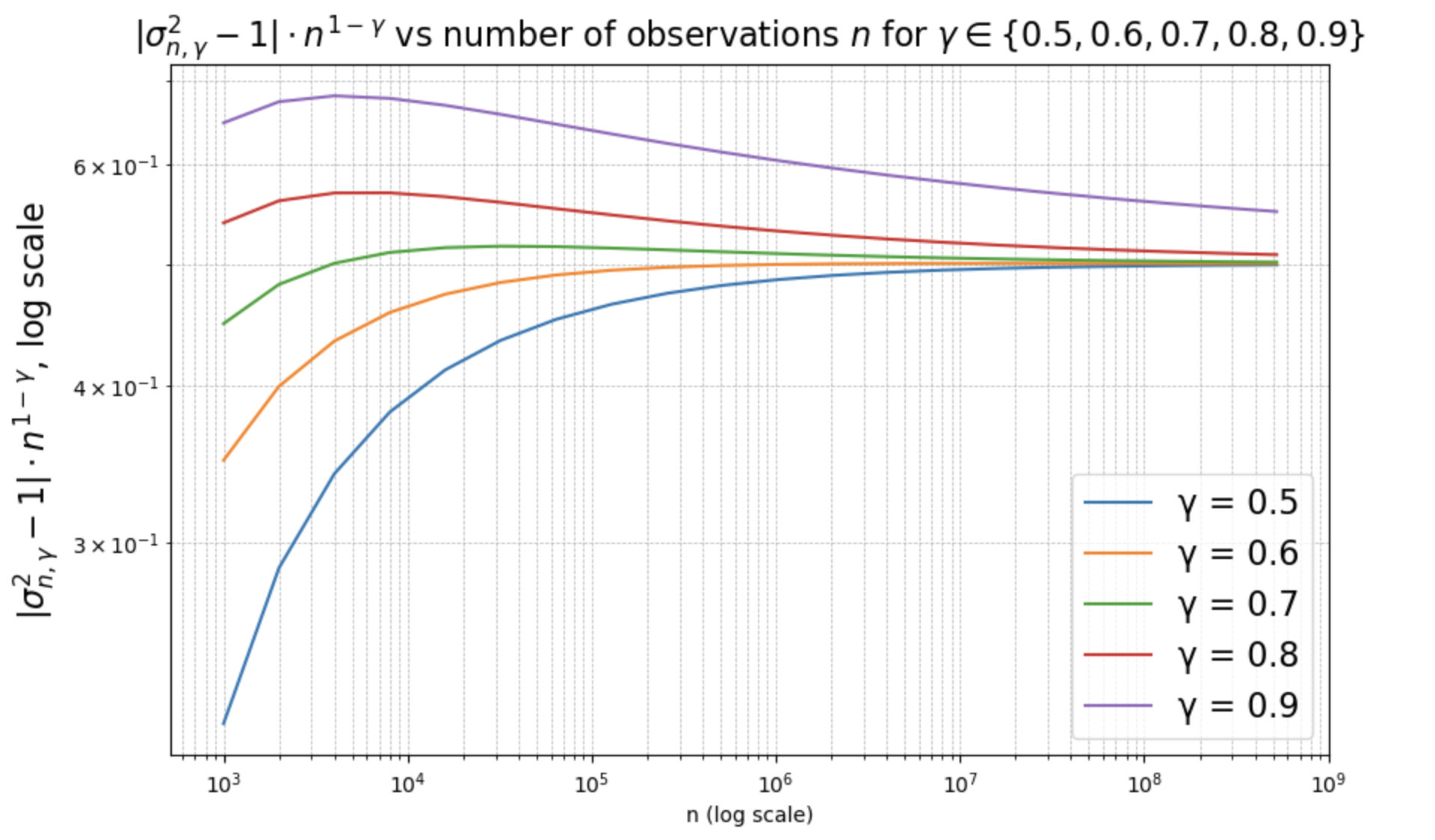}
\caption{Numerical verification of the lower bound given in \Cref{prop:lower_bounds}}
\label{fig:results-expe}
\end{figure}

\section{Results for the last iterate}
\subsection{Last iterate bound}
\label{sec:last_iterate}

To prove Theorem \ref{th:bound_kolmogorov_dist_pr_sigma_n}, we need a bound on the $2p$-th moment for $p\geq 1$ of the last iterate. Our approach is based on induction: first, we establish the result for $p = 2$, and then we show how to proceed from $2(p - 1)$ to $2p$.

\begin{lemma}
\label{lem:bound_last_iter_second_moment}
Assume \Cref{ass:L-smooth}, \Cref{ass:hessian_Lipschitz_ball}, \Cref{ass:noise_decomposition}($2$), and  \Cref{ass:step_size}. Then for any $k \in \nset$ it holds that 
\begin{equation}
\label{eq:bound_last_iter_second_moment}
\PE[\norm{\theta_k - \thetas}^{2}] \leq C_{1} \exp\biggl\{-\frac{\mu c_0}{4} (k+k_0)^{1-\gamma}\biggr\} \bigl[ \norm{\theta_0-\thetas}^2 + \sigma_2^2\bigr] + C_{2} \sigma_2^2 \alpha_{k}  \eqsp,
\end{equation}
where $\sigma_2^2$ is defined in \Cref{ass:noise_decomposition}($2$), and the constants $C_1$ and $C_2$ are given by 
\begin{equation}
\label{eq:const_C_1_C_2_def}
\begin{split}
C_{1} &= \exp\biggl\{\frac{3\mu c_0}{4(1-\gamma)}k_0^{1-\gamma}\biggr\}\biggl(\left(1 + L_2^{-2}\right) \exp\biggl\{\frac{6c_0^2L_2^2}{2\gamma-1}\biggr\} + \frac{2c_{0}^2}{2\gamma-1}\biggr) \eqsp,\\
C_{2} &= \frac{2^{1+\gamma}}{\mu}\eqsp.
\end{split}
\end{equation}
\end{lemma}
\begin{proof}
From \eqref{eq:sgd_recursion_main} and \cref{ass:noise_decomposition} it follows that 
    \begin{equation}
        \norm{\theta_k-\thetas}^2 = \norm{\theta_{k-1}-\thetas}^2-2\alpha_k\langle \theta_{k-1}-\thetas , \nabla f(\theta_{k-1})+\zeta_k\rangle + \alpha_k^2\norm{\nabla f(\theta_{k-1})+\zeta_k}^2.
    \end{equation}
    Using \Cref{ass:L-smooth} and \Cref{ass:noise_decomposition}($2$), we obtain
    \begin{align}
    2\alpha_k\PE[\langle \theta_{k-1}-\thetas , \nabla f(\theta_{k-1})+\zeta_k\rangle|\mathcal{F}_{k-1}]
    &= 2\alpha_k\langle \theta_{k-1}-\thetas , \nabla f(\theta_{k-1})-\nabla f(\thetas) \rangle.     
    \end{align}
    Using \Cref{ass:noise_decomposition}($2$) and \Cref{ass:L-smooth}, we get 
    \begin{align}
        \PE[\norm{\nabla f(\theta_{k-1})+\zeta_k}^2|\mathcal{F}_{k-1}] 
        &= \norm{\nabla f(\theta_{k-1})-\nabla f(\thetas)}^2 + \PE[\norm{\eta(\xi_k) + g(\theta_{k-1}, \xi_k)}^2|\mathcal{F}_{k-1}] \\ &\leq  L_1 \pscal{\nabla f(\theta_{k-1}) - \nabla f(\thetas)}{\theta_{k-1}- \thetas} + 2L_2^2 \norm{\theta_{k-1}-\thetas}^2 + 2\sigma_2^2\eqsp.
    \end{align}

  Combining the above inequalities, we obtain 
   \begin{equation}
   \label{eq:recursion_second_moment}
       \PE [\norm{\theta_k-\thetas}^2]\leq (1-\mu \alpha_k (2 - \alpha_k L_1) +2 \alpha_k^2 L_2^2))\PE[\norm{\theta_{k-1}-\thetas}^2]+2\alpha_k^2\sigma_2^2.
    \end{equation}
    By applying the recurrence \eqref{eq:recursion_second_moment}, we obtain that 
    \begin{align}
         \PE[\norm{\theta_k-\thetas}^2] 
         \leq A_{1,k} \norm{\theta_0-\thetas}^2 + 2 \sigma_2^2 A_{2,k}\eqsp, 
    \end{align}
where we have set 
\begin{equation}
\label{eq:A_1_2_k_def}
\begin{split}
A_{1,k} &= \prod_{i=1}^k(1- (3/2)\alpha_i\mu+2 \alpha_i^2 L_2^2)\eqsp, \\
A_{2,k} &= \sum_{i=1}^k\prod_{j=i+1}^k(1-(3/2) \alpha_j\mu+2\alpha_j^2 L_2^2)\alpha_i^2\eqsp.
\end{split}
\end{equation}
Using the elementary bound $1+t \leq \rme^{t}$ for any $t\in \rset$, we get 
\begin{equation}
    A_{1,k} \leq \exp\biggl\{-(3/2) \mu\sum_{i=1}^k\alpha_i \biggr\}\exp\biggl\{2L_2^2\sum_{i=1}^k\alpha_i^2\biggr\}\eqsp.
\end{equation}
Using \Cref{lem:bounds_on_sum_step_sizes}, we obtain 
\begin{equation}
\label{eq:A_1_k_bound}
A_{1,k} \leq c_{1} \exp\biggl\{-\frac{3\mu c_0}{4(1-\gamma)} (k+k_0)^{1-\gamma}\biggr\}\eqsp,
\end{equation}
where we have set 
\begin{equation}
\label{eq:c_1_small}
c_1 = \exp\biggl\{\frac{2c_0^2L_2^2}{2\gamma-1} + \frac{3\mu c_0}{4(1-\gamma)}k_0^{1-\gamma}\biggr\}\eqsp.
\end{equation}
Now we estimate $A_{2,k}$. Let $k_1$ be the largest index $k$ such that $4 \alpha_k^2 L_2^2 \geq \alpha_k\mu$. Then, for $i > k_1$, we have that 
\[
1- (3/2) \alpha_i \mu+2 \alpha_i^2 L_2^2 \leq 1 - \alpha_i \mu \eqsp.
\]
Thus, using the definition of $A_{2,k}$ in \eqref{eq:A_1_2_k_def}, we obtain that 
\begin{align}
    A_{2,k}  \leq \sum_{i=1}^k \alpha_i^2 \prod_{j=i+1}^k(1-\alpha_j\mu) +  \sum_{i=1}^{k_1}\alpha_i^2 \biggl\{ \prod_{j=i+1}^{k_1}(1+2\alpha_j^2 L_2) \biggr\} \biggl\{ \prod_{j=k_1+1}^{k}(1-\alpha_j\mu) \biggr\}\eqsp.
\end{align}
Note that
\begin{align}
    \sum_{i=1}^{k_1}\alpha_i^2 \prod_{j=i+1}^{k_1}(1+2 \alpha_j^2L_2^2) 
    &=\frac{1}{ 2L_2^2}\sum_{i=1}^{k_1}\biggl(\prod_{j=i}^{k_1}(1+2 \alpha_j^2 L_2^2)-\prod_{j=i+1}^{k_1}(1+2 \alpha_j^2L_2^2)\biggr) \\
    &\leq \frac{1}{2L_2^2}\prod_{j=1}^{k_1}(1+2\alpha_j^2 L_2^2) \leq \frac{1}{2L_2^2}\exp\biggl\{2L_2^2\sum_{j=1}^{k_1}\alpha_j^2\biggr\}.
\end{align}
Note, that for $k\leq k_1$, $\alpha_k \geq \mu/(4L_2^2)$, hence, we have 
\begin{equation}
    \prod_{j=k_1+1}^k(1-\alpha_j\mu) \leq \exp\biggl\{-\mu\sum_{i=1}^k\alpha_i\biggr\}\exp\biggl\{\mu\sum_{i=1}^{k_1}\alpha_i\biggr\} \leq \exp\biggl\{-\mu\sum_{i=1}^k\alpha_i\biggr\}\exp\biggl\{4L_2^2 \sum_{i=1}^{k_1}\alpha_i^2\biggr\}\eqsp.
\end{equation}
Moreover, for any $m\in\{1,\ldots ,k\}$, we obtain 
\begin{align}
&\sum_{i=1}^k \alpha_i^2 \prod_{j=i+1}^k(1-\alpha_j\mu) = \sum_{i=1}^m\prod_{j=i+1}^k(1-\alpha_j\mu)\alpha_i^2 + \sum_{i=m+1}^k\prod_{j=i+1}^k(1-\alpha_j\mu)\alpha_i^2 \\
&\qquad \qquad \leq \prod_{j=m+1}^k(1-\alpha_j\mu)\sum_{i=1}^{m}\alpha_i^2 +\alpha_m\sum_{i=m+1}^k\prod_{j=i+1}^k(1-\alpha_j\mu)\alpha_i \\
&\qquad \qquad \leq \exp\biggl\{-\mu\sum_{j=m+1}^k\alpha_j\biggr\}\sum_{i=1}^{m}\alpha_i^2 + \frac{\alpha_m}{\mu}\sum_{i=m+1}^k\biggl(\prod_{j=i+1}^k(1-\alpha_j\mu)-\prod_{j=i}^k(1-\alpha_j\mu)\biggr) \\
&\qquad \qquad  \leq \exp\biggl\{-\mu\sum_{j=m+1}^k\alpha_j\biggr\}\sum_{i=1}^{m}\alpha_i^2 + \frac{\alpha_m}{\mu}\biggl(1-\prod_{j=m+1}^k(1-\alpha_j\mu)\biggr) \\
&\qquad \qquad \leq  \exp\biggl\{-\mu\sum_{j=m+1}^k\alpha_j\biggr\}\sum_{i=1}^{m}\alpha_i^2 + \frac{\alpha_m}{\mu}\eqsp.
\end{align}
Thus, setting $m= \lfloor k/2 \rfloor $, and using the definition of $A_{2,k}$ in \eqref{eq:A_1_2_k_def}, we obtain that 
\begin{align}
A_{2,k} &\leq \exp\biggl\{\frac{-\mu c_0}{2(1-\gamma)}((k+k_0)^{1-\gamma}-(\lfloor k/2\rfloor+k_0)^{1-\gamma}))\biggr\} \frac{c_0^2}{2\gamma-1} + \frac{c_0}{\mu(k_0+\lfloor k/2 \rfloor)^{\gamma}} \\& \qquad \qquad \qquad \qquad+ c_2 \exp\biggl\{-\frac{\mu c_0}{2(1-\gamma)}(k+k_0)^{1-\gamma}\biggr\}\eqsp,
\end{align}
where we have set
\begin{equation}
\label{eq:c_2_small}
c_2 = \frac{1}{2L_2^2} \exp\biggl\{\frac{6c_0^2 L_2^2}{2\gamma-1}+ \frac{\mu c_0}{2(1-\gamma)}k_0^{1-\gamma}\biggr\}\eqsp.
\end{equation}
Using that $\lfloor k/2 \rfloor \leq k/2$ together with the elementary inequality 
\[
\frac{x^{\beta}}{\beta} - \frac{(x/2)^{\beta}}{\beta} \geq \frac{x^{\beta}}{2}\eqsp,
\]
which is valid for $\beta \in (0,1]$, and $\frac{c_0}{\mu(k_0+\lfloor k/2 \rfloor)^{\gamma}} \leq \frac{2^{\gamma} c_0}{\mu (k+k_0)^{\gamma}}$, we obtain that 
\begin{align}
A_{2,k} &\leq \exp\biggl\{-\frac{\mu c_0}{4} (k+k_0)^{1-\gamma}\biggr\}\exp\biggl\{\frac{\mu c_0}{2(1-\gamma)}k_0^{1-\gamma}\biggr\}\frac{c_0^2}{2\gamma-1} + \frac{2^{\gamma} c_0}{\mu (k+k_0)^{\gamma}} \\& \qquad \qquad \qquad \qquad + c_2 \exp\biggl\{\frac{-\mu c_0}{2(1-\gamma)}(k+k_0)^{1-\gamma}\biggr\}\eqsp.
\end{align}
Combining the bounds for $A_{1,k}$ and $A_{2,k}$, we obtain that 
\begin{align}
\PE[\norm{\theta_k-\thetas}^2] 
&\leq c_{1} \exp\biggl\{-\frac{\mu c_0}{(1-\gamma)} (k+k_0)^{1-\gamma}\biggr\} \norm{\theta_0 - \thetas}^2 \\
&+ \exp\biggl\{-\frac{\mu c_0}{4} (k+k_0)^{1-\gamma}\biggr\} \frac{2c_0^2 \sigma_2^2}{2\gamma-1}\exp\biggl\{\frac{\mu c_0}{2(1-\gamma)}k_0^{1-\gamma}\biggr\}+  \frac{2^{1+\gamma} c_0 \sigma_2^2}{\mu (k+k_0)^{\gamma}} \\
&+ 2 c_2 \sigma_2^2 \exp\biggl\{\frac{-\mu c_0}{2(1-\gamma)}(k+k_0)^{1-\gamma}\biggr\} \\
&\leq C_{1} \exp\biggl\{-\frac{\mu c_0}{4} (k+k_0)^{1-\gamma}\biggr\} \bigl[ \norm{\theta_0-\thetas}^2 + \sigma_2^2\bigr] + C_{2} \alpha_{k}  \eqsp,
\end{align}
where we have set constants $C_1$ and $C_2$ using the definitions of $c_1$ and $c_2$ from \eqref{eq:c_1_small} and \eqref{eq:c_2_small}. 
\end{proof}

The first term in the bound from \Cref{lem:bound_last_iter_second_moment} decays exponentially with $k$, which implies that $\mathbb{E}^{1/2}[\|\theta_k - \theta^*\|^2] \lesssim \alpha_k$. Below, we state this result with an explicit constant, as this specific form of the bound for the last iteration is needed for the induction step.

\begin{corollary}
\label{cor:second_moment_bound}
Under the assumptions of \Cref{lem:bound_last_iter_second_moment},  it holds that 
\begin{equation}
\PE[\norm{\theta_k-\thetas}^2]\leq D_1 (\norm{\theta_0-\thetas}^2 + \sigma_2^2) \alpha_k\eqsp,
\end{equation}
where 
\[
D_1 = C_1 (1/c_0 + C_{2}) \biggl(\frac{4\gamma}{(1-\gamma)\mu c_0 \rme}\biggr)^{\gamma/(1-\gamma)}\eqsp.
\]
\end{corollary}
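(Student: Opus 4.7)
My starting point will be the bound from \Cref{lem:bound_last_iter_second_moment},
\begin{equation*}
\PE[\norm{\theta_k-\thetas}^{2}] \leq C_{1}\exp\left\{-\frac{\mu c_0}{4}(k+k_0)^{1-\gamma}\right\}\bigl[\norm{\theta_0-\thetas}^2 + \sigma_2^2\bigr] + C_{2}\sigma_2^2\alpha_{k}.
\end{equation*}
The corollary converts the stretched-exponential decay factor into a polynomial one matching $\alpha_k = c_0/(k+k_0)^\gamma$, so the entire task is to trade the tail $\exp\{-(\mu c_0/4)(k+k_0)^{1-\gamma}\}$ for a $c_0/(k+k_0)^{\gamma}$ factor and keep track of the explicit constant produced.

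The key step is the pointwise inequality
\begin{equation*}
(k+k_0)^{\gamma}\exp\left\{-\frac{\mu c_0}{4}(k+k_0)^{1-\gamma}\right\}\leq \left(\frac{4\gamma}{(1-\gamma)\mu c_0 \rme}\right)^{\gamma/(1-\gamma)}.
\end{equation*}
After substituting $s=(k+k_0)^{1-\gamma}\geq 0$, so that $(k+k_0)^\gamma=s^{\gamma/(1-\gamma)}$, this reduces to bounding $s^{a}\rme^{-bs}$ with $a=\gamma/(1-\gamma)>0$ and $b=\mu c_0/4>0$, and the elementary calculus fact $\max_{s\geq 0}s^a \rme^{-bs}=(a/(\rme b))^a$ (attained at $s^*=a/b$) gives exactly the stated right-hand side. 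Dividing both sides by $(k+k_0)^\gamma$ and multiplying by $c_0$ yields $\exp\{-(\mu c_0/4)(k+k_0)^{1-\gamma}\}\leq (M/c_0)\alpha_k$, where $M=(4\gamma/((1-\gamma)\mu c_0\rme))^{\gamma/(1-\gamma)}$.

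Substituting this into the lemma, and using the trivial bound $\sigma_2^2\leq \norm{\theta_0-\thetas}^2+\sigma_2^2$ on the residual term, I obtain
\begin{equation*}
\PE[\norm{\theta_k-\thetas}^2]\leq \alpha_k(\norm{\theta_0-\thetas}^2+\sigma_2^2)\left[\frac{C_1 M}{c_0}+C_2\right].
\end{equation*}
To reach the precise form $D_1=C_1(1/c_0+C_2)M$ stated in the corollary, the last bookkeeping step is to absorb $C_2$ into $C_1 C_2 M$, which is valid as soon as $C_1 M\geq 1$. This holds automatically under the standing assumptions: from \eqref{eq:const_C_1_C_2_def} the prefactor already contains a term $\geq 1+L_2^{-2}>1$, so $C_1\geq 1$; and the step-size constraint $2c_0L_1\leq 1$ together with $\mu\leq L_1$ makes the base $4\gamma/((1-\gamma)\mu c_0\rme)\geq 8\gamma/((1-\gamma)\rme)>1$ for every $\gamma\in(1/2,1)$, so $M\geq 1$ as well. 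I do not anticipate any real obstacle; the proof is a one-line optimization of $s\mapsto s^a\rme^{-bs}$ followed by elementary constant tracking.
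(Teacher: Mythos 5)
Your proof is correct and follows essentially the same route as the paper: the paper's own argument defines $C_3=\bigl(\tfrac{4\gamma}{(1-\gamma)\mu c_0 \rme}\bigr)^{\gamma/(1-\gamma)}$ and uses exactly the pointwise bound $\exp\{-\tfrac{\mu c_0}{4}(k+k_0)^{1-\gamma}\}\leq C_3 (k+k_0)^{-\gamma}$ before absorbing constants. Your additional verification that $C_1 M\geq 1$ (needed to fold $C_2$ into $C_1 C_2 M$) is a detail the paper leaves implicit, and your justification of it is sound.
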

\begin{proof}
Define $C_3 = (\frac{4\gamma}{(1-\gamma)\mu c_0 \rme})^{\gamma/(1-\gamma)} > 1$, then  $\exp\{-\mu c_0(k+k_0)^{1-\gamma} / 4\}\leq C_3 (k+k_0)^{-\gamma}$, and the statement follows.
\end{proof}
Now we provide bound for p-moment of last iterate.
\begin{proposition}  \label{prop:2p-moment-bound} 
Assume \Cref{ass:L-smooth}, \Cref{ass:hessian_Lipschitz_ball}, \Cref{ass:noise_decomposition}($2p$), and \Cref{ass:step_size}. Then for any $k \in \nset$ it holds that
\begin{equation}
\PE[\norm{\theta_k - \thetas}^{2p}] \leq C_{2p,1}\exp\biggl\{-\frac{p\mu c_0}{4}(k+k_0)^{1-\gamma}\biggr\}(\norm{\theta_0-\thetas}^{2p} + \sigma_{2p}^{2p}) + C_{2p,2}\sigma_{2p}^{2p}\alpha_k^p\eqsp, 
\end{equation}
where 
\begin{equation}
    C_{2p,1} = 2^{2p-1}(D_{2(p-1)}C_4^pc_0^p + 1)c_4\eqsp,
\end{equation}
\begin{equation}
    C_{2p,2} =2^{2p-1}D_{2(p-1)}C_4^p\frac{2^{1+\gamma p}}{\mu pc_0} \eqsp,
\end{equation}
constants $D_{2(p-1)}$ are defined in \eqref{eq:const_D_2p_def}, and  
\begin{equation}
\begin{split}
    &C_4^p = (4c_0^{1/2}2^{\gamma/2}+2^{\gamma}+ 4c_0)^p\\
    &c_4 = \biggl(\exp\biggl\{\exp\biggl\{5pc_0(L_1+L_2)\biggr\}\frac{4p^2(L_1+L_2)^2}{2\gamma-1}\biggr\} + 1\biggr)\exp\biggl\{\frac{p\mu c_0}{1-\gamma}k_0^{1-\gamma}\biggr\}\frac{1}{\gamma(p+1)-1}
\end{split}
\end{equation}
\end{proposition}
\begin{proof}
We prove the statement by induction in $p$. We first assume that $\theta_0 = \thetas$ and then provide a result for arbitrary initial condition. The result for $p = 1$ is provided in \Cref{cor:second_moment_bound} . Assume that for any $t\leq p-1$ and all $k \in \nset$ we proved that 
\begin{equation}
\label{eq:induction_assumption_p_moment}
\PE[\norm{\theta_k - \thetas}^{2t}] \leq D_{2t} \sigma_{2t}^{2t} \alpha_{k}^{t}\eqsp,
\end{equation}
and the sequence of constants $\{D_{2t}\}$ is non-decreasing in $t$. Inequality \eqref{eq:induction_assumption_p_moment} implies that, since $\sigma_{2t} \leq \sigma_{2p}$ for $t \leq p-1$,  
\begin{equation}
\PE[\norm{\theta_k - \thetas}^{2t}] \leq D_{2t} \sigma_{2p}^{2t} \alpha_{k}^{t}\eqsp.  
\end{equation}
For any $k\in \nset$ we denote $\delta_k= \norm{\theta_k - \thetas}$. Using \eqref{eq:sgd_recursion_main}, we get 
\begin{align}
\delta_k^{2p} 
&= \left(\delta_{k-1}^2 - 2\alpha_k\langle\theta_{k-1}-\thetas, \nabla f(\theta_{k-1}) + \zeta_k \rangle + \alpha_k^2\norm{\nabla f(\theta_{k-1}) + \zeta_k}^2\right)^p \\
&= \underset{\mathclap{\substack{i+j+l=p;\\ i,j,l \in \{0,\ldots p\}}}}{\sum}\frac{p!}{i!j!l!}\delta_{k-1}^{2i}(-2\alpha_k\langle\theta_{k-1}-\thetas, \nabla f(\theta_{k-1})+\zeta_k\rangle)^j\alpha_k^{2l}\norm{\nabla f(\theta_{k-1}) + \zeta_k}^{2l}.
\end{align}
Now we bound each term in the sum above.
\begin{enumerate}
        \item First, for $i=p, j=0, l=0$, the corresponding term in the sum equals $\delta_{k-1}^{2p}$.
        \item Second, for $i=p-1, j=1, l=0$, we obtain, applying \Cref{ass:L-smooth}, that 
        \begin{align}
        2p\alpha_k \PE [\langle\theta_{k-1}-\thetas, \nabla f(\theta_{k-1}) +\zeta_k \rangle\delta_{k-1}^{2(p-1)}|\F_{k-1}] 
        &= 2p\alpha_k\langle\theta_{k-1}-\thetas, \nabla f(\theta_{k-1})- \nabla f(\thetas)  \rangle\delta_{k-1}^{2(p-1)} \\
        &\geq 2 p \mu \alpha_k \delta_{k-1}^{2 p}\eqsp.
        \end{align}
        \item Third, for $l \geq 1$ or $j \geq 2$ (that is,  $2l+j \geq 2$), we use Cauchy-Schwartz inequality  
    \begin{equation}
        |\langle\theta_{k-1}-\thetas, \nabla f(\theta_{k-1})+\zeta_k\rangle)^j| \leq \norm{\theta_{k-1}-\thetas}^j\norm{\nabla f(\theta_{k-1})+\zeta_k}^j\eqsp,
    \end{equation}
    moreover, applying \Cref{ass:L-smooth} and \Cref{ass:noise_decomposition}($2p$) together with the Lyapunov inequality, we get 
    \begin{align}
        \PE[\norm{\nabla f(\theta_{k-1})+\zeta_k}^{2l+j}|\F_{k-1}] 
        &= \PE[\norm{\nabla f(\theta_{k-1})+g(\theta_{k-1}, \xi_k) + \eta(\xi_k)}^{2l+j}|\F_{k-1}] \\
        &\leq 2^{2l+j-1}((L_1+L_2)^{2l+j}\delta_{k-1}^{2l+j} + \sigma_{2p}^{2l+j})\eqsp. 
    \end{align}
\end{enumerate}
    Combining inequalities above, we get 
    \begin{multline}
    \label{eq:delta_k_2p_decomposition}
        \PE[\delta_k^{2p}|\F_{k-1}]\leq \biggl(1 -2p\mu\alpha_k + \underset{\mathclap{\substack{i+j+l=p;\\ i,j,l \in \{0,\ldots p\}: \\j+2l\geq 2 }}} {\sum}\frac{p!}{i!j!l!}\alpha_k^{j+2l}2^{2l+2j-1}(L_1+L_2)^{2l+j}\biggr) \delta_{k-1}^{2p} \\  + \underset{\mathclap{\substack{i+j+l=p;\\ i,j,l \in \{0,\ldots p\}: \\j+2l\geq 2 }}} {\sum}\frac{p!}{i!j!l!}\delta_{k-1}^{2i+j}\alpha_k^{j+2l}2^{2l+2j-1}\sigma_{2p}^{2l+j}\eqsp.
    \end{multline}
    Consider the first term above, and note that
    \begin{align}
    \label{eq:bound_sum_j_2l_p_moment}
    &\qquad \qquad\qquad \qquad \qquad \qquad\underset{\mathclap{\substack{i+j+l=p;\\ i,j,l \in \{0,\ldots p\}: \\j+2l\geq 2 }}} {\sum}\frac{p!}{i!j!l!}\alpha_k^{j+2l}2^{2l+2j-1}(L_1+L_2)^{2l+j} 
     \\ 
    &\leq 2 \alpha_k^2 (L_1+L_2)^2 \biggl(\underset{\mathclap{\substack{i+j+l=p;\\ i,j,l \in \{0,\ldots p\}: \\l\geq 1 }}} {\sum}\frac{p!}{i!j!l!}(4\alpha_k(L_1+L_2))^j(4\alpha_k^2(L_1+L_2)^2)^{l-1} + \underset{\mathclap{\substack{i+j+l=p;\\ i,j,l \in \{0,\ldots p\};\\l=0;j\geq2 }}} {\sum}\frac{p!}{i!j!}(4\alpha_k(L_1+L_2))^{j-2}\biggr) \\& \qquad \qquad\qquad \qquad \qquad \leq 2 p^2\alpha_k^2 (L_1+L_2)^2(1+5\alpha_k(L_1+L_2))^{p} \eqsp.
    \end{align}
    Hence, 
    \begin{align}
     \PE[\delta_k^{2p}] \leq \left(1 -2p\mu\alpha_k + 2 p^2\alpha_k^2 (L_1+L_2)^2(1+5\alpha_k(L_1+L_2))^{p} \right)\delta_{k-1}^{2p} + T_1\eqsp,
    \end{align}
    where we have defined 
    \begin{align}
    T_1 &= \underset{\mathclap{\substack{i+j+l=p;\\ i,j,l \in \{0,\ldots p\}; \\j+2l\geq 2;\\i+j=p }}} {\sum}\frac{p!}{i!j!l!} \PE[\delta_{k-1}^{2i+j}] \alpha_k^{j+2l}2^{2l+2j-1}\sigma_{2p}^{2l+j}\eqsp.
    \end{align}
    For the last term we apply H\"older's inequality together with  induction assumption \eqref{eq:induction_assumption_p_moment} and  $(k+k_0-1)^{-\gamma}\leq 2^{\gamma}(k+k_0)^{-\gamma}$ and obtain
    \begin{align}
    T_1 &\leq  \underset{\mathclap{\substack{i+j+l=p;\\ i,j,l \in \{0,\ldots p\}; \\j+2l\geq 2}}} {\sum}\frac{p!}{i!j!l!}\PE^{1/2}[\delta_{k-1}^{2(i+\lfloor j/2 \rfloor)}]\PE^{1/2}[\delta_{k-1}^{2(i+\lceil j/2 \rceil)}]\alpha_k^{j+2l}2^{2l+2j-1}\sigma_{2p}^{2l+j} \\
    & \leq D_{2(p-1)}\frac{(4c_0^{1/2}2^{\gamma/2}+2^{\gamma}+ 4c_0)^p}{2}c_0^p\sigma_{2p}^{2p}(k+k_0)^{-\gamma (p+1)}\eqsp.
    \end{align}
    Hence, combining the above bounds, we obtain that 
    \begin{equation}
    \label{eq:p_moment_last_iterate_requrence}
   \PE[\delta_k^{2p}]\leq (1 - 2p\mu\alpha_k + 16 \alpha_k^2 (L_1+L_2)^2 3^{p})\PE[\delta_{k-1}^{2p}] + D_{2(p-1)} C_{4}^{p} c_0^p\sigma_{2p}^{2p}k^{-\gamma (p+1)}\eqsp,
   \end{equation}
    where we have defined 
    \begin{equation}
    \label{eq:C_4_const_def}
    C_4^p = (4c_0^{1/2}2^{\gamma/2}+2^{\gamma}+ 4c_0)^p\eqsp.
    \end{equation}
    Note that 
    \begin{equation}
        1 - 2p\mu\alpha_k + 2 p^2\alpha_k^2 (L_1+L_2)^2(1+5\alpha_k(L_1+L_2))^{p} > 1-2p\mu\alpha_k + \alpha_k^2\mu^2p^2 \geq 0\eqsp.
    \end{equation}
    Unrolling the recurrence \eqref{eq:p_moment_last_iterate_requrence}, we get 
    \begin{equation}
       \PE[\delta_k^{2p}] \leq A_{2,k}'D_{2(p-1)}C_4^pc_0^p\sigma_{2p}^{2p}, 
    \end{equation}
    where we have set 
    \begin{equation}
    \label{eq:A_2_k'_def}
        A_{2,k}' = \sum_{t=1}^k\prod_{i=t+1}^k(1 - 2p\mu\alpha_i + 2 p^2\alpha_i^2 (L_1+L_2)^2(1+5\alpha_i(L_1+L_2))^{p})(t+k_0)^{-\gamma(p+1)}\eqsp.
    \end{equation}
    For simplicity, we define $C_{5}=2p^2(L_1+L_2)^2$. Let $k_1$ is the largest $k$ such that $\alpha_k^2C_5(1+5\alpha_i(L_1+L_2))^{p} \geq p\mu\alpha_k$. Then, for $i > k_1$, we have
    \begin{equation}
        1 - 2p\mu\alpha_i + C_5 \alpha_i^2 (1+5\alpha_i(L_1+L_2))^{p}\leq 1-p\mu\alpha_i \eqsp.
    \end{equation}
Hence, using the definition of $A_{2,k}'$ in \eqref{eq:A_2_k'_def}, we get 

\begin{align}
    \begin{aligned}
        A'_{2,k}&=\sum_{t=k_1+1}^{k}\prod_{i=t+1}^k\exp\biggl\{-p\mu\alpha_i\biggr\}(t+k_0)^{-\gamma(p+1)} \\ &\qquad\qquad+ \prod_{t=k_1+1}^{k}\exp\biggl\{-p\mu\alpha_t\biggr\}\sum_{t=1}^{k_1}\prod_{i=t+1}^{k_1}\exp\biggl\{C_{5}\alpha_i^2(1+5\alpha_i(L_1+L_2))^{p}\biggr\}(t+k_0)^{-\gamma(p+1)}\\ &\leq 
        \sum_{t=1}^{k}\prod_{i=t+1}^k\exp\biggl\{-p\mu\alpha_i\biggr\}(t+k_0)^{-\gamma(p+1)} \\&\qquad\qquad + \prod_{t=1}^{k}\exp\biggl\{-p\mu\alpha_t\biggr\}\prod_{t=1}^{k_1}\exp\{p\mu\alpha_t\}\prod_{i=1}^{k_1}\exp\biggl\{C_{5}\alpha_i^2(1+5\alpha_i(L_1+L_2))^{p}\biggr\}\sum_{t=1}^{k_1}(t+k_0)^{-\gamma(p+1)} \\ &\leq 
         \sum_{t=1}^{k}\prod_{i=t+1}^k\exp\biggl\{-p\mu\alpha_i\biggr\}(t+k_0)^{-\gamma(p+1)}  \\&\qquad\qquad+ \prod_{i=1}^{k_1}\exp\biggl\{2C_{5}\alpha_i^2(1+5\alpha_i(L_1+L_2))^{p}\biggr\}\prod_{t=1}^{k}\exp\biggl\{-p\mu\alpha_t\biggr\}\sum_{t=1}^{k_1}(t+k_0)^{-\gamma(p+1)}
    \end{aligned}
\end{align}
For any $m\in\{1, \ldots k\}$ we have
\begin{align}
    \begin{aligned}
         &\sum_{t=1}^{k}\prod_{i=t+1}^k\exp\biggl\{-p\mu\alpha_i\biggr\}(t+k_0)^{-\gamma(p+1)} \\&=  \sum_{t=1}^{m}\prod_{i=t+1}^k\exp\biggl\{-p\mu\alpha_i\biggr\}(t+k_0)^{-\gamma(p+1)} +  \sum_{t=m+1}^{k}\prod_{i=t+1}^k\exp\biggl\{-p\mu\alpha_i\biggr\}(t+k_0)^{-\gamma(p+1)} \\&\leq
         \prod_{i=m+1}^k\exp\{-p\mu\alpha_i\}\sum_{t=1}^{m}(t+k_0)^{-\gamma(p+1)} + \sum_{t=m+1}^{k}\prod_{i=t+1}^k\exp\biggl\{-p\mu\alpha_i\biggr\}(m+k_0)^{-\gamma p}(t+k_0)^{-\gamma} \\&\leq 
         \prod_{i=m+1}^k\exp\biggl\{-p\mu\alpha_i\biggr\}\sum_{t=1}^{k}(t+k_0)^{-\gamma(p+1)} + (m+k_0)^{-\gamma p}\sum_{t=1}^{k}\prod_{i=t+1}^k\exp\biggl\{-p\mu\alpha_i\biggr\}(t+k_0)^{-\gamma}
    \end{aligned}
\end{align}

Applying \Cref{lem:bounds_on_sum_step_sizes}\ref{eq:simple_bound_sum_alpha_k}, we have
\begin{align}
 \begin{aligned}
    &\sum_{t=1}^{k}\prod_{i=t+1}^k\exp\biggl\{-p\mu\alpha_i\biggr\}t^{-\gamma}\leq \sum_{t=1}^k\exp\biggl\{\frac{-p\mu c_0}{2(1-\gamma)}((k+k_0)^{1-\gamma}-(t+k_0)^{1-\gamma})\biggr\}(t+k_0)^{-\gamma}\\&\leq \exp\biggl\{\frac{-p\mu c_0}{2(1-\gamma)}(k+k_0)^{1-\gamma}\biggr\}\frac{2}{p\mu c_0}\int_{0}^{\frac{p\mu c_0}{2(1-\gamma)}(k+k_0)^{1-\gamma}}e^udu \leq \frac{2}{p\mu c_0}\eqsp.
\end{aligned}
\end{align}
Applying \Cref{lem:bounds_on_sum_step_sizes}\ref{eq:sum_alpha_k_p}, we get
\[
\sum_{i=1}^{k}(i+k_0)^{-\gamma(p+1)} \leq \frac{1}{(p+1)\gamma-1}\eqsp,
\]
and 
\[
\sum_{i=1}^k2C_{5}\alpha_i^2(1+5\alpha_i(L_1+L_2))^{p} \leq 2C_5(1+5c_0(L_1+L_2))^p\sum_{i=1}^{+\infty}\alpha_k^2\leq \exp\biggl\{5pc_0(L_1+L_2)\biggr\}\frac{2C_5}{2\gamma-1}
\]

Substituting $m=\lfloor k/2\rfloor$ and applying \eqref{eq:simple_bound_sum_alpha_k}, we get 
\begin{align}
        &A_{2,k}' \leq \exp\biggl\{-\frac{p\mu c_0}{2(1-\gamma)}((k+k_0)^{1-\gamma}-(\lfloor k/2\rfloor+k_0)^{1-\gamma})\biggr\}\frac{1}{\gamma(p+1)-1} + \frac{2(\lfloor k/2\rfloor +k_0)^{-\gamma p}}{p\mu c_0}\\& + c_3\exp\biggl\{-\frac{p\mu c_0}{2(1-\gamma)}(k+k_0)^{1-\gamma}\biggr\},
\end{align}
where we have set
\begin{equation}
    c_3 = \exp\biggl\{\exp\biggl\{5pc_0(L_1+L_2)\biggr\}\frac{2C_5}{2\gamma-1} +\frac{p\mu c_0}{2(1-\gamma)}k_0^{1-\gamma}\biggr \}\frac{1}{\gamma(p+1)-1}.
\end{equation}

Using that $\lfloor k/2 \rfloor \leq k/2$ together with the elementary inequality 
\[
\frac{x^{\beta}}{\beta} - \frac{(x/2)^{\beta}}{\beta} \geq \frac{x^{\beta}}{2}\eqsp,
\]
which is valid for $\beta \in (0,1]$, and $\frac{2}{\mu pc_0 (\lfloor k/2 \rfloor + k_0)^{\gamma p}} \leq \frac{2^{1+\gamma p}}{\mu pc_0(k+k_0)^{\gamma p}}$, we obtain that 
\begin{align}
        A_{2,k}' &\leq \exp\biggl\{-\frac{p\mu c_0}{4}(k+k_0)^{1-\gamma}\biggr\}\exp\biggl\{\frac{p\mu c_0}{2(1-\gamma)}k_0^{1-\gamma}\biggr\}\frac{1}{\gamma(p+1)-1} \\& + \frac{2^{1+\gamma p}}{\mu pc_0 (k+k_0)^{\gamma p}}+ c_3\exp\biggl\{-\frac{p\mu c_0}{2(1-\gamma)}(k+k_0)^{1-\gamma}\biggr\} \\& \leq c_4\exp\biggl\{-\frac{p\mu c_0}{4}(k+k_0)^{1-\gamma}\biggr\} + c_5(k+k_0)^{-\gamma p},
\end{align} 
where we have set 
\begin{align}
    &c_4 = \biggl(\exp\biggl\{\exp\biggl\{5pc_0(L_1+L_2)\biggr\}\frac{4p^2(L_1+L_2)^2}{2\gamma-1}\biggr\} + 1\biggr)\exp\biggl\{\frac{p\mu c_0}{1-\gamma}k_0^{1-\gamma}\biggr\}\frac{1}{\gamma(p+1)-1} \\&
    c_5 = \frac{2^{1+\gamma p}}{\mu pc_0}
\end{align}

Finally, we get 
\begin{equation}
    \PE[\delta_k^{2p}]\leq C_{2p, 1}'\exp\biggl\{-\frac{p\mu c_0}{4}(k+k_0)^{1-\gamma}\biggr\}\sigma_{2p}^{2p} + C_{2p, 2}'\sigma_{2p}^{2p}\alpha_k^p,
\end{equation}
where 
\begin{align}
    &C_{2p, 1}' = D_{2(p-1)}C_4^pc_0^pc_4\\&
    C_{2p, 2}' = D_{2(p-1)}C_4^pc_5.
\end{align}
To provide the result for arbitrary start point $\theta_0=\theta$ we consider the synchronous coupling construction defined by the recursions
\begin{align}
\label{eq:def_coupling_recursions}
    &\theta_k = \theta_{k-1} - \alpha_k(\nabla f(\theta_{k-1})+g(\theta_{k-1}, \xi_k) + \eta(\xi_k)), \qquad \theta_0=\theta\\&
    \theta_k' = \theta_{k-1}' - \alpha_k(\nabla f(\theta_{k-1}')+g(\theta_{k-1}', \xi_k) + \eta(\xi_k)), \qquad \theta_0'=\thetas
\end{align}
For any $k\in \nset$ we denote $\delta_{k}' = \norm{\theta_k - \theta_k'}$. Using \eqref{eq:def_coupling_recursions} together with \Cref{ass:L-smooth} and \Cref{ass:noise_decomposition}($2p$), we get 
\begin{align}
    &\delta_k'^{2p} = (\delta_{k-1}'^2 - 2\alpha_k\langle \theta_{k-1} - \theta_{k-1}', \nabla f(\theta_{k-1}) - \nabla f(\theta_{k-1}') + g(\theta_{k-1}, \xi_k) - g(\theta_{k-1}', \xi_k) \rangle + \alpha_k^2(L_1+L_2)^2\delta_{k-1}'^2)^p\\&\leq \underset{\mathclap{\substack{i+j+l=p;\\ i,j,l \in \{0,\ldots p\}}}}{\sum}\frac{p!}{i!j!l!}\delta_{k-1}'^{2i}(- 2\alpha_k\langle \theta_{k-1} - \theta_{k-1}', \nabla f(\theta_{k-1}) - \nabla f(\theta_{k-1}') + g(\theta_{k-1}, \xi_k) - g(\theta_{k-1}', \xi_k) \rangle)^{j}(\alpha_k(L_1+L_2)\delta_{k-1}')^{2l}
\end{align}
Now we bound each term in the sum above.
\begin{enumerate}
        \item First, for $i=p, j=0, l=0$, the corresponding term in the sum equals $\delta_{k-1}'^{2p}$.
        \item Second, for $i=p-1, j=1, l=0$, we obtain, applying \Cref{ass:L-smooth}, that 
        \begin{align}
        &2p\alpha_k \PE [\langle\theta_{k-1}-\theta_{k-1}', \nabla f(\theta_{k-1}) -\nabla f(\theta_{k-1}') +g(\theta_{k-1}, \xi_k) - g(\theta_{k-1}', \xi_k) \rangle\delta_{k-1}'^{2(p-1)}|\F_{k-1}] \\ &= 2p\alpha_k\langle\theta_{k-1}-\theta_{k-1}', \nabla f(\theta_{k-1})- \nabla f(\theta_{k-1}')  \rangle\delta_{k-1}'^{2(p-1)} 
        \geq 2 p \mu \alpha_k \delta_{k-1}'^{2 p}\eqsp.
        \end{align}
        \item Third, for $l \geq 1$ or $j \geq 2$ (that is,  $2l+j \geq 2$), we use Cauchy-Schwartz inequality together with \Cref{ass:noise_decomposition} and \Cref{ass:L-smooth}
    \begin{equation}
        |\langle\theta_{k-1}-\theta_{k-1}', \nabla f(\theta_{k-1}) -\nabla f(\theta_{k-1}') +g(\theta_{k-1}, \xi_k) - g(\theta_{k-1}', \xi_k) \rangle^j| \leq \norm{\theta_{k-1}-\theta_{k-1}'}^{2j}(L_1+L_2)^j\eqsp,
    \end{equation}
\end{enumerate}

Combining inequalities above, we obtain 

\begin{equation}
\label{eq:coupling_recurence}
    \PE[\delta_{k}'^{2p}|\F_{k-1}] \leq (1- 2p\mu\alpha_k+ \underset{\mathclap{\substack{i+j+l=p;\\ i,j,l \in \{0,\ldots p\};\\j+2l\geq 2}}}{\sum}\frac{p!}{i!j!l!}2^j\alpha_k^{j+2l}(L_1+L_2)^{j+2l})\delta_{k-1}'^{2p}
\end{equation}

Similar to \eqref{eq:bound_sum_j_2l_p_moment}, we have 
\begin{equation}
\underset{\mathclap{\substack{i+j+l=p;\\ i,j,l \in \{0,\ldots p\};\\j+2l\geq 2}}}{\sum}\frac{p!}{i!j!l!}2^j\alpha_k^{j+2l}(L_1+L_2)^{j+2l})\delta_{k-1}'^{2p} \leq \alpha_{k}^2p^2(L_1+L_2)^2(1+3\alpha_k(L_1+L_2))^{p}
\end{equation}
Enrolling recurrence \eqref{eq:coupling_recurence}, we get
\begin{align}
    \PE[\delta_{k}'^{2p}] &\leq \exp\biggl\{-2p\mu\sum_{i=1}^k\alpha_i\biggr\}\exp\biggl\{ p^2(L_1+L_2)^2\sum_{i=1}^k\alpha_i^{2}(1+3\alpha_i(L_1+L_2))^p\biggr\}\norm{\theta_0-\thetas}^{2p}\\ &\leq c_6\exp\biggl\{-\frac{p\mu c_0}{1-\gamma}(k+k_0)^{1-\gamma}\biggr\}\norm{\theta_0-\thetas}^{2p}\eqsp,
\end{align}
where we have set
\begin{equation}
    c_6 = \exp\biggl\{\exp\biggl\{ 3pc_0(L_1+L_2)\biggr\}\frac{p^2(L_1+L_2)^2)}{2\gamma-1} + \frac{p\mu c_0}{1-\gamma}k_0^{1-\gamma}\biggr\}\eqsp.
\end{equation}
It remains to note that
\begin{align}
    \PE[\norm{\theta_k-\thetas}^{2p}]&\leq 2^{2p-1} \PE[\norm{\theta'_k-\thetas}^{2p}] + 2^{2p-1}\PE[\norm{\theta_k-\theta_k'}^{2p}] \\&\leq C_{2p,1} \exp\biggl\{-\frac{p\mu c_0}{4}(k+k_0)^{1-\gamma}\biggr\}(\norm{\theta_0-\thetas}^{2p} + \sigma_{2p}^{2p}) + C_{2p,2}\sigma_{2p}^{2p}\alpha_k^p\eqsp.
\end{align}
\end{proof}

For validity of induction in \Cref{prop:2p-moment-bound}, we need the following corollary.

\begin{corollary}
\label{cor:p_moment_bound}
Under the assumptions of \Cref{prop:2p-moment-bound},  it holds that 
\begin{equation}
\PE[\norm{\theta_k-\thetas}^{2p}]\leq D_{2p} (\norm{\theta_0-\thetas}^{2p} + \sigma_2^{2p}) \alpha_k^p\eqsp,
\end{equation}
where 
\begin{equation}
\label{eq:const_D_2p_def}
D_{2p} = C_{2p,1} (1/c_0^p + C_{2p, 2}) \biggl(\frac{4\gamma}{(1-\gamma)\mu p c_0 \rme}\biggr)^{\gamma p/(1-\gamma)}\eqsp.
\end{equation}
\end{corollary}
\begin{proof}
Define $C_5 = (\frac{4\gamma}{(1-\gamma)p\mu c_0 \rme})^{\gamma p/(1-\gamma)} > 1$, then  $\exp\{-\mu pc_0(k+k_0)^{1-\gamma} / 4\}\leq C_5 (k+k_0)^{-p\gamma}$, and the statement follows.
\end{proof}

\begin{corollary}
\label{cor:fourth_moment_bound_last_iterate}
Assume \Cref{ass:L-smooth}, \Cref{ass:hessian_Lipschitz_ball}, \Cref{ass:noise_decomposition}($4$) and \Cref{ass:step_size}. Then for any $k \in \nset$ it holds that
\begin{equation}
\PE[\norm{\theta_k - \thetas}^{4}] \leq C_{4,1}\exp\biggl\{-\frac{2\mu c_0}{4}k^{1-\gamma}\biggr\}(\norm{\theta_0-\thetas}^{4} + \sigma_{4}^{4}) + C_{4,2}\sigma_{4}^{4}\alpha_k^2\eqsp, 
\end{equation}
with 
\begin{equation}
    C_{4,1} = 2^{3}\biggl(C_1(1/c_0+C_2)\biggl(\frac{4\gamma}{(1-\gamma)\mu c_0 \rme}\biggr)^{\gamma/(1-\gamma)}\bigl(4c_0^{1/2}2^{\gamma/2}+2^{\gamma}+ 4c_0\bigr)^2c_0^2+1\biggr)c_{2,4}
\end{equation}
and 
\begin{equation}
    C_{4,2} =2^{3}C_1(1/c_0+C_2)\biggl(\frac{4\gamma}{(1-\gamma)\mu c_0 \rme}\biggr)^{\gamma/(1-\gamma)}\bigl(4c_0^{1/2}2^{\gamma/2}+2^{\gamma}+ 4c_0\bigr)^2c_{2,5}.
\end{equation}
Here $C_1$ and $C_2$ are defined in \Cref{lem:bound_last_iter_second_moment} and 
\begin{align}
    &c_{2,4} = \biggl(\exp\biggl\{\exp\biggl\{10c_0(L_1+L_2)\biggr\}\frac{16(L_1+L_2)^2}{2\gamma-1}\biggr\} + 1\biggr)\exp\biggl\{\frac{2\mu c_0}{1-\gamma}k_0^{1-\gamma}\biggr\}\frac{1}{3\gamma-1}\eqsp, \\&
    c_{2,5} = \frac{2^{1+2\gamma}}{2\mu c_0}\eqsp.
\end{align}
\end{corollary}
\begin{proof}
The proof follows directly from \Cref{prop:2p-moment-bound} and \Cref{cor:second_moment_bound}.
\end{proof}


\subsection{High probability bounds on the last iterate}
\label{sec:high_prob_last_iterate}
In this section, we establish a high-probability bound for the last iterate, which is instrumental in controlling the non-linear statistic $D^{\boot}$. Our analysis adapts the approach of \citep[Theorem 9]{madden2024high}, which relies on the assumption that the noise variables $\zeta_k$ are sub-Gaussian. This result, in turn, generalizes the previous results of \citep{harvey2019tight}, where the authors assumed that both the additive noise component $\eta(\xi)$ and state-dependent component $g(\theta,\xi)$ are uniformly bounded. 

\begin{lemma}
\label{lem: high_prob_last_iter_boot}
Assume \Cref{ass:L-smooth}, \Cref{ass:bound_noise},\Cref{ass:bound_bootstap_weights}, \Cref{ass:step_size_new_boot}. Then for any $\delta \in (0,1)$, it holds with probability at least $1-\delta$ that for any $k\in\{1,\ldots n\}$, 
\begin{equation}
    \norm{\theta_k^b - \thetas}^2 \leq \alpha_k K_1\log\biggl(\frac{\rme }{\delta}\biggr)\eqsp,
\end{equation}
where 
\begin{equation}
    \label{eq:def_K_1}
    K_1 = \frac{k_0^\gamma}{c_0}\norm{\theta_0-\thetas}^2 +\frac{16W_{\max}^2C_\xi^2}{\mu W_{\min}}(2d+1)
\end{equation}
\end{lemma}
\begin{proof}
Unrolling the recurrence \eqref{eq:sgd_bootstrap}, we have
\begin{align}
    \norm{\theta_k^b-\thetas}^2 &= \norm{\theta_{k-1}^b-\thetas}^2 - 2\alpha_kw_k\langle  F(\theta_{k-1}^b, \xi_k), \theta_{k-1}^b-\thetas \rangle + \alpha_k^2w_k^2\norm{F(\theta_{k-1}^b, \xi_k)}^2 \\& \leq
    \norm{\theta_{k-1}^b-\thetas}^2 - 2\alpha_kw_k\langle   \nabla f(\theta_{k-1}^b), \theta_{k-1}^b-\thetas \rangle - 2\alpha_kw_k\langle g(\theta_{k-1}^b, \xi_k) + \eta(\xi_k), \theta_{k-1}^b -\thetas \rangle \\& \qquad \qquad +  \alpha_k^2w_k^2\norm{ \nabla f(\theta_{k-1}^b) + g(\theta_{k-1}^b, \xi_k) + \eta(\xi_k)}^2  \eqsp.
\end{align}
Using \Cref{ass:L-smooth}, \Cref{ass:bound_noise}, and \Cref{ass:bound_bootstap_weights}, we obtain 
\begin{align}
     \norm{\theta_k^b-\thetas}^2 &\leq (1-2\alpha_k\mu W_{\min} + 2\alpha_k^2L_1^2W_{\max}^2)\norm{\theta_{k-1}^b-\thetas}^2 \\& \qquad \qquad- 2\alpha_kw_k\langle g(\theta_{k-1}^b, \xi_k)+ \eta(\xi_k), \theta_{k-1}^b-\thetas \rangle  + 2\alpha_k^2w_k^2\norm{g(\theta_{k-1}^b, \xi_k) + \eta(\xi_k)}^2\\&\leq (1-2\alpha_k\mu W_{\min} + 2\alpha_k^2L_1^2W_{\max}^2)\norm{\theta_{k-1}^b-\thetas}^2 \\& \qquad \qquad- 2\alpha_kw_k\langle g(\theta_{k-1}^b, \xi_k)+ \eta(\xi_k), \theta_{k-1}^b-\thetas \rangle  + 2\alpha_k^2W_{\max}^2\norm{g(\theta_{k-1}^b, \xi_k) + \eta(\xi_k)}^2
\end{align}
 Using  \Cref{ass:step_size_new_boot}, we have
\begin{multline}
\label{eq:last_iter_reccurrence_high_prob}
    \norm{\theta_k^b-\thetas}^2 \leq
    (1-\mu\alpha_kW_{\min})\norm{\theta_{k-1}^b-\thetas}^2 - 2\alpha_kw_k\langle g(\theta_{k-1}^b,\xi_k)+\eta(\xi_k), \theta_{k-1}^b-\thetas \rangle \\
     + 2\alpha_k^2W_{\max}^2\norm{g(\theta_{k-1}^b, \xi_k) + \eta(\xi_k)}^2\eqsp.
\end{multline}
Now we introduce the quantities
\begin{align}
X_k = \alpha_k^{-1}\norm{\theta_k^b-\thetas}^2\eqsp, \quad Y_{k-1} = -2w_k\langle g(\theta_{k-1}^b,\xi_k)+\eta(\xi_k), \theta_{k-1}^b-\thetas \rangle\eqsp, \quad Z_{k-1} = 2\alpha_kW_{\max}^2\norm{g(\theta_{k-1}^b, \xi_k) + \eta(\xi_k)}^2\eqsp.
\end{align}
Using \eqref{eq:last_iter_reccurrence_high_prob}, we obtain 
\begin{equation}
\label{eq:recurrence_x_y_z}
X_k \leq\alpha_k^{-1} \alpha_{k-1}(1-\mu W_{\min}\alpha_k)X_{k-1} + Y_{k-1} + Z_{k-1}\eqsp.
\end{equation}
Note that 
\begin{align}
\label{eq:bound_contracting_multiplier_last_iter}
     \frac{\alpha_{k-1}}{\alpha_k}(1-\mu W_{\min}\alpha_k) &= \biggl(\frac{k_0+k}{k_0+k-1}\biggr)^{\gamma} -\frac{\mu W_{\min}c_0}{(k_0+k-1)^{\gamma}}\\&\leq 1 + \frac{c_0(\gamma/c_0)}{k_0+k-1}-\frac{\mu W_{\min}c_0}{(k_0+k-1)^{\gamma}}\\& = 1 - \alpha_{k-1}\biggr( \mu W_{\min} -\frac{(\gamma/c_0)}{(k_0+k-1)^{1-\gamma}}\biggl)\eqsp.
\end{align}
Since $k_0\geq \biggl(\frac{2\gamma}{c_0\mu W_{\min}}\biggr)^{1/(1-\gamma)}$, recurrence \eqref{eq:recurrence_x_y_z} and the above identities yield
\begin{equation}
X_k \leq (1 - \mu W_{\min} \alpha_{k-1} / 2) X_{k-1} + Y_{k-1} + Z_{k-1}\eqsp.
\end{equation}
Using \Cref{ass:bound_noise} and \Cref{ass:bound_bootstap_weights}, we have for any $\lambda\in\rset$
\begin{align}
\PE[\exp\{\lambda Y_{k-1}\}|\widetilde{\F}_{k-1}] = \PE[\exp\{-2\lambda w_k \langle\theta_{k-1}^b-\thetas, g(\theta_{k-1}^b,\xi_k)+\eta(\xi_k)\rangle\}|\widetilde{\F}_{k-1}]\leq \exp\{\lambda^2\beta_{k-1}^2X_{k-1}/2\}\eqsp,
\end{align}
where we set $\beta_{k-1}^2 = 4W_{\max}^2C_{\xi}^2\alpha_{k-1}$. Using \Cref{ass:bound_noise} and \Cref{lem:sub_gauss_prop}, we get for any $\lambda \in [0, r_{k-1}^{-1}]$
\begin{align}
\PE[\exp\{\lambda Z_{k-1}\}|\widetilde{\F}_{k-1}] = \PE[\exp\{2 \lambda  \alpha_k W_{\max}^2\norm{g(\theta_{k-1}^b, \xi_k) + \eta(\xi_k)}^2\}|\widetilde{\F}_{k-1}] \leq \exp\{\lambda r_{k-1}\}\eqsp,
\end{align}
where $r_{k-1} = 8d\alpha_{k}W_{\max}^2C_{\xi}^2$ and $\widetilde{\F}_{k-1}$ is defined in \eqref{eq: extended filtration}. Then, applying \citep[Theorem 9]{madden2024high}, for any $k \geq 0$ it holds with probability at least $1-\delta$ for $\delta \in (0,1)$ that 
\begin{equation}
X_k \leq P_k \log\biggl(\frac{\rme}{\delta}\biggr), 
\end{equation}
where $\{P_\ell\}_{\ell \in \nset}$ is any sequence of positive real numbers, satisfying
\begin{equation}
\label{eq:full_rec_for_P_i}
P_{i+1}^2 \geq ((1-\frac{\mu W_{\min}}{2}\alpha_{i})P_i + 2r_{i})P_{i+1} + \beta_i^2P_i\eqsp, \qquad P_0\geq X_0 \eqsp.
\end{equation}
Note in particular, that the sequence $\{P_\ell\}_{\ell \in \nset}$ given by the recurrence
\begin{equation}
P_{i+1} = (1-\frac{\mu W_{\min}}{2}\alpha_{i})P_i + \tau_i\eqsp, \quad \text{ where } \tau_i = 2r_i + \frac{\beta_i^2}{1-\frac{\mu W_{\min}}{2}\alpha_{i}}\eqsp, \text{ and } P_0 = X_0\eqsp,
\end{equation}
satisfies \eqref{eq:full_rec_for_P_i}. Hence, unraveling the recursion, we have
\begin{equation}
P_{k+1} = \prod_{i=0}^k(1-\frac{\mu W_{\min}}{2}\alpha_{i})P_0 +\sum_{i=0}^k\prod_{j=i+1}^k(1-\frac{\mu W_{\min}}{2}\alpha_{j}) \tau_i\eqsp.
\end{equation}
Using \Cref{lem:summ_alpha_k} and $\alpha_{i+1} \leq \alpha_i$, we have
\begin{equation}
P_{k+1} \leq \frac{k_0^\gamma}{c_0}\norm{\theta_0-\thetas}^2 + \frac{32dW_{\max}^2C_\xi^2}{\mu W_{\min}} +\frac{8W_{\max}^2C_\xi^2}{(1-\frac{\mu W_{\min} c_0}{2k_0^\gamma})\mu W_{\min}}\eqsp.
\end{equation}
To complete the proof, it remains to apply the bound on $k_0$ given by \Cref{ass:step_size_new_boot}.
\end{proof}

\begin{corollary}
\label{cor: last_iter_boot_p_moment}
Under the assumptions of \Cref{lem: high_prob_last_iter_boot}
for any $ k\in\{1,\ldots n\}$ and any $p\geq 2$ it holds 
\begin{equation}
     \PE^{2/p}[\norm{\theta_k^b - \thetas}^{p}] \leq p \alpha_k(\rme )^{2/p} K_1/2 \eqsp,
\end{equation}
where $K_1$ is defined in \eqref{eq:def_K_1}.
\end{corollary}
\begin{proof}
Note that from \Cref{lem: high_prob_last_iter_boot} for $\forall k\in\{1,\ldots n\}$ and for any $t\geq 0$ it holds 
    \begin{equation}
    \PP\bigl[\norm{\theta_k^b - \thetas}^2 \geq t\bigr]\leq f(t)\eqsp,
\end{equation}
where 
\[
f(t) = \rme \exp\biggl\{-\frac{t}{K_1\alpha_k}\biggr\}\eqsp.
\]
Then, we have 
\begin{align}
    \PE[\norm{\theta_k^b - \thetas}^{p}] &= \int_{0}^{+\infty}\PP[\norm{\theta_k^b - \thetas}^{p} > u]\rmd u \leq\int_{0}^{+\infty}\rme \exp\biggl\{-\frac{u^{2/p}}{K_1\alpha_k}\biggr\}\rmd u \\&=\rme (p/2)\biggl(K_1\alpha_k\biggr)^{p/2}\int_{0}^{+\infty}\rme^{-x}x^{p/2-1}\rmd x \leq\rme \biggl((p/2)K_1\alpha_k\biggr)^{p/2}\eqsp,
\end{align}
where in the last inequality we use that $\Gamma(p/2) \leq (p/2)^{p/2 -1}$ (see \citep[Theorem 1.5]{AndersonGammaFunc}).
\end{proof}

\begin{lemma}
\label{lem: high_prob_last_iter}
Assume \Cref{ass:L-smooth}, \Cref{ass:bound_noise}, \Cref{ass:step_size_new_boot}. Then for any $\delta \in (0,1)$ with probability at least $1-\delta$  for any $k\in\{1,\ldots n\}$ it holds 
\begin{equation}
    \norm{\theta_k - \thetas}^2 \leq \alpha_k K_2 log\biggl(\frac{\rme n}{\delta}\biggr)\eqsp, 
\end{equation}
where 
\begin{equation}
    \label{eq:def_K_2}
    K_2 =  \frac{k_0^\gamma}{c_0}\norm{\theta_0-\thetas}^2 +\frac{16C_\xi^2}{\mu}(2d+1)
\end{equation}
Moreover, it holds 
for any $k\in\{1,\ldots n\}$ and any $p\geq 2$ that 
\begin{equation}
     \PE^{2/p}[\norm{\theta_k - \thetas}^{p}]\leq p\alpha_k (\rme )^{2/p}K_2/2\eqsp.
\end{equation}
\end{lemma}
\begin{proof}
    The proof is similar to the proof of \Cref{lem: high_prob_last_iter_boot} and \Cref{cor: last_iter_boot_p_moment}. 
\end{proof}

\section{Technical bounds}
\label{sec:append_technical}
We begin this section with useful technical bounds on sums of coefficients
\[
\sum_{i=m}^{k}\alpha_{i}^{p}\eqsp,
\]
where the step sizes $\alpha_i$ have a form 
\[
\alpha_i = \frac{c_0}{(k_0+i)^{\gamma}}\eqsp, \quad 1/2 < \gamma < 1\eqsp, \quad k_0 \geq 1\eqsp.
\]
We also bound other related quantities, which are instrumental to our further analysis. 

\begin{lemma}
\label{lem:bounds_on_sum_step_sizes}
    Assume \Cref{ass:step_size}. Then 
    \begin{enumerate}[(a)]
        \item \label{eq:sum_alpha_k_p} for any $p\geq 2$, it holds that
        \begin{equation}
        \sum_{i=1}^{k}\alpha_i^p \leq\frac{c_0^p}{p\gamma-1}\eqsp,
        \end{equation}
        \item \label{eq:simple_bound_sum_alpha_k}
        for any $m\in\{0, \ldots, k\}$, it holds that 
        \begin{equation}
        \sum_{i=m+1}^k\alpha_i \geq  \frac{c_0}{2(1-\gamma)}((k+k_0)^{1-\gamma}-(m+k_0)^{1-\gamma})\eqsp,
        \end{equation}
    \end{enumerate}
\end{lemma}
\begin{proof}
    To prove \ref{eq:sum_alpha_k_p}, observe that 
    \begin{equation}
        \sum_{i=1}^{k}\alpha_i^p \leq  c_{0}^p \int_{1}^{+\infty}\frac{\rmd x}{x^{p\gamma}} \leq \frac{c_0^p}{p\gamma-1}\eqsp,
    \end{equation}
    To prove \ref{eq:simple_bound_sum_alpha_k}, note that for any $i\geq 1$ and $k_0 \geq 1$, we have $2(i+k_0)^{-\gamma} \geq (i+k_0-1)^{-\gamma}$. Hence,
    \begin{equation}
        \sum_{i=m+1}^k\alpha_i \geq \frac{1}{2}\sum_{i=m}^{k-1}\alpha_i \geq \frac{c_0}{2}\int_{m+k_0}^{k+k_0}\frac{\rmd x}{x^{\gamma}} = 
        \frac{c_0}{2(1-\gamma)}((k+k_0)^{1-\gamma}-(m+k_0)^{1-\gamma})\eqsp.
    \end{equation}
\end{proof}

\begin{lemma}[Lemma~24 in \citep{durmus2021stability}]
\label{lem:summ_alpha_k}
Let $b > 0$ and $\{\alpha_k\}_{k \geq 0}$ be a non-increasing sequence such that $\alpha_0 \leq 1/b$. Then
\[
\sum_{j=0}^{k} \alpha_j \prod_{l=j+1}^{k} (1 - \alpha_l b) = \frac{1}{b} \left\{1  - \prod_{l=0}^{k} (1 - \alpha_l b) \right\}
\]
\end{lemma}
\begin{proof}
Proof of this statement is given in \citep{durmus2021stability}.
\end{proof}

\begin{lemma}
\label{lem:bound_sum_exponent}
    For any $A >0$, any $0 \leq i \leq n-1$,  and any $\gamma \in (1/2, 1)$ it holds
   \begin{equation}
        \sum_{j=i}^{n-1}\exp\biggl\{-A(j^{1-\gamma} - i^{1-\gamma})\biggr\} \leq
        \begin{cases}
            1 + \exp\bigl\{\frac{1}{1-\gamma}\bigr\}\frac{1}{A^{1/(1-\gamma)}(1-\gamma)}\Gamma(\frac{1}{1-\gamma})\eqsp, &\text{ if } Ai^{1-\gamma} \leq \frac{1}{1-\gamma} \text{ and } i \geq 1\eqsp;\\
            1 + \frac{1}{A(1-\gamma)^2}i^\gamma\eqsp,  &\text{ if } Ai^{1-\gamma} >\frac{1}{1-\gamma} \text{ and } i \geq 1\eqsp;\\
            1 + \frac{1}{A^{1/(1-\gamma)}(1-\gamma)}\Gamma(\frac{1}{1-\gamma})\eqsp, &\text{ if } i=0 \eqsp. 
        \end{cases}
    \end{equation}
\end{lemma}
\begin{proof}
Note that 
\begin{align}
\sum_{j=i}^{n-1}\exp\biggl\{-A(j^{1-\gamma} - i^{1-\gamma})\biggr\} 
&\leq 1 +\exp\biggl\{A i^{1-\gamma}\biggr\}\int_{i}^{+\infty}\exp\biggl\{-Ax^{1-\gamma} \biggr\}\rmd x \\
&= 1 + \exp\biggl\{A i^{1-\gamma}\biggr\}\frac{1}{A^{1/(1-\gamma)}(1-\gamma)}\int_{Ai^{1-\gamma}}^{+\infty}\rme^{-u} u^{\frac{1}{1-\gamma}-1}\rmd u
\end{align}
      Applying \citep[Theorem 4.4.3]{gabcke2015neue}, we get 
    \begin{equation}
    \int_{Ai^{1-\gamma}}^{+\infty}\rme^{-u} u^{\frac{1}{1-\gamma}-1} \rmd u\leq
        \begin{cases}
            \Gamma(\frac{1}{1-\gamma})\eqsp, &\text{ if } Ai^{1-\gamma} < \frac{1}{1-\gamma};\\
            \frac{1}{1-\gamma}\exp\{-Ai^{1-\gamma}\} A^{\gamma/(1-\gamma)}i^\gamma\eqsp, &\text{ otherwise.}
        \end{cases}
    \end{equation}
    Combining inequities above, for $i \geq 1$ we obtain 
    \begin{equation}
        \sum_{j=i}^{n-1}\exp\biggl\{-A(j^{1-\gamma} - i^{1-\gamma})\biggr\} \leq
        \begin{cases}
            1 + \exp\bigl\{\frac{1}{1-\gamma}\bigr\}\frac{1}{A^{1/(1-\gamma)}(1-\gamma)}\Gamma(\frac{1}{1-\gamma})\eqsp, &\text{ if } Ai^{1-\gamma} < \frac{1}{1-\gamma};\\
            1 + \frac{1}{A(1-\gamma)^2}i^\gamma\eqsp, &\text{ otherwise.}
        \end{cases}
        \eqsp,
    \end{equation}
    and for $i=0$, we have
    \begin{equation}
        \sum_{j=0}^{n-1}\exp\biggl\{-A j^{1-\gamma} \biggr\} \leq 1 + \frac{1}{A^{1/(1-\gamma)}(1-\gamma)}\Gamma\biggl(\frac{1}{1-\gamma}\biggr)\eqsp.
    \end{equation}
\end{proof}

\subsection{Proof of \Cref{lem:bound_Q_i_and_Sigma_n}}
\label{subsec:proof_bound_Q_i_Sigma_n}
\textbf{Version of \Cref{lem:bound_Q_i_and_Sigma_n} with explicit constants.}
\textit{
Assume \Cref{ass:L-smooth} and \Cref{ass:step_size}. Then for any $i \in \{0, \ldots, n-1\}$ it holds that
\begin{equation}
\lambda_{\max}(Q_i) \leq C_Q\eqsp,
\end{equation}
where the constant $C_Q$ is given by \begin{equation}
\label{eq:const_C_Q}
    C_Q =\biggl[1+\max\biggl(\exp\biggl\{\frac{1}{1-\gamma}\biggr\}\biggl(\frac{2(1-\gamma)}{\mu c_0}\biggr)^{1/(1-\gamma)}\frac{1}{1-\gamma}\Gamma(\frac{1}{1-\gamma}), \frac{2}{\mu c_0(1-\gamma)}\biggr)\biggr]c_0\eqsp.
\end{equation}
Moreover,
\begin{equation}
\label{eq:Q_i_lower_bound}
\lambda_{\min}(Q_i) \geq C_Q^{\min}\eqsp, \text{ and } \norm{\Sigma_n^{-1/2}} \leq C_{\Sigma}\eqsp,
\end{equation}
where the matrix $\Sigma_n$ is defined in \eqref{eq:sigma_n_def}, and 
\begin{equation}
\label{def:C_Q_min}
    C_Q^{\min} = \frac{1}{L_1}(1-(1-\alpha_i L_1)^{n-i})\eqsp,
\end{equation}
\begin{equation}
\label{eq:def_C_Sigma}
C_{\Sigma} = \frac{\sqrt{2}L_1}{(1-\exp\{-\frac{\mu c_0L_1}{2(k_0+1)}\})\sqrt{\lambda_{\min}(\Sigma_{\xi})}}\eqsp.
\end{equation}
}
\begin{proof}
Note that using \Cref{lem:bounds_on_sum_step_sizes}\ref{eq:simple_bound_sum_alpha_k}, for $i \geq 0$, it holds that 
\begin{align}
\lambda_{\max}(Q_i) 
&\leq \alpha_i\sum_{j=i}^{n-1}\prod_{k=i+1}^{j}(1-\alpha_k \mu) \leq \alpha_i\sum_{j=i}^{n-1}\exp\biggl\{-\mu\sum_{k=i+1}^j\alpha_k\biggr\} \\
&\leq \alpha_i\sum_{j=i+k_0}^{n-1+k_0}\exp\biggl\{-\frac{\mu c_0}{2(1-\gamma)}(j^{1-\gamma}-(i+k_0)^{1-\gamma})\biggr\}\eqsp.
\end{align} 
Using \Cref{lem:bound_sum_exponent}, we complete the first part with the constant $C_Q$ defined in \eqref{eq:const_C_Q}. In order to prove \eqref{eq:Q_i_lower_bound}, we note that 
\begin{equation}
     \lambda_{\min}(Q_i) \geq \alpha_i\sum_{j=i}^{n-1}(1-\alpha_i L_1)^{j-i} =\frac{1}{L_1}(1-(1-\alpha_i L_1)^{n-i})\eqsp.
\end{equation}
Then for $i \leq n/2$, we have 
\begin{equation}
    \lambda_{\min}(Q_i) \geq \frac{1}{L_1}(1-(1-\alpha_i L_1)^{n/2})\geq \frac{1}{L_1}(1-\exp\{-\mu\alpha_i L_1 n/2\}) \geq \frac{1}{L_1}(1-\exp\{-\frac{\mu c_0L_1}{2(k_0+1)}\})\eqsp,
\end{equation}
where the last inequality holds, since $\alpha_i n \geq \alpha_n n \geq \frac{c_0 n}{k_0 + n} \geq \frac{c_0}{1+k_0}$ .
Combining previous inequalities, we get 
\begin{equation}
\lambda_{\min}(\Sigma_n) \geq \lambda_{\min}\bigl(n^{-1} \sum_{i=1}^{n/2}Q_i\Sigma_{\xi}Q_i^\top \bigr) \geq \frac{\lambda_{\min}(\Sigma_{\xi})}{2 L_1^2}(1-\exp\{-\frac{\mu c_0L_1}{2(k_0+1)}\})^2\eqsp,
\end{equation}
and \eqref{eq:Q_i_lower_bound} follows. 
\end{proof}

\begin{lemma}
\label{lem:H_theta_bound}
    Assume \Cref{ass:L-smooth} and \Cref{ass:hessian_Lipschitz_ball}. Then
    \begin{equation}
        \norm{H(\theta)}\leq L_{H} \norm{\theta-\thetas}^2,
    \end{equation}
    where $L_{H}= \max(L_3,2L_1/\beta)$.
\end{lemma}
\begin{proof}
    Using \Cref{ass:hessian_Lipschitz_ball} and the definition of $H(\theta)$ in \eqref{eq:H_theta_def}, we get
    \begin{equation}
        \| H(\theta) \| \mathds{1}(\norm{\theta-\thetas}\leq \beta)\leq L_3\norm{\theta-\thetas}^2.
    \end{equation}
   Since $\mu \Id \preccurlyeq \nabla^2f(\theta) \preccurlyeq L_1 \Id$, we also obtain  
    \begin{equation}
        \norm{H(\theta)} \mathds{1}(\norm{\theta-\thetas} > \beta) \leq 2L_1\mathds{1}(\norm{\theta-\thetas} > \beta)\norm{\theta-\thetas} \leq \frac{2L_1}{\beta}\norm{\theta-\thetas}^2.
    \end{equation}
This concludes the proof.
\end{proof}

\subsection{Properties of sub-Gaussian random vectors}
In this section we derive some auxiliary results on sub-Gaussian random vectors. Following \citep{vershynin:2018} and \citep{jin2019short}, we rely on the following definition.

\begin{definition}
A random vector $X \in \rset^{d}$ with $\PE[X] = 0$ is called sub-Gaussian with variance proxy $\sigma^2 > 0$, if for any vector $v \in \rset^{d}$,
\[
\PE[\exp\{\langle v, X\rangle \}] \leq \exp\{\norm{v}^2\sigma^2/2\}\eqsp.
\]
In this case, we write $X \in \SG(\sigma^2)$.
\end{definition}

\begin{lemma}
\label{lem:sub_gauss_prop}
Let $X \in \SG(\sigma^2)$ be a $d$-dimensional sub-Gaussian vector. Then for any  $\lambda \in [0, 1/C_X]$,
\begin{equation}
\label{eq:subgaus_equiv}
\PE[\exp\{\lambda \norm{X}^2\}] \leq \exp\{\lambda C_X\}\eqsp,
\end{equation}
where $C_X = 4 d \sigma^2$.
\end{lemma}
\begin{proof}
    Let $Y \sim \mathcal{N}(0, \Id_d)$ be a random vector independent of $X$. Then for fixed $x \in \rset^{d}$, we have 
    \begin{equation}
        \exp\{\lambda \norm{x}^2\} = \PE[\exp\{\langle x, \sqrt{2\lambda }Y\rangle\}]\eqsp.
    \end{equation}
    Hence, we have for $\lambda \in[0, 1/(2\sigma^2))$:
    \begin{equation}
    \PE[\exp\{\lambda \norm{X}^2\}] = \PE[\exp\{\langle X, \sqrt{2\lambda}Y\rangle\}] \leq \PE[\exp\{\lambda \sigma^2\norm{Y}^2\}] \leq (1-2\lambda \sigma^2)^{-d/2}\eqsp.
    \end{equation}
    Note that
    \begin{equation}
        -1/2\log (1-a) \leq a \text{ for } a \in[0, 1/2] \eqsp.
    \end{equation}
    Then, we have for $\lambda \in [0, 1/(4\sigma^2)]$:
    \begin{equation}
        \PE[\exp\{\lambda \norm{X}^2\}] \leq \exp\{2d\lambda \sigma^2\}\eqsp.
    \end{equation}
    Hence, \eqref{eq:subgaus_equiv} holds with $C_X = 4 d \sigma^2$.
\end{proof}

\subsection{Gaussian comparison lemma}
There are quite a few works devoted to the comparison of Gaussian measures with different covariance matrices and means. Among others, we note \citep{BarUly86}, \citep{Bernolli2019}, and \citep{Devroye2018}. We use the result from \citep[Theorem 1.1]{Devroye2018}, which provides a comparison in terms of the total variation distance. Recall that the total variation distance between probability measures $\mu$ and $\nu$ on a measurable space $(\Xset,\Xsigma)$ is defined as
\[
\mathsf{d}_{\mathsf{TV}}(\mu, \nu) = \sup_{B \in \Xsigma} |\mu(B) - \nu(B)|\eqsp.
\]
With a slight abuse of notation, when $X$ and $Y$ are random vectors with distributions $\mu$ and $\nu$, respectively, we write $\mathsf{d}_{\mathsf{TV}}(X,Y)$ instead of $\mathsf{d}_{\mathsf{TV}}(\mu,\nu)$. The following lemma holds:

\begin{lemma}
\label{Pinsker}
Let $\Sigma_1$ and $\Sigma_2$ be positive definite covariance matrices in $\rset^{d \times d}$. Let $X \sim \mathcal{N}(0, \Sigma_1)$ and $Y \sim \mathcal{N}(0, \Sigma_2)$. Then
\begin{equation}
\mathsf{d}_{\mathsf{TV}}(X, Y) \le \frac{3}{2} \|\Sigma_2^{-1/2} \Sigma_1 \Sigma_2^{-1/2} - \Id_d \|_{\mathsf{F}} \eqsp.
\end{equation}
\end{lemma}
Recall that our primary aim in this paper is to obtain convergence bounds in the convex distance
\[
\kolmogorov(X, Y) = \sup_{B \in \Conv(\rset^{d})}\left|\P\bigl(X \in B\bigr) - \P(Y \in B)\right|\eqsp,
\]
where $\Conv(\rset^{d})$ is a collection of convex sets on $\rset^{d}$. We can immediately obtain the result for convex distance from \Cref{Pinsker}, since
\[
\kolmogorov(X, Y) \leq \mathsf{d}_{\mathsf{TV}}(X, Y)\eqsp. 
\]

\section{Numerical Experiments}
\label{sec:experiments}
To evaluate the behavior of the multiplier bootstrap procedure for constructing confidence sets, we conducted a series of numerical experiments on synthetic linear and logistic regression problems. The overall experimental procedure is identical for both models, and we highlight only the model-specific differences below. Our experimental pipeline follows the main steps outlined in \citep{JMLR:v19:17-370}. Our experiments were conducted  on a single Intel Xeon Gold 6248R CPU (48 cores, 3.0–4.0 GHz), 768 GB RAM, and 240 GB SSD storage, without GPU accelerators. Code to reproduce the experiments is provided in \url{https://github.com/marina-shesha/Gaussian-Approximation-and-Multiplier-Bootstrap-for-Stochastic-Gradient-Descent}.

\subsection{Experimental Setup}

For each of 1024 trajectories, we generate a dataset $(X, y)$ with sample sizes  
$N \in \{10000, 20000, 30000, 40000, 50000\}$ and $d = 5$ features.  
For both models, the true parameter vector is fixed as  
\[
\theta_{\text{true}} = [1.0, 1.0, -0.1, -0.1, -0.1].
\]

For each trajectory we run the SGD algorithm with step sizes
\begin{equation}
\alpha_n = \frac{200}{(20000 + n)^{0.85}}\eqsp,
\end{equation}
and compute the Polyak--Ruppert averaged estimator $\bar{\theta}_n$.

To assess coverage probabilities, we employ a multiplier bootstrap procedure. For each trajectory, we generate $N_{\text{boot}} = 256$ bootstrap trajectories. In each bootstrap run, the step sizes $\alpha_n$ are multiplicatively perturbed by independent random variables drawn from a Beta distribution:
\begin{equation}
\tilde{\alpha}_n = \alpha_n\left(1 + \frac{w_n - \PE[w_n]}{\sqrt{\var{w_n}}}\right)\eqsp, \quad \text{ where } w_n \sim \text{Beta}(0.5, 2)\eqsp.
\end{equation}

For each trajectory length $N$, we construct confidence intervals for the one–dimensional functional given by a random projection of the parameter target vector. Specifically, for each trajectory we draw a unit vector $u \in \mathbb{S}^{d-1}$ (fixed with all trajectories and its bootstrap replicates) and form confidence intervals for the scalar target parameter. The coverage probabilities for this scalar target parameter are then estimated using three approaches:

\begin{enumerate}
    \item \textbf{Empirical Quantiles:} We compute the empirical quantiles of the $N_{\text{boot}}$ bootstrap replicates and check whether the target parameter belongs to the determined interval.
    
    \item \textbf{Standard Deviation--Based Confidence Intervals:} We construct confidence intervals based on the sample standard deviation of the bootstrap replicates and verify whether target parameter belongs to the determined interval.
    
    \item \textbf{Overlapping Batch Mean (OBM) Estimator \citep{meketon1984overlapping,flegal2010batch}:} For the trajectory with $N = 50000$, we estimate the asymptotic variance as follows:
    \begin{equation}
        \label{eq:OBM_estimator}
        \hat{\sigma}^2_\theta(u) = 
        \frac{b_n}{n - b_n + 1} 
        \sum_{t = 0}^{n - b_n} 
        \left( (\bar{\theta}_{b_n,t} - \bar{\theta}_n)^\top u \right)^2,
        \qquad 
        \bar{\theta}_{b_n,t} = \frac{1}{b_n}\sum_{\ell=t}^{t+b_n-1} \theta_\ell.
    \end{equation}
    Using this estimated asymptotic variance, we construct confidence intervals and check whether the target parameter lies within them. The procedure is repeated for several values of the batch size $b_n$, and we select the one that provides the best coverage performance for the $0.95$ confidence interval.
\end{enumerate}

The coverage probabilities are obtained by averaging the indicator of interval inclusion over all 1024 trajectories.

\subsection{Linear Regression}

The feature vectors $X_i \in \mathbb{R}^p$ are sampled uniformly from $[-1, 1]$, and the response is generated according to
\[
Y_i = X_i^\top \theta_{\text{true}} + \epsilon, 
\qquad 
\epsilon \sim \mathcal{N}(0, 0.02^2).
\]
Then the target parameter is equal to $\theta_{\text{true}}$.
The results are summarized in \Cref{table:linreg}.

\begin{table}[h!]
\centering
\caption{Comparison of Different Estimation Methods for Linear Regression Problem}
\label{table:linreg}

\begin{subtable}[t]{\textwidth}
\centering
\caption{Empirical Quantiles}
\begin{tabular}{c|ccc}
\hline
Trajectory length & 0.95 & 0.90 & 0.80 \\
\hline
10000 & 0.964844 & 0.936523 & 0.855469 \\
20000 & 0.958008 & 0.919922 & 0.852539 \\
30000 & 0.961914 & 0.927734 & 0.839844 \\
40000 & 0.963867 & 0.924805 & 0.834961 \\
50000 & 0.957031 & 0.920898 & 0.829102 \\
\hline
\end{tabular}
\end{subtable}

\vspace{1em}

\begin{subtable}[t]{\textwidth}
\centering
\caption{Standard Deviation-Based Confidence Intervals}
\begin{tabular}{c|ccc}
\hline
Trajectory length & 0.95 & 0.90 & 0.80 \\
\hline
10000 & 0.967773 & 0.943359 & 0.859375 \\
20000 & 0.963867 & 0.927734 & 0.852539 \\
30000 & 0.964844 & 0.932617 & 0.844727 \\
40000 & 0.967773 & 0.934570 & 0.837891 \\
50000 & 0.959961 & 0.924805 & 0.825195 \\
\hline
\end{tabular}
\end{subtable}

\vspace{1em}

\begin{subtable}[t]{\textwidth}
\centering
\caption{Overlapping Batch Mean Estimator}
\begin{tabular}{c|ccc}
\hline
Batch size ($b_n$), Trajectory length & 0.95 & 0.90 & 0.80 \\
\hline
1700, 50000 & 0.890625 & 0.818359 & 0.712891 \\
\hline
\end{tabular}
\end{subtable}

\end{table}

\subsection{Logistic Regression}

Here, $X_i \in \mathbb{R}^p$ are sampled uniformly from $[-1, 3]$, and responses follow the distribution
\[
\PP(Y_i = 1) = \sigma(X_i^\top \theta_{\text{true}})\eqsp, \quad \PP(Y_i = -1) = 1 - \PP(Y_i = 1)\eqsp.
\]
where $\sigma(\cdot)$ denotes the sigmoid function.  
We minimize the $L_2$-regularized logistic loss with regularization parameter $\lambda = 10^{-4}$.  
\begin{equation}
    \label{eq:l2_logloss}
    \theta_{\text{star}} = \underset{\theta\in\rset^d}{\arg\min}\biggl \{\PE\biggl[-\log\biggl( \frac{1}{1 + \exp\{-YX^\top\theta\}}\biggr)\biggr] + \lambda\norm{\theta}^2\biggr\}
\end{equation}
Since there is no closed-form solution, we estimate the target parameter $\theta_{\text{star}}$ by running one long SGD trajectory of length $10^6$ and computing Polyak-Ruppert averaged estimator along this trajectory.
The results are summarized in \Cref{table:logreg}.

\begin{table}[h!]
\centering
\caption{Comparison of Different Estimation Methods for Logistic Regression Problem}
\label{table:logreg}

\begin{subtable}[t]{\textwidth}
\centering
\caption{Empirical Quantiles}
\begin{tabular}{c|ccc}
\hline
Trajectory length & 0.95 & 0.90 & 0.80 \\
\hline
10000 & 0.947266 & 0.901367 & 0.806641 \\
20000 & 0.940430 & 0.909180 & 0.800781 \\
30000 & 0.943359 & 0.903320 & 0.798828 \\
40000 & 0.943359 & 0.894531 & 0.814453 \\
50000 & 0.945312 & 0.888672 & 0.785156 \\
\hline
\end{tabular}
\end{subtable}

\vspace{1em}

\begin{subtable}[t]{\textwidth}
\centering
\caption{Standard Deviation-Based Confidence Intervals}
\begin{tabular}{c|ccc}
\hline
Trajectory length & 0.95 & 0.90 & 0.80 \\
\hline
10000 & 0.961914 & 0.915039 & 0.833984 \\
20000 & 0.955078 & 0.921875 & 0.826172 \\
30000 & 0.958984 & 0.916016 & 0.822266 \\
40000 & 0.952148 & 0.906250 & 0.825195 \\
50000 & 0.955078 & 0.913086 & 0.815430 \\
\hline
\end{tabular}
\end{subtable}

\vspace{1em}

\begin{subtable}[t]{\textwidth}
\centering
\caption{Overlapping Batch Mean Estimator}
\begin{tabular}{c|ccc}
\hline
Batch size ($b_n$), Trajectory length & 0.95 & 0.90 & 0.80 \\
\hline
2500, 50000 & 0.920898 & 0.856445 & 0.743164 \\
\hline
\end{tabular}
\end{subtable}
\end{table}
\subsection{Discussion}
Both empirical quantiles and confidence intervals based on standard deviation are variants of the bootstrap procedure, and in our experiments they provide coverage very close to nominal levels across the entire trajectory length. In contrast, the Overlapping Batch Mean estimator systematically underestimates the coverage even when the trajectory length is rather large ($N = 50000$.). Moreover, the procedure itself is very sensitive to the choice of batch size $b_n$, which moderate heuristics for this problem available \citep{flegal2010batch}.

\end{document}